\def\eqref#1{equation~\ref{#1}}
\def\1{\bm{1}}
\def\va{{\bm{a}}}
\def\vh{{\bm{h}}}
\def\vp{{\bm{p}}}
\def\vs{{\bm{s}}}
\def\vt{{\bm{t}}}
\def\vw{{\bm{w}}}
\def\vx{{\bm{x}}}
\def\vy{{\bm{y}}}
\def\vz{{\bm{z}}}
\def\mM{{\bm{M}}}
\DeclareMathAlphabet{\mathsfit}{\encodingdefault}{\sfdefault}{m}{sl}
\SetMathAlphabet{\mathsfit}{bold}{\encodingdefault}{\sfdefault}{bx}{n}
\def\gG{{\mathcal{G}}}
\def\sA{{\mathbb{A}}}
\def\sG{{\mathbb{G}}}
\def\sP{{\mathbb{P}}}
\def\sS{{\mathbb{S}}}
\def\sT{{\mathbb{T}}}
\def\sV{{\mathbb{V}}}
\def\sX{{\mathbb{X}}}
\def\sY{{\mathbb{Y}}}
\def\sZ{{\mathbb{Z}}}
\title{Differentiable Causal Discovery for Latent Hierarchical Causal Models}
\author{Parjanya Prajakta Prashant$^{1}\thanks{Corresponding author: \texttt{pprashant@ucsd.edu}}$\text{ },\text{ }   Ignavier Ng$^{2},$\text{ }  Kun Zhang$^{2,3}$,\text{ }  Biwei Huang$^{1}$ 
   \\
   $^{1}$University of California San Diego, USA\\
   $^{2}$Carnegie Mellon University, USA \\
   $^{3}$Mohamed bin Zayed University of Artificial Intelligence, UAE
}
\newtheorem{theorem}{Theorem}
\newtheorem{definition}{Definition}
\newtheorem{proposition}{Proposition}
\newtheorem{condition}{Condition}
\newtheorem{lemma}{Lemma}
\begin{document}

\maketitle

\begin{abstract}
Discovering causal structures with latent variables from observational data is a fundamental challenge in causal discovery. Existing methods often rely on constraint-based, iterative discrete searches, limiting their scalability to large numbers of variables. Moreover, these methods frequently assume linearity or invertibility, restricting their applicability to real-world scenarios. We present new theoretical results on the identifiability of nonlinear latent hierarchical causal models, relaxing previous assumptions in literature about the deterministic nature of latent variables and exogenous noise. Building on these insights, we develop a novel differentiable causal discovery algorithm that efficiently estimates the structure of such models. To the best of our knowledge, this is the first work to propose a differentiable causal discovery method for nonlinear latent hierarchical models. Our approach outperforms existing methods in both accuracy and scalability. We demonstrate its practical utility by learning interpretable hierarchical latent structures from high-dimensional image data and demonstrate its effectiveness on downstream tasks.

\end{abstract}
\section{Introduction}
Causal discovery, the task of inferring causal relationships from observational data, is fundamental to understanding complex systems across various scientific disciplines. Traditional causal discovery methods often assume the absence of latent confounders, a property known as causal sufficiency \citep{spirtes2001causation, chickering2002optimal}. However, this condition is frequently violated in real-world scenarios, where unobserved variables can introduce spurious correlations among observed variables. For instance, in image analysis, latent semantic variables often act as common causes for multiple pixels, creating complex dependencies that are not directly observable.

Recognizing the limitations of causal sufficiency conditions, researchers have developed various approaches to handle latent confounders. Fast Causal Inference (FCI) and its extensions \citep{spirtes2001causation, pearl2000models, zhang2008completeness, colombo2012learning, claassen2013learning, akbari2021recursive} leverage conditional independence information to infer a class of possible causal graphs. While these methods can identify the presence of latent confounders, they do not provide information about causal relationships among the latent variables themselves.

Therefore, another line of research has emerged, focusing on methods that can identify causal relations between latent variables. These approaches typically introduce additional parametric conditions, such as linearity or discrete data, to make the problem tractable. Notable examples include methods based on rank constraints \citep{silva2006learning, kummerfeld2016causal, huang2022latent, xie2022identification, dong2023versatile}, higher-order moments \citep{shimizu2009estimation, xie2020generalized, adams2021identification, chen2022identification}, copula models \citep{cui2018learning}, multiple domains based methods \citep{zeng2021causal, li2023causal, sturma2024unpaired}, matrix-decomposition \citep{anandkumar2013learning} and mixture models \citep{kivva2021learning}. While these methods have shown promise in specific settings, they often rely on strong conditions about the underlying causal structure or data distribution. 

Recently, \citet{kong2023identification} proposed a method for identifying the causal structure of non-linear latent hierarchical models. {\color{black} Non-linear latent hierarchical models are found across various domains, including gene regulatory networks \citep{gitter2016unsupervised}, image data \citep{higgins2017scan}, political science \citep{weinstein2024hierarchical}, and epidemiology \citep{o2019causes}.} However, \citet{kong2023identification} assume that latent variables and exogenous noise are deterministic functions of measured variables, limiting the applicability of their identifiability results. In this paper, we establish the identifiability of non-linear latent hierarchical models without this condition. To the best of our knowledge, we are the first to prove identifiability for non-linear hierarchical latent models under general conditions.

Moreover, these methods often employ an iterative procedure, learning local graph structures sequentially. Such iterative approaches often face challenges including scalability issues \citep{chickering2004large, niu2024comprehensive}, error propagation, and sensitivity to testing order \citep{spirtes2010introduction, colombo2012learning}. To address these empirical issues, researchers have proposed differentiable causal discovery methods \citep{zheng2018dags, yu2019dag, zhang2019d, zheng2020learning, sethuraman2023nodags, nazaret2023stable}. Unlike iterative methods or discrete search-based approaches, these methods formulate the combinatorial search problem of causal discovery as a continuous optimization algorithm, incorporating differentiable algebraic constraints to enforce structural requirements, such as acyclicity. However, these methods typically assume no latent variables. Notable exceptions are the recent work by \citet{bhattacharya2021differentiable} and \citet{ma2024scalable} that extended these methods to include latent variables, which assume linearity and do not recover the causal relations between latent variables.
To tackle these limitations, we propose a scalable differentiable causal discovery method for non-linear latent hierarchical models.

Our key contributions are as follows:
\begin{enumerate}
    \item We establish theoretical guarantees for the identifiability of non-linear latent hierarchical models under considerably relaxed conditions. Notably, we eliminate the requirement for latent variables and exogenous noise to be deterministic functions of measured variables.
    
    \item We introduce a novel differentiable causal discovery method for latent hierarchical models. Through comprehensive experiments on synthetic datasets, we demonstrate our method's improved performance and scalability compared to existing approaches for latent hierarchical models. We also showcase our method's real-world applicability by learning a hierarchical latent model for high-dimensional image data.
\end{enumerate}

\section{Related Work}
\paragraph{Causal Discovery for Latent Hierarchical Models:} 
Prior work has focused extensively on the linear case \citep{silva2006learning, anandkumar2013learning, kummerfeld2016causal, xie2020generalized, huang2022latent, xie2022identification, dong2023versatile}. \citet{silva2006learning} and \citet{kummerfeld2016causal} utilize tetrad conditions---the rank of each $2\times 2$ sub-covariance matrix---to discover latent variables. However, they require further structural conditions, such as trees and three measured pure children for each latent variable. \citet{anandkumar2013learning} employ matrix factorization to identify latent variables based on decomposing the covariance matrix. \citet{xie2020generalized,xie2022identification} extend the independent noise condition to latent models with non-Gaussian noise.

\citet{huang2022latent} use rank deficiency constraints on pairs of measured variable sets to identify the minimum number of latent variables that d-separate the two sets. \citet{dong2023versatile} extend this framework to allow measured variables to be parents of other variables as well. Both these methods use an iterative search procedure with rank deficiency test to discover the latent graph, with a time complexity at least quadratic in the number of variables.

\citet{kong2023identification} introduce identifiability results for the non-linear case when the latent variables and exogenous noise are a differentiable invertible function of the measured variables (\(z, \epsilon = f(x)\)). Similar to other methods, they rely on an iterative search procedure that requires training at least \(\mathcal{O}(l n^2)\) generative models, where \(n\) is the number of measured variables and \(l\) is the number of layers in the model.

Another line of work focuses on learning hierarchical structures for discrete latent variables \citep{pearl1988probabilistic, choi2011learning, gu2023bayesian}. However, these approaches assume the measured variables are discrete, which often does not hold in many real-world scenarios, such as images. \citet{kong2024learning} allows continuous variables to be adjacent to latent discrete variables. However, causal relationships and the hierarchical structure are only learned for discrete variables. Moreover, latent variables are assumed to be a deterministic invertible function of the measured variables, similar to \citet{kong2023identification}.

\paragraph{Differentiable Causal Discovery:} NOTEARS introduced a continuous optimization-based algorithm to learn linear directed acyclic graphs (DAG) by reformulating graphical constraints into differentiable ones \cite{zheng2018dags}. Subsequent work parameterized DAGs using neural networks \cite{yu2019dag, zhang2019d,ng2022masked}. \cite{zheng2020learning} extended these approaches to non-parametric DAGs. While these approaches scale well, they usually assume the absence of any latent variables in the DAG. 

To address the existence of latent variables, recent differentiable causal discovery algorithms have been proposed \cite{bhattacharya2021differentiable, bellot2021deconfounded}. However, these approaches only focus on the causal relationships among observed variables, failing to capture causal relationships involving latent variables, and often assume linearity in the causal structure.  \cite{ma2024scalable} builds on similar assumptions but employs a supervised learning framework for causal discovery. {\color{black} Despite these advances, there are ongoing concerns regarding the evaluation of differentiable causal discovery methods. Critics argue that some of these methods may not truly perform causal discovery, as highlighted by \citet{reisach2021beware, seng2024learning}. To address these issues, \citet{ng2024structure} advocates for improved evaluation practices, emphasizing the need for rigorous and reliable assessment criteria.}

\paragraph{Causal Representation Learning:} {\color{black} Causal representation learning is closely related to latent casual discovery.} Non-linear Independent Component Analysis methods aim to recover independent latent sources from a set of measured variables. However, they do not consider generic dependence between latent variables and rely on additional assumptions such as existence of auxiliary variables \citep{hyvarinen2019nonlinear} or sparsity \citep{zheng2022identifiability}. {\color{black} Some methods attempt to model dependencies among latent variables explicitly. For example, \citet{he2018variational} propose using a graphical structure to represent these dependencies, though their approach lacks identifiability guarantees. \citet{yang2021causalvae} assume linear relations and incorporate additional concept data to establish the identifiability. \citet{brehmer2022weakly, subramanian2022learning} utilize interventional data to learn these dependencies.}

\section{Non-linear Latent Hierarchical Causal Models}

\begin{figure}[htbp]
\centering
\begin{tikzpicture}[scale=0.9, ->,>=stealth,auto,node distance=2cm,
                    thick,main node/.style={circle,draw,fill=gray!30,font=\sffamily\small\bfseries}, minimum size=0.4cm]
  \node[main node] (Z1) at (0,0) {$z_1$};
  \node[main node] (Z2) at (-3,-1) {$z_2$};
  \node[main node] (Z3) at (3,-1) {$z_3$};
  \node[main node] (Z4) at (-4,-2)  {$z_4$};
  \node[main node] (Z5) at (-2,-2)  {$z_5$};
  \node[main node] (Z6) at (0,-2)  {$z_6$};
  \node[main node] (Z7) at (2,-2)  {$z_7$};
  \node[main node] (Z8) at (4,-2)  {$z_8$};
  \node[main node] (Z9) at (7,-2)  {$z_8$};
    \node[main node] (X1) at (-4.5,-3.5) [fill=none] {$x_1$};
  \node[main node] (X2) at (-3.5,-3.5) [fill=none] {$x_2$};
  \node[main node] (X3) at (-2.5,-3.5) [fill=none] {$x_3$};
  \node[main node] (X4) at (-1.5,-3.5) [fill=none] {$x_4$};
  \node[main node] (X5) at (-0.5,-3.5) [fill=none] {$x_5$};
      \node[main node] (X6) at (0.5,-3.5) [fill=none] {$x_6$};
  \node[main node] (X7) at (1.5,-3.5) [fill=none] {$x_7$};
  \node[main node] (X8) at (2.5,-3.5) [fill=none] {$x_8$};
  \node[main node] (X9) at (3.5,-3.5) [fill=none] {$x_9$};
  \node[main node, text width=0.33cm, align=center] (X10) at (4.5,-3.5) [fill=none] {$x_{10}$};
  \node[main node, text width=0.33cm, align=center] (X11) at (5.5,-3.5) [fill=none] {$x_{11}$};
\node[main node, text width=0.33cm, align=center] (X12) at (6.5,-3.5) [fill=none] {$x_{12}$};
\node[main node, text width=0.33cm, align=center] (X13) at (7.5,-3.5) [fill=none] {$x_{13}$};

  \path[every node/.style={font=\sffamily\tiny}]
    (Z1) edge node {} (Z2)
         edge node {} (Z3)
    (Z2) edge node {} (Z4)
         edge node {} (Z5)
         edge node {} (Z6)
    (Z3) edge node {} (Z6)
         edge node {} (Z7)
         edge node {} (Z8)
    (Z4) edge node {} (X1)
         edge node {} (X2)
    (Z5) edge node {} (X3)
         edge node {} (X4)
    (Z6) edge node {} (X5)
         edge node {} (X6)
    (Z7) edge node {} (X7)
         edge node {} (X8)
    (Z8) edge node {} (X9)
         edge node {} (X10)
         edge node {} (X11)
    (Z9) edge node {} (X12)
         edge node {} (X13)
         edge node {} (X11)
         ;
\end{tikzpicture}
\caption{Example of a graph we consider. Note that we allow multiple paths between two nodes and hence generalize trees. The latent variables are shaded.}
\label{fig:examples}
\end{figure}
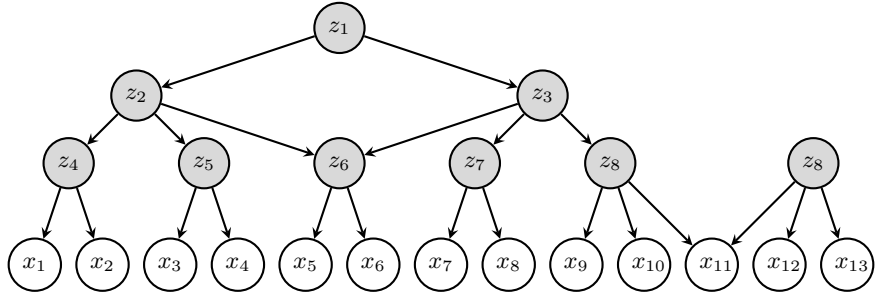

We consider a latent hierarchical causal model represented by a directed acyclic graph (DAG) \(\mathcal{G} = (\mathcal{V}, \mathcal{E})\), where \(\mathcal{V} = \sZ \cup \sX\) comprises latent variables \(\sZ = \{z_1, z_2, \dots, z_{n_z}\}\) and measured variables \(\sX = \{x_1, x_2, \dots, x_{n_x}\}\), and \(\mathcal{E}\) denotes the set of edges representing causal relationships. The variables follow the data-generating procedure:
\begin{equation}
\begin{aligned}
z_j = f_{z_j}(\text{Pa}(z_j), \varepsilon_{z_j}),\\
x_i = f_{x_i}(\text{Pa}(x_i), \varepsilon_{x_i})
\end{aligned}
\label{eq:sem}
\end{equation}
where \(\text{Pa}(\cdot)\) represents the set of parent variables of a given node in \(\mathcal{G}\), and $\text{Pa}(x_i), \text{Pa}(z_j) \subseteq \sZ$.

The structure of DAG \(\mathcal{G}\) can be characterized by binary matrices \(\mM^z\) and \(\mM^x\), where \(\mM^z_{ij} = 1\) if and only if there is an edge \(z_i \rightarrow z_j\), and \(\mM^x_{ij} = 1\) if and only if there is an edge \(z_i \rightarrow x_j\). Without loss of generality, we assume \(\mM^z\) is upper triangular. The binary adjacency matrix \(\mM\) is obtained by horizontally concatenating \(\mM^z\) and \(\mM^x\), i.e., \(\mM = [\mM^z \mid \mM^x]\).

The goal of this work is to recover the binary matrix \(\mM\) which characterizes the structure  of DAG \(\mathcal{G}\). Since the labeling of latent variables in general cannot be identified, we aim to recover \(\mM\) upto the relabeling of the latent variables.

In general, latent hierarchical models are not identifiable. For example, consider the model shown in Figure~\ref{fig:nopurechildren}. It is in general not possible to disentangle the effects of \(z_1, z_2, z_3\) and identify the structure or their values. Hence, we require structural conditions on the model to make it identifiable. Prior work has addressed this challenge through various constraints. \citet{silva2006learning, kummerfeld2016causal, huang2022latent, kong2023identification} proposed requirements on the minimum number of measured pure children, allowing latent variables to leave sufficient footprints in the measured variables. Additionally, researchers have often assumed tree-like structures to prove identifiability of hierarchical models \citep{choi2011learning, drton2017marginal, huang2022latent, kong2023identification}. 

In order for our model to be identifiable, we consider the following structural conditions:

\begin{definition}[Pure Children]
$v_i$ is a pure child of another variable $v_j$, if $v_j$ is the only parent of $v_i$ in the graph, i.e., $\text{Pa}(v_i) = \{v_j\}$.
\end{definition}

\begin{condition}
(i) Each latent variable has two at least pure children. (ii) For any latent variable $z_i \in \sZ$, let $\mathcal{D}_i = \text{De}(z_i) \cap \sX$ be the set of measured descendants of $z_i$ where De(.) denotes the descendants. Then, for all $x_j, x_k \in \mathcal{D}_i$, \( d(z_i, x_j) = d(z_i, x_k) \) where $d(\cdot, \cdot)$ denotes the length of the directed path between two nodes in the graph $\mathcal{G}$.
\label{condition:structural conditions}
\end{condition}

We provide additional discussion on the above condition in Appendix~\ref{app:discussion}.

Define \(\sZ^l = \{z_i \in \sZ : \forall x_j \in \text{De}(z_i) \cap \sX, d(z_i, x_j) = l\}\). This denotes the set of latent variables in the \(l^{\text{th}}\) layer of the model. Henceforth, we denote the vector obtained by concatenating the elements in \(\sZ^l\) as \(\vz^l\) and \(z^l_i\) as the \(i^{\text{th}}\) element of layer $l$.

Note that since any node in \(\sZ^i\) has parents only in \(\sZ^{i-1}\), the adjacency matrix \(\mM\) can be transformed via suitable column and row permutations to the block upper-triangular structure as shown below:
\begin{equation}
\mM = \begin{bmatrix}
\mathbf{0} & \mM^k & \mathbf{0} & \cdots & \mathbf{0} \\
\mathbf{0} & \mathbf{0} & \mM^{k-1} & \cdots & \mathbf{0} \\
\vdots & \vdots & \vdots & \ddots & \vdots \\
\mathbf{0} & \mathbf{0} & \mathbf{0} & \cdots & \mM^1 \\
\end{bmatrix}
\label{equation:blockform}
\end{equation}
where \(\mM^i \in \{0,1\}^{|\sZ_{i}| \times |\sZ_{i-1}|}\) are binary matrices that model the causal structure between \(\sZ^{i}\) and \(\sZ^{i-1}\). \(\mM^1 \in \{0,1\}^{|\sZ_{1}| \times |\sX|}\) is the binary matrix between \(\sZ^1\) and \(\sX\). Henceforth in the paper, we assume \(\mM\) is modeled this way and hence always satisfies condition~\ref{condition:structural conditions} (ii).

Our structural conditions are fairly general. We reduce the required number of measured variables relative to \citet{silva2006learning} and \citet{kummerfeld2016causal}. Unlike \citet{choi2011learning} and \citet{drton2017marginal}, we do not restrict children to have only one latent parent. Furthermore, our method imposes no constraints on the neighborhood structure of variables as in \cite{huang2022latent, xie2022identification}.

\textbf{Additional Notation: }For any matrix \(A\), we use \(A_{i,:}\) to denote its \(i\)-th row, \(A_{:,j}\) for its \(j\)-th column, and \(A_{i,j}\) for the element at the \(i\)-th row and \(j\)-th column. For a set \(\sA\), \(\va\) denotes the vector obtained by concatenating all the elements in \(\sA\) and \(\va_i\) denotes the \(i^\text{th}\) element of \(\va\).

\section{Identifiability Theory}
\label{sec:identifiability}
In this section, we describe the identifiability theory for general latent hierarchical models. Prior work has used rank constraints on the observed distribution as a graphical indicator of latent variables \citep{silva2006learning,huang2022latent, dong2023versatile}. They use the rank of cross-covariance matrix between two sets of measured variables to determine the number of latent variables that d-separate the two measured sets. However, this criterion only works for linear cases hence limiting the identifiability results. We propose a novel indicator which allows us to determine the number of latent variables which d-separate the two measured sets in the general case.

\paragraph{Intuition:} Consider the case where a set of latent variables, denoted by \(\sZ\), d-separates two sets of measured variables, \(\sX\) and \(\sY\). In this scenario, the conditional distribution \(p(\vy|\vx)\) can be expressed as: \(p(\vy|\vx) = \int p(\vy|\vz) p(\vz|\vx) d\vz\). If the cardinality of \(\sZ\) is smaller than that of \(\sX\) or \(\sY\), this imposes a constraint on the observable distribution. In most cases, the size of \(\sZ\) would to the minimum dimension of latent variables such that \(p(\vy|\vx)\) can be written as \(\int p(\vy|\vz) p(\vz|\vx) d\vz\). Based on this observation, we formulate a criterion based on the rank of the Jacobian of the function \(\mathbb{E}[\vy|\vx]\), which allows us to show that the general case holds with probability one.

In order to rigorously prove identifiability results, we introduce some standard differentiability and faithfulness conditions \citep{huang2022latent, dong2023versatile}. 

\begin{condition}[Generalized Faithfulness] A probability distribution $P$ is faithful to a DAG $\mathcal{G}$ if every rank Jacobian constraint on a pair of set of measured variables that holds in $P$ is entailed by every structural equation model with respect to $\mathcal{G}$.
\label{condition:faithfulness}
\end{condition}

Faithfulness conditions are widely used in causal discovery \citep{spirtes2001causation, zhang2008completeness,silva2006learning,huang2022latent}. This is often justified by the fact that the Lebesgue measure of distributions violating faithfulness has been shown to be zero. The following proposition justifies the faithfulness condition for non-linear latent hierarchical models along similar lines.

\begin{proposition}
The probability of a distribution $P$ generated by a structural model with respect to $\mathcal{G}$ violating Generalized Faithfulness is zero.
\label{proposition:lebesgue_measure}
\end{proposition}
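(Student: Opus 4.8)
The plan is to mimic the classical Okamoto-style argument showing that faithfulness violations form a measure-zero set, but adapted to the new ``rank Jacobian'' indicator rather than the rank of a covariance matrix. The key conceptual point is that each potential faithfulness violation corresponds to the vanishing of some analytic function of the model parameters: if a rank Jacobian constraint holds in $P$ but is \emph{not} entailed by the graph $\gG$, then on the parameter subspace compatible with $\gG$ there must exist a point where a certain minor of the Jacobian of $\E[\vy\mid\vx]$ is identically zero even though the structural equations do not force it to be. So first I would set up the parametrization: fix the graph $\gG$, assume each $f_{z_j}$ and $f_{x_i}$ lies in a parametric family that is (real-)analytic in its parameters $\theta$ (this is where the ``standard differentiability conditions'' cited from \citet{huang2022latent, dong2023versatile} do the work), and note that the map $\theta \mapsto \E[\vy\mid\vx]$, and hence $\theta \mapsto J(\theta) := \partial \E[\vy\mid\vx]/\partial \vx$ evaluated at a generic point, is analytic in $\theta$.

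Second, I would fix a pair of measured sets $(\sX,\sY)$ and a target rank value $r$ that is \emph{smaller} than what $\gG$ entails (i.e.\ smaller than $\min(|\sX|,|\sY|, \text{size of the minimal d-separating latent set})$). The event ``$\mathrm{rank}\, J(\theta) \le r$'' is the common zero set of all $(r+1)\times(r+1)$ minors of $J(\theta)$; each such minor is an analytic function $g(\theta)$. A real-analytic function on a connected parameter domain is either identically zero or vanishes only on a set of Lebesgue measure zero (Mityagin's lemma / the standard analytic-zero-set fact). So the whole argument reduces to exhibiting, for each such $(\sX,\sY,r)$, \emph{at least one} parameter value $\theta^\ast$ compatible with $\gG$ for which $\mathrm{rank}\, J(\theta^\ast) > r$ — equivalently, one point where some $(r+1)$-minor is nonzero. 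Then that minor is not identically zero, its zero set is measure zero, and taking a finite union over all choices of $(\sX,\sY)$ and $r$ (finitely many, since the variables are finitely many) still gives a measure-zero set. Its complement is exactly the set of parameters whose induced $P$ satisfies Generalized Faithfulness, so $P$ violating faithfulness has probability zero under any absolutely continuous prior on $\theta$.

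Third — and this is the step I expect to be the main obstacle — I need the ``generic realizability'' fact: whenever $\gG$ does not entail a rank constraint, there is a concrete structural model over $\gG$ realizing the full (generic) rank. This is essentially the converse direction of the rank-Jacobian indicator, and it is where the intuition paragraph in the text ($p(\vy\mid\vx)=\int p(\vy\mid\vz)p(\vz\mid\vx)\,d\vz$ forces $\mathrm{rank}$ at most $|\sZ|$, and generically equals it) must be turned into a rigorous lemma. I would prove it by an explicit construction: take the $f$'s to be, say, generic nonlinear maps (or even a perturbation of linear maps) along the edges of the minimal d-separating structure, compute $\E[\vy\mid\vx]$ as a composition of these maps through the latent layer, and show by the chain rule that its Jacobian factors as a product of Jacobian blocks whose ranks multiply out to the d-separating set's cardinality for an open set of parameter choices. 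The delicate part is handling the noise: because latents and noise are no longer deterministic functions of the measured variables (the whole point of this paper relative to \citet{kong2023identification}), $\E[\vy\mid\vx]$ involves a genuine integration over $\vz$, so I must argue that this marginalization does not collapse the rank — e.g.\ by choosing the noise small relative to the signal, or by a dominated-convergence / implicit-function argument showing the Jacobian of the integrated map is close to that of the deterministic map. Once that lemma is in hand, the measure-theoretic wrapper in the previous paragraph finishes the proof.
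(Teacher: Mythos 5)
Your route is genuinely different from the paper's. The paper's own proof is a short random-matrix argument: using the factorization established in the proof of Theorem~\ref{theorem:graphical}, the Jacobian of $\E[\vy|\vx]$ factors as $J_{h}(g(\vx))\,J_{g}(\vx)$ with $J_{h} \in \R^{|\sY|\times|\sZ|}$ and $J_{g} \in \R^{|\sZ|\times|\sX|}$, and it treats these two factors as matrices whose entries follow a continuous distribution, invoking Lemma~\ref{lemma:full_rank_product} (a product of continuously distributed matrices with inner dimension $p \le m,n$ has rank $p$ almost surely) to conclude the rank equals $|\sZ|$ with probability one. You instead push the measure onto a parameter space: parametrize the structural equations, note that each $(r+1)\times(r+1)$ minor of the Jacobian is analytic in $\theta$, exhibit one $\theta^\ast$ at which some minor is nonzero, and finish with the analytic-zero-set fact and a finite union over pairs $(\sX,\sY)$ and ranks $r$. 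This is the classical Okamoto-style argument, and it is arguably the more principled formalization, since ``probability of a distribution violating faithfulness'' naturally refers to a measure over models rather than over Jacobian entries.

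Two caveats, though. First, your argument needs real-analyticity of the parametric family, which is strictly stronger than Condition~\ref{condition:differentiability} (continuous differentiability only); for merely smooth families the zero set of a not-identically-zero minor can have positive Lebesgue measure, so this assumption is doing genuine work and must be stated. Second, your step three --- the generic-realizability lemma showing that marginalizing over the latent noise does not collapse the rank below the d-separation bound, i.e.\ that the relevant minor is not identically zero over the parameter space --- is left as a sketch, and it is exactly the substantive content of the proposition. The paper compresses the same difficulty into the assumption that the entries of $J_{h}$ and $J_{g}$ are continuously distributed (hence each factor is full rank a.s.), which is its own leap rather than a derivation from the structural model. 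So both arguments hinge on the same unverified genericity fact; in your write-up it is at least made explicit, but until you actually construct the witness $\theta^\ast$ (e.g.\ via your proposed small-noise perturbation of the deterministic model together with the chain-rule factorization), the measure-zero conclusion does not yet follow from what you have written.
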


\begin{condition}[Differentiability]
\label{condition:differentiability}
(i) For every pair of measured sets \(\sX\) and \(\sY\), the function \(f: \mathbb{R}^{|\sX|} \rightarrow \mathbb{R}^{|\sY|}\) defined as \(f(\vx) = \mathbb{E}[\vy|\vx]\) is continuously differentiable. (ii) For every pair of measured set \(\sX\) and latent set \(\sZ\), there exists a continuous differentiable function \(g: \mathbb{R}^{|\sX|} \rightarrow \mathbb{R}^{|\sZ|}\) such that \(p(\vz|\vx) = p(\vz|g(\vx))\).
\end{condition}

Our approach considerably relaxes the constraints compared to existing work. Unlike previous methods that require linear relationships \citep{huang2022latent, dong2023versatile} or deterministic functions (\(z, \epsilon = f(x)\)) \citep{kong2023identification}, our framework accommodates a broader class of non-linear relationships between variables. Also, note that these conditions are sufficient but not necessary. In Section~\ref{section:experiments} we show we can identify structures even when Condition~\ref{condition:differentiability} does not hold.

We introduce a theorem that relates the graphical structure to a constraint of the distribution between two sets of measured variables. 

\begin{theorem}
\label{theorem:graphical}
Let $\mathcal{G}$ be a hierarchical latent causal model satisfying Condition \ref{condition:structural conditions}. For any two sets of measured variables $\sX$ and $\sY$ in $\mathcal{G}$, let $f(\vx) = \mathbb{E}[\vy|\vx]$. Under the faithfulness and differentiability conditions, for any \(r < |\sX|,|\sY|\), the rank of the Jacobian matrix \(\mathbf{J}_f = \frac{\partial f}{\partial \vx} = r \) if and only if the size of the smallest set of latent variables that d-separates $\sX$ from $\sY$ is \(r\). Formally,
\begin{equation}
rank(\mathbf{J}_f) = \min_{\sZ} |\sZ| \quad \text{such that} \quad \sX \perp \!\!\! \perp_{\mathcal{G}} \sY | \sZ
\end{equation}
where $\sZ$ is a subset of latent variables in $\mathcal{G}$, and $\perp \!\!\! \perp_{\mathcal{G}}$ denotes d-separation in the graph $\mathcal{G}$. 
\end{theorem}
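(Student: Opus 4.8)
The plan is to prove the two directions of the biconditional separately, using the integral factorization $p(\vy\mid\vx)=\int p(\vy\mid\vz)\,p(\vz\mid\vx)\,d\vz$ that holds whenever $\sZ$ d-separates $\sX$ from $\sY$, together with the differentiability Condition~\ref{condition:differentiability} and Generalized Faithfulness (Condition~\ref{condition:faithfulness}).

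\textbf{Upper bound ($\mathrm{rank}(\mathbf{J}_f)\le \min_{\sZ}|\sZ|$).} Let $\sZ$ be any latent set with $\sX\perp\!\!\!\perp_{\mathcal G}\sY\mid\sZ$ and $|\sZ|=s$. By d-separation, $\E[\vy\mid\vx]=\E\big[\,\E[\vy\mid\vz]\mid\vx\big]$, and by Condition~\ref{condition:differentiability}(ii) there is a differentiable $g\colon\R^{|\sX|}\to\R^{s}$ with $p(\vz\mid\vx)=p(\vz\mid g(\vx))$, so $f$ factors through $g$: $f(\vx)=h(g(\vx))$ for a suitable $h\colon\R^{s}\to\R^{|\sY|}$. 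The chain rule gives $\mathbf{J}_f=\mathbf{J}_h\,\mathbf{J}_g$, and since $\mathbf{J}_g$ has $s$ rows, $\mathrm{rank}(\mathbf{J}_f)\le s$. Taking the minimum over all such $\sZ$ yields the bound. (A subtlety: Condition~\ref{condition:differentiability}(ii) as stated gives one direction $p(\vz\mid\vx)=p(\vz\mid g(\vx))$; I would verify this indeed forces $\E[\vy\mid\vx]$ to depend on $\vx$ only through $g(\vx)$, which follows because $\E[\vy\mid\vx]=\int\big(\int\vy\,p(\vy\mid\vz)d\vy\big)p(\vz\mid g(\vx))\,d\vz$.)

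\textbf{Lower bound ($\mathrm{rank}(\mathbf{J}_f)\ge \min_{\sZ}|\sZ|$), i.e.\ genericity.} This is the main obstacle. Here I would argue that the rank of $\mathbf{J}_f$ being strictly smaller than $s:=\min_\sZ|\sZ|$ is itself a rank-Jacobian constraint in the sense of Condition~\ref{condition:faithfulness}; by Generalized Faithfulness (whose genericity is Proposition~\ref{proposition:lebesgue_measure}), such a constraint, if it held in $P$, would have to be entailed by \emph{every} SEM on $\mathcal G$. So it suffices to exhibit one SEM compatible with $\mathcal G$ for which $\mathrm{rank}(\mathbf{J}_f)=s$ at some point $\vx$. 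For this I would choose nonlinear functions in \eqref{eq:sem} generically — e.g.\ take the $f_{z_j},f_{x_i}$ to be generic smooth maps (or even argue via a localized perturbation) — and show that the composite map $\vx\mapsto\E[\vy\mid\vx]$ has a Jacobian of full rank $s$. The structural hypotheses (Condition~\ref{condition:structural conditions}: two pure children per latent, equal path lengths) are what guarantee $\sX$ and $\sY$ can be ``read out'' from the separating latents with enough independent degrees of freedom — concretely, the minimal separating set $\sZ$ of size $s$ influences $\vx$ and $\vy$ through distinct coordinates, so generic choices make $\mathbf{J}_g$ full row rank and $\mathbf{J}_h$ full column rank, giving $\mathrm{rank}(\mathbf{J}_h\mathbf{J}_g)=s$. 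I expect this step to require care in (a) confirming that ``minimal separating set has size exactly $s$'' is equivalent to a statement about which latents lie on $\sX$–$\sY$ connecting paths (a purely graphical fact, provable by the standard min-cut/max-flow characterization of d-separation in DAGs), and (b) making the genericity argument rigorous — either by an explicit construction exploiting the pure-children structure, or by invoking analyticity so that $\{\mathrm{rank}<s\}$ is a measure-zero set once it is nonempty's complement.

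\textbf{Assembly.} Combining the two bounds gives $\mathrm{rank}(\mathbf{J}_f)=\min_\sZ|\sZ|$ whenever this minimum is $<|\sX|,|\sY|$, which is exactly the claim. I would close by noting that the restriction $r<|\sX|,|\sY|$ is needed only so that the rank deficiency is observable (a full-rank Jacobian carries no information distinguishing $|\sZ|\ge\min(|\sX|,|\sY|)$), and that the faithfulness direction is the only place where we pass from ``one good SEM exists'' to ``$P$ itself satisfies the rank equality,'' so the whole argument hinges on Proposition~\ref{proposition:lebesgue_measure}.
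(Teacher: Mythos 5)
Your proposal follows essentially the same route as the paper: the upper bound comes from factoring $f(\vx)=h(g(\vx))$ via Condition~\ref{condition:differentiability}(ii) and applying the chain rule, and the lower bound comes from Generalized Faithfulness together with the genericity of full rank, which is exactly what the paper delegates to Proposition~\ref{proposition:lebesgue_measure}. The extra care you flag for the genericity step (exhibiting one SEM attaining rank $s$) is the same content the paper compresses into Proposition~\ref{proposition:lebesgue_measure}, so no genuinely different argument is involved.
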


Henceforth, we use \(r(\sX,\sY)\) to denote the rank of the Jacobian of the function \(\mathbb{E}[\vy|\vx]\). Moreover, it can be shown that pure descendants of latent variables can be used as a surrogate to calculate d-separation between sets of latent variables as stated in Theorem~\ref{theorem:measurement} below. This theorem is partly inspired by \citet{huang2022latent}.

\begin{theorem}
Let \(\mathcal{G}\) be a hierarchical latent causal model satisfying Condition \ref{condition:structural conditions}. Let \(\sZ_X, \sZ_Y \subseteq \sZ\) be two disjoint subsets of latent variables in \(\mathcal{G}\), i.e., \(\sZ_X \cap \sZ_Y = \emptyset\). Let \(\sX\) be the set of measured variables that are d-separated by \(\sZ_X\) from all other measured variables in \(\mathcal{G}\) and let \(\sY\) be the set of measured variables that are d-separated by \(\sZ_Y\) from all other measured variables in \(\mathcal{G}\).
Then, \[r(\sZ_X, \sZ_Y) = r(\sX, \sY)\]
\label{theorem:measurement}
\end{theorem}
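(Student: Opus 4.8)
By Theorem~\ref{theorem:graphical}, $r(\sX,\sY)$ equals the minimum size of a latent set that d-separates $\sX$ from $\sY$, and the same Jacobian-rank identity, applied to the joint model over all of $\mathcal{V}$, gives $r(\sZ_X,\sZ_Y)$ as the minimum size of a latent set d-separating $\sZ_X$ from $\sZ_Y$; so the claim reduces to the purely graphical identity
\[
\min\{|\sS|:\sS\subseteq\sZ,\ \sZ_X\perp \!\!\! \perp_{\mathcal{G}}\sZ_Y\mid \sS\}=\min\{|\sS|:\sS\subseteq\sZ,\ \sX\perp \!\!\! \perp_{\mathcal{G}}\sY\mid \sS\}.
\]
The plan is to prove both inequalities by showing that a minimum-size d-separator for one pair is also a d-separator for the other. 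First I would record two facts extracted from Condition~\ref{condition:structural conditions}(i) and the definition of $\sX$ (symmetrically $\sY$): (a) every latent $z$ has a \emph{pure descent chain} $z\to v_1\to\cdots\to v_t=m$ with each $v_j$ a pure child of $v_{j-1}$ and $m$ measured, descending one layer per step; and (b) $\sZ_X$ d-separates $\sX$ from \emph{every} node that is not a descendant of $\sZ_X$, not only from the remaining measured variables --- the upgrade being obtained by using (a) to push any hypothetical active path down to the measured layer. In particular every pure descent chain from a $z\in\sZ_X$ ends inside $\sX$, and (we may assume, as is implicit in the statement) $\sX\cap\sY=\emptyset$.

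For $r(\sX,\sY)\le r(\sZ_X,\sZ_Y)$: let $\sS$ be minimum-size with $\sZ_X\perp \!\!\! \perp_{\mathcal{G}}\sZ_Y\mid\sS$, and suppose an active-given-$\sS$ path $\pi$ joins $x\in\sX$ to $y\in\sY$. By fact (b), walking from $x$ the path $\pi$ must meet some $z^x\in\sZ_X$ as a non-collider before it can leave the descendants of $\sZ_X$; symmetrically it meets some $z^y\in\sZ_Y$ from the $y$-end. The sub-path of $\pi$ between $z^x$ and $z^y$ is still active given $\sS$ (truncation only deletes internal vertices), contradicting $\sZ_X\perp \!\!\! \perp_{\mathcal{G}}\sZ_Y\mid\sS$. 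The delicate point is turning ``any exit from the descendants of $\sZ_X$ is through a non-collider of $\sZ_X$'' from a d-separation statement into one about the concrete $\pi$; I would do this by induction on $|\pi|$, using that a crossing into the region along a head-edge forces a collider on $\pi$ whose activation needs a node of $\sS$ strictly below $\sZ_X$, and then recursing on the shorter tail.

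For $r(\sZ_X,\sZ_Y)\le r(\sX,\sY)$: let $\sS'$ be minimum-size with $\sX\perp \!\!\! \perp_{\mathcal{G}}\sY\mid\sS'$. A standard minimality reduction lets me assume each element of $\sS'$ lies on an otherwise-$\sS'$-active $\sX$--$\sY$ path, forcing $\sS'\subseteq\mathrm{An}_{\mathcal{G}}(\sX)\cap\mathrm{An}_{\mathcal{G}}(\sY)$; since all descendants of a pure descent chain of $z\in\sZ_X$ lie in $\sZ_X$'s territory (disjoint from $\sY$), this keeps $\sS'$ off those chains and off $\sZ_X\cup\sZ_Y$. If some active-given-$\sS'$ path $\rho$ joined $z^x\in\sZ_X$ to $z^y\in\sZ_Y$, I would prepend the reversed pure descent chain from $z^x$ down to $m_x\in\sX$ and append the chain from $z^y$ down to $m_y\in\sY$; shortcutting at the first vertex where a chain meets $\rho$ (at which, by the layered structure, the combined path is a non-collider that avoids $\sS'$) yields a path $m_x\cdots m_y$ that is still active given $\sS'$, contradicting $\sX\perp \!\!\! \perp_{\mathcal{G}}\sY\mid\sS'$. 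Hence $\sS'$ d-separates $\sZ_X$ from $\sZ_Y$, and combined with the previous step we get equality.

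The main obstacle is the collider bookkeeping in the two path surgeries, together with extracting facts (a)--(b) cleanly from the stated (fixed-point-flavoured) definition of $\sX$ and $\sY$. I expect the cleanest organization is to run both directions inside the moralized graph of the subgraph induced on $\mathrm{An}_{\mathcal{G}}(\sX\cup\sY\cup\sZ_X\cup\sZ_Y\cup\sS)$, where d-separation becomes ordinary vertex separation and the claim becomes ``a minimum $\sX$--$\sY$ separator and a minimum $\sZ_X$--$\sZ_Y$ separator have the same size''; this follows from the layered structure (each $\sX$--$\sY$ path restricts to a $\sZ_X$--$\sZ_Y$ path, each $\sZ_X$--$\sZ_Y$ path extends along a pure descent chain to an $\sX$--$\sY$ path), the only residual subtlety being to keep the extending chains vertex-disjoint when several latents share a common pure descent, which a careful choice of chains (possible since each latent has at least two pure children) resolves.
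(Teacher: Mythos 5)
Your route is essentially the paper's: translate the ranks into minimum latent d-separator sizes via Theorem~\ref{theorem:graphical}, then prove the two inequalities by transferring separators/paths between the pair $(\sZ_X,\sZ_Y)$ and the pair $(\sX,\sY)$, using the pure-descent chains that Condition~\ref{condition:structural conditions}(i) guarantees. The paper's own proof consists of exactly these two transfer claims stated in two sentences, so your proposal is a fleshed-out version of the same argument; your first direction (an $\sX$--$\sY$ path active given a $(\sZ_X,\sZ_Y)$-separator truncates to an active $\sZ_X$--$\sZ_Y$ path) is the right shape, modulo the collider bookkeeping you yourself flag.

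The concrete problem is in your second direction. The ``standard minimality reduction'' you invoke --- that every element of a minimal separator $\sS'$ of $(\sX,\sY)$ lies in $\mathrm{An}(\sX)\cap\mathrm{An}(\sY)$ --- is not a standard fact and is false in general DAGs: the standard result only gives $\sS'\subseteq\mathrm{An}(\sX\cup\sY)$ (already in the chain $x\leftarrow a\leftarrow b\to y$ the minimal separator $\{a\}$ is not an ancestor of $y$). Moreover, even the correct $\mathrm{An}(\sX\cup\sY)$ containment does not do what you need, since the pure-descent chain nodes below $z^x$ and $z^y$ \emph{are} ancestors of $\sX\cup\sY$, so nothing so far keeps $\sS'$ off the chains or off $\sZ_X\cup\sZ_Y$; and a separator of $(\sX,\sY)$ may in principle contain descendants of $\sZ_X\cup\sZ_Y$ whose conditioning opens collider paths between the latent sets, which is precisely why this direction cannot be settled by a generic minimality fact. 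What actually closes it is the structure-specific argument you sketch as a fallback: work in the moralized ancestral graph, extend each $\sZ_X$--$\sZ_Y$ connecting path downward along pure-descent chains, and use the two-pure-children condition (plus the fact that pure children have unique parents, so chains from distinct latents are automatically vertex-disjoint) to reroute the chains around the given separator. That Menger-style extension argument, not the ancestor claim, should carry the weight of the second inequality --- and it is exactly the step the paper itself glosses over with ``given our structure, it must d-separate $\sZ_X$ and $\sZ_Y$,'' so spelling it out would strengthen the paper's proof rather than merely reproduce it.
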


The two theorems presented above establish a crucial link between the measured distribution and the underlying graph structure. Building upon this foundation, we now introduce three lemmas that are instrumental in proving identifiability. These lemmas provide a systematic approach to uncover the latent structure:

\begin{lemma}
\label{lemma:pure_children}
Let \(\gG\) be a graph satisfying Conditions~\ref{condition:structural conditions}. A set of measured variables \(\sS\) are pure children of the same parent if and only if for any subset \(\sT \subseteq \sS\), \(r(\sT,\sX\setminus\sT) = 1\).
\end{lemma}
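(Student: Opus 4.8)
\textbf{Proof proposal for Lemma~\ref{lemma:pure_children}.}

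The plan is to prove both directions by translating the statement about pure children into a statement about d-separating sets of latent variables, so that Theorems~\ref{theorem:graphical} and~\ref{theorem:measurement} can do the heavy lifting. Throughout, the key observation is that $r(\sT, \sX\setminus\sT)$ equals the size of the smallest set of latent variables that d-separates $\sT$ from $\sX\setminus\sT$ (Theorem~\ref{theorem:graphical}, applicable since for any nonempty proper $\sT$ we have $1 \le |\sT|$ and $1 \le |\sX\setminus\sT|$, and we only need to check whether the rank equals $1$, which is below both cardinalities provided $|\sT|,|\sX\setminus\sT|\ge 2$; the boundary case $|\sT|=1$ needs a small separate remark, see below).

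\emph{($\Rightarrow$)} Suppose all variables in $\sS$ are pure children of a common parent $z$. Fix any nonempty $\sT\subseteq\sS$. First I would argue $r(\sT,\sX\setminus\sT)\ge 1$: since $\sS$ is nonempty and the model is faithful, $\sT$ is not marginally independent of the rest (its parent $z$ has other descendants, or at least $\sT$ is nonconstant), so no set of size $0$ d-separates them. Next, $\{z\}$ d-separates $\sT$ from $\sX\setminus\sT$: every trek connecting a node in $\sT$ to a node outside $\sT$ must pass through $z$, because the only parent of each element of $\sT$ is $z$ and $z$ is latent (so elements of $\sT$ have no measured parents and no children), hence conditioning on $z$ blocks every such path. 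Therefore the smallest d-separating latent set has size exactly $1$, and by Theorem~\ref{theorem:graphical}, $r(\sT,\sX\setminus\sT)=1$.

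\emph{($\Leftarrow$)} Suppose $r(\sT,\sX\setminus\sT)=1$ for every nonempty $\sT\subseteq\sS$. Applying this with the singletons $\sT=\{x\}$ for each $x\in\sS$ shows each $x\in\sS$ is d-separated from the rest of the measured variables by a single latent variable; combined with Condition~\ref{condition:structural conditions}, this forces each $x\in\sS$ to have a unique latent parent and no other adjacencies (i.e., each $x$ is a pure child of some latent node). It remains to show these parents coincide. Suppose for contradiction $x,x'\in\sS$ are pure children of distinct latents $z\ne z'$. Take $\sT=\{x,x'\}$. Any single latent that d-separates $\sT$ from $\sX\setminus\sT$ would have to block the trek between $x$ and $x'$, i.e., lie on every path between $z$ and $z'$ and also separate each of $x,x'$ from everything else; using Condition~\ref{condition:structural conditions}(i) (each latent has at least two pure children) I can exhibit a measured descendant of $z$ other than $x$ and of $z'$ other than $x'$ that keep the required treks open unless the separating set has size at least $2$ — contradicting $r(\sT,\sX\setminus\sT)=1$. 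More cleanly, I would invoke Theorem~\ref{theorem:measurement}: the condition $r(\{x\},\sX\setminus\{x\})=1$ together with $r(\{x,x'\},\sX\setminus\{x,x'\})=1$ implies that the latent parent of $x$ and the latent parent of $x'$ are d-separated from each other by a set of size $1$ that is a common ancestor, which under Condition~\ref{condition:structural conditions} can only happen if the two parents are in fact the same node. Hence all of $\sS$ share a common parent, and by the previous paragraph each is a pure child, completing the proof.

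The main obstacle I anticipate is the $(\Leftarrow)$ direction, specifically ruling out the case where the elements of $\sS$ have distinct parents that nonetheless admit a size-one d-separator from the rest: one must use the full strength of Condition~\ref{condition:structural conditions} (both the two-pure-children requirement and the equal-path-length/layered structure) to show that a single latent cannot simultaneously serve as the bottleneck for two distinct latent parents and their respective other descendants. Handling the singleton case $|\sT|=1$ in Theorem~\ref{theorem:graphical} (where $r<|\sT|$ fails) also requires care — I would treat it by noting $r(\{x\},\sX\setminus\{x\})\in\{0,1\}$ trivially and that faithfulness rules out $0$, so the rank-$1$ conclusion still pins down a single-latent bottleneck.
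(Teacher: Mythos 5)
Your proposal is correct and follows essentially the same route as the paper: the forward direction argues that the common parent alone d-separates any $\sT\subseteq\sS$ from the rest, and the backward direction uses Condition~\ref{condition:structural conditions}(i) (each latent has two pure children) to exhibit paths such as $x\leftarrow z\rightarrow c$ and $x'\leftarrow z'\rightarrow c'$ forcing any d-separating set to have size at least two, which is exactly the paper's contrapositive argument, merely reorganized as ``singletons give purity, pairs give a common parent.'' The only caveat is that your ``more cleanly, invoke Theorem~\ref{theorem:measurement}'' aside is vaguer than, and not needed in place of, the direct path-blocking argument you already give.
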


\begin{lemma}
\label{lemma:non_pure_children}
Let $\gG$ be a hierarchical latent causal model satisfying Conditions \ref{condition:structural conditions}. Let $\sX \subset \sV$ be the set of measured variables. Under the  generalized faithfulness condition, for any measured variable $c \in \sX$ and any set of latent variables $\sP \subseteq \sZ^1$, $c$ is a child of exactly the variables in $\sP$ if and only if the following conditions hold:

\begin{enumerate}
    \item For each $\sS \subseteq \sX$ such that $|\sS \cap \text{Ch}(z_i)| = 1$ for each $z_i \in \sP$, where $\text{Ch}(z_i)$ denotes the set of pure children of $z_i$:
    \[r(\sS, \sX \setminus (\sS \cup \{c\})) = r(\sS \cup \{c\}, \sX \setminus (\sS \cup \{c\}))\]
    
    \item The equality in condition (1) does not hold for any proper subset of $\sP$.
\end{enumerate}

\end{lemma}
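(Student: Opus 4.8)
\textbf{Proof proposal for Lemma~\ref{lemma:non_pure_children}.}

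The plan is to prove both directions of the biconditional by translating the rank-equality statement into a statement about d-separation via Theorem~\ref{theorem:graphical}, and then into a purely graphical statement about which layer-1 latents are parents of $c$. The key object is the set $\sS$ of "representatives": one pure child per latent in $\sP$. Since each $z_i \in \sP$ has at least two pure children (Condition~\ref{condition:structural conditions}(i)), such $\sS$ exists, and $\sS$ and $\sX \setminus (\sS \cup \{c\})$ still contain enough pure children of the remaining latents that every latent leaves a footprint. First I would establish the following claim: for $\sS$ chosen as above, the smallest latent set d-separating $\sS$ from $\sX \setminus (\sS \cup \{c\})$ is exactly $\sP$, so by Theorem~\ref{theorem:graphical}, $r(\sS, \sX \setminus (\sS \cup \{c\})) = |\sP|$. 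This uses Lemma~\ref{lemma:pure_children}-type reasoning: each representative in $\sS$ is a pure child of a distinct $z_i \in \sP$, and $\sX \setminus (\sS \cup \{c\})$ contains at least one other pure child of each such $z_i$, so $\sP$ d-separates the two sets and no smaller set can (removing any $z_i$ un-blocks a path through its two pure children on opposite sides).

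Next I would analyze the left-hand side of the rank equation in condition (1) after adding $c$. The point is that $c$ lies on the $\sX$-side together with $\sS$, and the smallest latent set d-separating $\sS \cup \{c\}$ from $\sX \setminus (\sS \cup \{c\})$ is $\sP \cup (\text{Pa}(c) \setminus \sP)$ — that is, $\sP$ together with any layer-1 parents of $c$ not already in $\sP$. Hence $r(\sS \cup \{c\}, \sX \setminus (\sS \cup \{c\})) = |\sP \cup \text{Pa}(c)|$. Therefore the equality in condition (1) holds for a given $\sP$ precisely when $\text{Pa}(c) \subseteq \sP$. This already gives one direction cleanly: if $c$ is a child of exactly the variables in $\sP$, then $\text{Pa}(c) = \sP \subseteq \sP$, so condition (1) holds for every valid choice of $\sS$; and for any proper subset $\sP' \subsetneq \sP$ we have $\text{Pa}(c) = \sP \not\subseteq \sP'$, so condition (1) fails for $\sP'$, giving condition (2). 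For the converse, conditions (1) and (2) together say $\text{Pa}(c) \subseteq \sP$ but $\text{Pa}(c) \not\subseteq \sP'$ for any $\sP' \subsetneq \sP$; the latter forces $\text{Pa}(c) = \sP$ (if $\text{Pa}(c)$ were a proper subset of $\sP$, take $\sP' = \text{Pa}(c)$ for a contradiction, and if $\text{Pa}(c) \subseteq \sP$ but $\text{Pa}(c) \neq \sP$ then $\text{Pa}(c)$ is such a proper subset), so $c$ is a child of exactly the variables in $\sP$.

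I expect the main obstacle to be rigorously justifying the two rank computations, i.e., that $r(\sS, \sX \setminus (\sS \cup \{c\})) = |\sP|$ and $r(\sS \cup \{c\}, \sX \setminus (\sS \cup \{c\})) = |\sP \cup \text{Pa}(c)|$, and in particular that these specific choices of $\sS$ do not accidentally make the minimal separating set smaller or larger than claimed. The subtlety is twofold. First, I must confirm $\sX \setminus (\sS \cup \{c\})$ always retains at least one pure child of each $z_i \in \sP$ and of each other layer-1 latent that is relevant — this follows because each latent has at least two pure children and $\sS$ consumes exactly one per latent in $\sP$ while $c$ is a single variable, but the edge case where $c$ itself is a pure child needs a short separate argument (then $c$'s unique parent is in $\sP$ iff condition (1) holds, consistent with the general statement). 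Second, I must ensure $\sX \setminus (\sS \cup \{c\})$ is d-separated from $\sS$ by $\sP$ without any hidden common ancestors at higher layers slipping through — but since $\sP \subseteq \sZ^1$ and any path from $\sS$ to the complement through higher layers must pass through $\sZ^1$ descendants that are ancestors of the endpoints, the standard layered argument (together with the faithfulness/Theorem~\ref{theorem:graphical} correspondence) closes this. Invoking generalized faithfulness (Condition~\ref{condition:faithfulness}) is what lets me pass freely between the rank statements and the d-separation statements throughout.
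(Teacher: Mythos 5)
Your proposal is correct and follows essentially the same route as the paper's proof: both directions are handled by characterizing the two ranks as sizes of minimal d-separating sets (using one pure child of each $z_i \in \sP$ in $\sS$ and the remaining pure children in the complement as anchors), showing the equality in condition (1) holds precisely when $\text{Pa}(c) \subseteq \sP$, and then using condition (2) to force $\text{Pa}(c) = \sP$. Your version is marginally more explicit in computing $r(\sS \cup \{c\}, \sX \setminus (\sS \cup \{c\})) = |\sP \cup \text{Pa}(c)|$ and in flagging the edge cases (e.g.\ $c$ itself being a pure child), but the underlying argument is the same.
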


\begin{lemma}
\label{lemma:independent_child}
Let $\sG$ be a graph satisfying Conditions~\ref{condition:structural conditions}. A measured variable $c$ has no parent if and only if $r(\{c\},\sX\setminus\{c\}) = 0$.
\end{lemma}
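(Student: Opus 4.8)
\textbf{Proof proposal for Lemma~\ref{lemma:independent_child}.}

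The plan is to reduce both directions to Theorem~\ref{theorem:graphical}, which already characterizes $r(\{c\}, \sX \setminus \{c\})$ as the size of the smallest latent set d-separating $\{c\}$ from $\sX \setminus \{c\}$. Since $r(\cdot,\cdot)$ is the rank of a Jacobian of a map into $\mathbb{R}^{|\{c\}|} = \mathbb{R}^1$, it lies in $\{0,1\}$, so the only content is distinguishing rank $0$ from rank $1$. By Theorem~\ref{theorem:graphical} (applied with $r=0$, which is indeed strictly less than $|\{c\}|$ would require — so I should double-check that the theorem statement covers $r=0$; if the clause ``$r < |\sX|, |\sY|$'' is an issue when $|\{c\}| = 1$, I would instead argue directly that $\mathbf{J}_f \equiv 0$ iff $f$ is constant iff $\{c\}$ is independent of $\sX\setminus\{c\}$, using faithfulness to pass between statistical independence and d-separation by the empty set), we have $r(\{c\}, \sX\setminus\{c\}) = 0$ if and only if the empty set d-separates $\{c\}$ from $\sX \setminus \{c\}$, i.e.\ $\{c\} \perp\!\!\!\perp_{\mathcal{G}} \sX\setminus\{c\}$ marginally.

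\emph{($\Leftarrow$)} Suppose $r(\{c\},\sX\setminus\{c\}) = 0$. Then $\emptyset$ d-separates $c$ from every other measured variable. If $c$ had a parent $z_i \in \sZ^1$, then by Condition~\ref{condition:structural conditions}(i), $z_i$ has at least two pure children; pick a pure child $x_k \neq c$ of $z_i$ (if $c$ itself is one of the two pure children, the other one works; if $c$ is a non-pure child of $z_i$, either pure child works). The path $c \leftarrow z_i \rightarrow x_k$ is an active path given $\emptyset$ (the collider-free fork at $z_i$ is open since $z_i \notin \emptyset$), so $c \not\perp\!\!\!\perp_{\mathcal{G}} x_k \mid \emptyset$, contradicting d-separation. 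Hence $c$ has no parent.

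\emph{($\Rightarrow$)} Suppose $c$ has no parent. Since $\text{Pa}(x_i) \subseteq \sZ$ for all measured $x_i$ and the graph is a DAG with measured variables as sinks, $c$ has no parents \emph{and} no children; thus $c$ is an isolated node and trivially d-separated from $\sX\setminus\{c\}$ by $\emptyset$, giving $r(\{c\},\sX\setminus\{c\})=0$ by Theorem~\ref{theorem:graphical}. The main obstacle I anticipate is the boundary case $r = 0$ versus the ``$r < |\sX|, |\sY|$'' hypothesis of Theorem~\ref{theorem:graphical}: if the theorem as stated does not literally apply when one side is a singleton, I will supply the short direct argument sketched above — $\mathbf{J}_f \equiv 0$ on an open set forces $f = \mathbb{E}[\vc \mid \vx]$ constant, which together with the differentiability/faithfulness conditions yields $\{c\} \perp\!\!\!\perp \sX\setminus\{c\}$ and hence (by faithfulness) d-separation by $\emptyset$ — and then run the two bullet arguments above verbatim.
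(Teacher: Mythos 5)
Your proposal is correct and follows essentially the same route as the paper's proof: the forward direction uses that a parentless measured variable is isolated (measured variables are sinks), hence trivially d-separated by the empty set, and the backward direction derives a contradiction from Condition~\ref{condition:structural conditions}(i) by exhibiting another pure child $x_k$ of the putative parent and the open fork $c \leftarrow z_i \rightarrow x_k$, which is exactly the paper's ``dependent through their common parent'' argument phrased graphically. Your extra care about the $r < |\sX|,|\sY|$ clause of Theorem~\ref{theorem:graphical} (which is indeed satisfied since $0 < 1$) and the fallback constant-Jacobian argument are fine but do not change the substance.
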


\textbf{Discussion:} Lemma~\ref{lemma:pure_children} enables us to identify pure children among the measured variables \(\sX\). For example, in Figure~\ref{fig:examples} all subsets of \(\{x_1,x_2\}\) are d-separated from the rest of the variables by one variable \(\{z_4\}\). However, for the set \(\{x_1,x_2,x_3,x_4\}\), the subset \(\{x_1,x_3\}\) requires two variables \(\{z_4,z_5\}\) to be d-separated from the rest of the variables. Lemma~\ref{lemma:non_pure_children} provides a method to determine the parents of non-pure children. For example, in Figure~\ref{fig:examples}, consider \(\{x_{12}\}\) whose parents are \(\{z_8, z_9\}\). For a set like \(\{x_9, x_{12}\}\), which contains exactly one pure child of both parents of \(\{x_{12}\}\), \(r(\{x_9, x_{12}\}, \sX \setminus \{x_9, x_{12}\}) = r(\{x_9, x_{12}, x_{11}\}, \sX \setminus \{x_9, x_{12}, x_{11}\})\). However, this is not true for any other set of latent variables. Lemma~\ref{lemma:independent_child} allows us to identify measured variables that have no latent parents.

Having established the necessary lemmas, we now present the identifiability of the graph structure in hierarchical latent causal models.

\begin{theorem}
Let \(\mathcal{G} = (\mathcal{V}, \mathcal{E})\) be a hierarchical latent causal model satisfying Condition \ref{condition:structural conditions}. Let \(\mM\) be the binary adjacency matrix representing the structure of \(\mathcal{G}\). Let data \(\sX\) be generated according to the structural equation model defined in Equation \ref{eq:sem}. Given a function \(r(\sS,\sT)\) which outputs the minimum number of latent variables that d-separate any two measured sets \(\sS\) and \(\sT\), \(\mM\) is identifiable up to the permutation of the latent variables. 
\label{theorem:identifiability}
\end{theorem}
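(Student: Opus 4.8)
The plan is to reconstruct $\mM$ layer by layer, from the bottom (measured variables $\sX$) upward, using the three lemmas together with Theorems~\ref{theorem:graphical} and~\ref{theorem:measurement}, and then argue by induction on the number of layers. The oracle gives us $r(\sS,\sT)$ for all disjoint measured sets, so by Theorem~\ref{theorem:graphical} we effectively have access to the minimum-size d-separating latent set for any pair of measured sets. First I would recover the first latent layer $\sZ^1$ and the matrix $\mM^1$: apply Lemma~\ref{lemma:independent_child} to strip off any measured variable with no latent parent; then apply Lemma~\ref{lemma:pure_children} to partition the remaining pure children into maximal groups sharing a common parent --- each such group certifies the existence of one latent variable in $\sZ^1$ and fixes its pure-children block of $\mM^1$; finally apply Lemma~\ref{lemma:non_pure_children} to assign each non-pure measured child to the exact subset of $\sZ^1$ that are its parents, completing $\mM^1$ up to relabeling of $\sZ^1$.

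Next I would bootstrap to higher layers. Having identified $\sZ^1$ and, for each $z\in\sZ^1$, a set of measured pure descendants that can serve as a ``surrogate'' for $z$, Theorem~\ref{theorem:measurement} lets me compute $r(\sZ_X,\sZ_Y)$ for disjoint subsets $\sZ_X,\sZ_Y\subseteq\sZ^1$ purely from measured quantities: replace each latent variable by one of its pure measured descendants and query the oracle. Crucially, Condition~\ref{condition:structural conditions}(ii) (the equal-path-length / block-triangular structure) guarantees that $\sZ^1$ plays exactly the role that $\sX$ played at the previous stage --- each variable in $\sZ^2$ has all its children in $\sZ^1$, has at least two pure children in $\sZ^1$, and so on --- so the same three lemmas, now read with $\sZ^1$ in place of $\sX$ and the surrogate-based rank function in place of $r(\cdot,\cdot)$, recover $\sZ^2$ and $\mM^2$ up to relabeling. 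Iterating this, at stage $l$ I recover $\sZ^l$ and $\mM^l$; after finitely many steps the top layer is reached (detected when no further grouping is possible / the remaining set is a single layer with no common-parent structure), and concatenating the blocks $\mM^1,\dots,\mM^k$ in the form of Equation~\ref{equation:blockform} reconstructs $\mM$ up to permutation within each layer.

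The induction itself is the skeleton: the base case is the $\sZ^1$ construction above, and the inductive step asserts that once layers $1,\dots,l-1$ and their adjacency blocks are known (with a valid choice of pure measured surrogates for every recovered latent variable), the pair $(\sZ^l,\mM^l)$ is determined. The two facts that make the step go through are (a) Theorem~\ref{theorem:measurement}, which transfers all rank queries about layer-$(l-1)$ latents to measured queries, and (b) the layered structure from Condition~\ref{condition:structural conditions}, which ensures the hypotheses of Lemmas~\ref{lemma:pure_children}--\ref{lemma:independent_child} hold verbatim for the graph restricted to layers $\ge l-1$ with ``measured variables'' reinterpreted as $\sZ^{l-1}$. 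Uniqueness up to relabeling follows because each lemma characterizes its object (pure-children group, parent set of a non-pure child, parentless node) by an \emph{if and only if} condition on rank values, so any two reconstructions must agree on the partition into latent variables and on all edges, differing only in how the latents within a layer are named.

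The main obstacle I anticipate is making the ``surrogate'' bookkeeping fully rigorous across layers: I must show that every latent variable recovered at stage $l-1$ indeed possesses at least two pure measured descendants (needed both to invoke Theorem~\ref{theorem:measurement} and to supply the $\sS$ sets in Lemma~\ref{lemma:non_pure_children}), and that the restricted graph on $\sZ^{l-1}\cup\sZ^l\cup\cdots$ still satisfies Condition~\ref{condition:structural conditions} so the lemmas apply --- in particular that ``pure child in $\sZ^{l-1}$'' is correctly detected by the surrogate rank function and is not confounded by longer paths. A secondary subtlety is correctly handling measured or latent variables that sit in multiple layers or have no parents partway up the hierarchy (the analogue of Lemma~\ref{lemma:independent_child} at higher levels), and verifying that the termination condition for the top layer is unambiguous. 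Once these structural bookkeeping points are nailed down, the rest is a routine assembly of the lemmas.
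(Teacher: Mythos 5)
Your proposal is correct and follows essentially the same route as the paper's proof: induction on the number of layers, with Lemmas~\ref{lemma:pure_children}, \ref{lemma:non_pure_children} and \ref{lemma:independent_child} recovering the $\sZ^1$--$\sX$ block and Theorem~\ref{theorem:measurement} transferring rank queries to pure measured descendants so the argument recurses on the subgraph with $\sX$ removed. The ``surrogate bookkeeping'' you flag as an obstacle is exactly what the paper dispatches in its inductive step, by noting that each variable in $\sZ^2$ retains at least two pure children in $\sZ^1$ and that the equal-path-length condition is preserved, so the reduced graph again satisfies Condition~\ref{condition:structural conditions}.
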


The proof for Theorem~\ref{theorem:identifiability} leverages the preceding lemmas and recursion. We begin by applying Lemmas \ref{lemma:pure_children}, \ref{lemma:non_pure_children}, and \ref{lemma:independent_child} to infer the structure between $\sZ^1$ and $\sX$. Theorem \ref{theorem:measurement} then allows us to relate d-separation between sets in $\sZ^1$ to their pure children in $\sX$. Thus, this process can be applied recursively to higher levels of the hierarchy, enabling the identification of the entire graph structure.

\section{Differentiable Causal Discovery Approach}
\label{section:methodology}

In the previous section, we demonstrated that hierarchical models satisfying Condition~\ref{condition:structural conditions} yield a unique hierarchical structure for a given distribution of measured variables. To learn the causal structure, two key steps must be performed: (i) matching the observed data distribution, and (ii) enforcing structural constraints on the model.

\subsection{Matching the Data Distribution}
To learn the causal structure, we consider the structural equation models (SEMs) in Equation~\ref{eq:sem} explicitly parameterized by the binary adjacency matrix \(\mM\).
\begin{equation}
\begin{aligned}
z^i_j &= f^i_{j}(\mM^{i+1} \odot \vz^{i+1}, \varepsilon_{z^i_j}), \\
x_j &= g_j(\mM^{1} \odot \vz^{1}, \varepsilon_{x_j}).
\end{aligned}
\label{eq:sem_withM}
\end{equation}
We employ a variational autoencoder (VAE) \citep{kingma2013auto} as a generative model to learn the distribution over the measured variables. Let \(\theta\) be the parameters of the VAE, and \(\mM\) represent the binary adjacency matrix controlling the structure of the SEM. We aim to maximize the evidence lower bound (ELBO) to approximate the true data distribution.
\begin{align}
\log p(\vx; \hat{\theta}, \mM) 
&= \log \int p(\vx|\boldsymbol{\epsilon}; \hat{\theta}, \mM)p(\boldsymbol{\epsilon}; \hat{\theta})d\boldsymbol{\epsilon} \notag \\
&= \log \int \frac{q(\boldsymbol{\epsilon}|\vx)}{q(\boldsymbol{\epsilon}|\vx)} p(\vx|\boldsymbol{\epsilon}; \hat{\theta}, \mM)p(\boldsymbol{\epsilon}; \hat{\theta})d\boldsymbol{\epsilon} \notag \\
&\geq -\text{KL}(q(\boldsymbol{\epsilon}|\vx)||p(\boldsymbol{\epsilon}; \hat{\theta})) + \mathbb{E}_q[\log p(\vx|\boldsymbol{\epsilon}; \hat{\theta}, \mM)]
\end{align}
Here, \(\boldsymbol{\epsilon}\) represents the latent variable vector obtained by concatenating all individual noise terms \(\varepsilon_{z^i_j}\) and \(\varepsilon_{x_j}\). The objective of the VAE is to minimize the negative ELBO \(\mathcal{L}_\text{ELBO}\), where the KL divergence regularizes the variational posterior, and the second term encourages the generative model to match the observed data distribution. 

The encoder of the VAE models the approximate posterior \(q(\boldsymbol{\epsilon}|\vx)\), mapping the observed data \(\vx\) to the latent space \(\boldsymbol{\epsilon}\). An advantage of modeling \(q(\boldsymbol{\epsilon}|\vx)\) over \(q(\vz|\vx)\) is that it simplifies the process of enforcing the independence of each dimension of \(\boldsymbol{\epsilon}\). The decoder models the conditional likelihood \(p(\vx|\boldsymbol{\epsilon}; \hat{\theta}, \mM)\), and is designed to follow the SEM equations in Equation~\ref{eq:sem_withM}. This allows the decoder to respect the structural constraints encoded in the binary adjacency matrix \(\mM\), ensuring that the learned distribution also reflects the underlying causal structure.

\subsection{Enforcing Structural Constraints}

In order to enforce structural constraints, we relax the binary adjacency matrix \(\mM\) for gradient-based optimization by using the Gumbel-softmax trick \citep{jang2016categorical}. Here, \(\mM \sim \sigma(\gamma)\), where \(\sigma\) represents the softmax function and \(\gamma\) is a trainable parameter representing the logits.

To ensure the causal structure satisfies Condition~\ref{condition:structural conditions} (ii), we define \(\mM\) as a block upper-triangular matrix as shown in Equation~\ref{equation:blockform}. We now introduce the following lemma to justify the constraint required for Condition~\ref{condition:structural conditions} (i).

\begin{lemma}
Consider a DAG \(\mathcal{G}\) with a binary adjacency matrix \(\mM\). \(\mathcal{G}\) satisfies Condition~\ref{condition:structural conditions} (i) if and only if:
\begin{equation}
\Big\| \mM_{i,:} \odot \prod_{j \neq i} (1 - \mM_{j,:}) \Big\|_1 \geq 2 \quad \forall i.
\end{equation}
\label{lemma:twopurechildren}
\end{lemma}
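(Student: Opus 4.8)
\textbf{Proof proposal for Lemma~\ref{lemma:twopurechildren}.}

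The plan is to show that the quantity $\mM_{i,:} \odot \prod_{j \neq i}(1-\mM_{j,:})$ is exactly the indicator vector of the set of pure children of $z_i$, so that its $\ell_1$-norm counts those pure children; Condition~\ref{condition:structural conditions}(i) is precisely the statement that this count is at least $2$ for every latent node. First I would unpack the entrywise product at coordinate $k$: $\big(\mM_{i,:} \odot \prod_{j\neq i}(1-\mM_{j,:})\big)_k = \mM_{i,k}\prod_{j\neq i}(1-\mM_{j,k})$. Since $\mM$ is binary, each factor is in $\{0,1\}$, and the product equals $1$ if and only if $\mM_{i,k}=1$ and $\mM_{j,k}=0$ for all $j \neq i$; that is, if and only if $v_i$ is a parent of $v_k$ and no other node is a parent of $v_k$. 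By the definition of pure children (Definition~1), this is exactly the condition $\text{Pa}(v_k) = \{v_i\}$. Hence the vector is the $0/1$ indicator of the pure children of $v_i$, and its $\ell_1$-norm (a sum of nonnegative integers) equals $|\{k : v_k \text{ is a pure child of } v_i\}|$.

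With this identity in hand, both directions are immediate. If $\mathcal{G}$ satisfies Condition~\ref{condition:structural conditions}(i), then every latent variable has at least two pure children, so for each index $i$ corresponding to a latent node the count above is $\geq 2$, giving the displayed inequality. Conversely, if the inequality holds for all $i$, then in particular it holds for every $i$ indexing a latent variable, so each latent variable has at least two pure children, which is Condition~\ref{condition:structural conditions}(i). One bookkeeping point to handle carefully is the indexing convention: in the block form of Equation~\ref{equation:blockform} the rows of $\mM$ range over $\sZ$ (latents) while columns range over $\sZ \cup \sX$; I would state explicitly that ``$\forall i$'' quantifies over latent-variable rows, and note that pure children of a latent node may themselves be latent (an interior-layer $z$) or measured, which is consistent with Definition~1 since it is stated for arbitrary $v_i, v_j$.

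I do not anticipate a serious obstacle here — the lemma is essentially a restatement of the definition in matrix notation. The only mild subtlety is making sure the product $\prod_{j \neq i}(1-\mM_{j,:})$ is interpreted as an entrywise product over all rows $j$ (so that column $k$ is killed as soon as \emph{any} other node points into $v_k$), and confirming that the convention $\mM^z$ upper-triangular plus the block structure does not create spurious zero columns that would trivially satisfy the bound; since every measured variable and every non-root latent has at least one parent under the model, this does not arise, but I would remark on it for completeness.
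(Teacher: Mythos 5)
Your argument is correct and is essentially identical to the paper's proof: both unpack the $k$-th entry of $\mM_{i,:} \odot \prod_{j \neq i}(1-\mM_{j,:})$ as the indicator that $v_k$ is a pure child of $v_i$, interpret the $\ell_1$-norm as a count of pure children, and read off both directions of the equivalence. Your extra remarks on the indexing convention are a reasonable clarification but do not change the substance of the argument.
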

This lemma ensures that each row in \(\mM\) with descendants must account for at least two pure children, where pure children are those that do not share another parent. The elementwise product with \(\prod_{j \neq i} (1 - \mM_{j,:})\) counts pure children, and the \(\ell_1\) norm helps ensure that each row satisfies the minimum number of pure children. 

To encourage sparsity and avoid learning spurious edges, we apply an \(\ell_1\) regularization on \(\sigma(\gamma)\), similar to other differentiable causal discovery methods \citep{ng2022masked,brouillard2020differentiable}. 

The optimization objective is formulated as:
\begin{align}
\max_{\theta, \gamma} &~~\mathbb{E}_{\mM \sim \sigma(\gamma)} \left[\text{ELBO}(\theta, \mM)\right] - \lambda \|\sigma(\gamma)\|_1,  \\ 
\text{    subject to     }& ~~ \| \mM_{i,:} \|_1 \Big(\Big\| \mM_{i,:} \odot \prod_{j \neq i} (1 - \mM_{j,:}) \Big\|_1 {\color{black} - 2 \Big)} \geq 0 \quad \forall i.
\end{align}
Note, that we allow some rows of \(\mM\) to go zero. This allows us to learn the number of latent variables. The above method of using Gumbel softmax to approximate the binary adjacency matrices is inspired by \citet{ng2022masked,brouillard2020differentiable}.

Furthermore, to ensure the independence of the noise terms \(\varepsilon\), we introduce the following independence loss, denoted as \(\mathcal{L}_\text{ind}(\boldsymbol{\epsilon})\), which minimizes the KL divergence between the joint distribution of \(\boldsymbol{\epsilon}\) and the product of individual noise distributions:
\begin{align}
\mathcal{L}_\text{ind}(\boldsymbol{\epsilon}) = \text{KL}\Big(p(\boldsymbol{\epsilon}) \| \prod_j \prod_i p(\varepsilon_{z^i_j}) \prod_j p(\varepsilon_{x_j})\Big).
\end{align}
This can be estimated using the Donsker-Varadhan representation \citep{donsker1983asymptotic, belghazi2018mutual}. 

Therefore, the final loss function is:
\begin{align}
\mathcal{L}_\text{final} &= -\mathbb{E}_{\mM \sim \sigma(\gamma)} \left[\text{ELBO}(\theta, \mM)\right] + \text{KL}(q(\boldsymbol{\epsilon}|\vx) \| p(\boldsymbol{\epsilon})) \notag \\
&\quad + \lambda_1 \mathcal{L}_\text{ind}(\boldsymbol{\epsilon}) + \lambda_2 \|\sigma(\gamma)\|_1 \notag \\
&\quad + \lambda_3 \Big(\sum_i \max (0, \| \mM_{i,:} \|_1 (2 - \| \mM_{i,:} \odot \prod_{j \neq i} (1 - \mM_{j,:})\|_1 ))\Big)^2.
\end{align}

\section{Experiments}
\label{section:experiments}
We conduct empirical studies to examine the efficacy of our differentiable causal discovery method. Specifically, we experiment with synthetic data in Section \ref{sec:exp_synthetic_data} and real image data in Section \ref{sec:exp_image_data}.
\subsection{Synthetic Data}\label{sec:exp_synthetic_data}
We conduct experiments on four causal structures given in Figure~\ref{fig:latent-causal-diagrams} to validate our method. We consider both trees and v-structures. We compare against other methods designed to discover latent hierarchical causal models, namely KONG \citep{kong2023identification}, HUANG \citep{huang2022latent}, GIN \citep{xie2020generalized} {\color{black} and DeCAMFounder \citep{agrawal2023decamfounder}}. The structural Hamming distance (SHD) and F1 score are computed for each structure and reported in Table~\ref{tab:expanded-comparison}. We also report the time taken for each method in seconds. We did not run 1-factor model methods like FOFC \citep{kummerfeld2016causal} since our data does not meet their conditions, and their implementation results in runtime errors.

For the ground truth graphs, the functions in Equation~\ref{eq:sem} are modeled using a linear transformation of the input followed by a {\color{black} Tanh or} LeakyReLU activation function with $\alpha = 0.2$. The weights for the linear transformation are uniformly sampled from $[-5,-2] \cup [2,5]$. Exogenous noise is sampled from $[-\alpha, \alpha]$ where $\alpha$ is sampled from $[-3,-1] \cup [1,3]$. We run three random trials for each graph, and report the mean and standard deviation for each metric. Further details are given in Appendix~\ref{app:syntheticdata}.

We observe substantial improvement in both the SHD and F1 score compared to the baselines. We note that this improvement is despite the fact that the data does not satisfy Condition~\ref{condition:differentiability} since LeakyReLU is not differentiable everywhere. Since other methods are designed for a restrictive class of latent hierarchical models, they are unable to identify the causal graph. \citet{xie2020generalized} does not predict edges for most of the runs resulting in a mean F1 score close to zero. \citet{agrawal2023decamfounder} only discovers edges between observed variables, hence has a high SHD and low F1 score.

The linear baselines \citep{huang2022latent, xie2020generalized} are faster than the non-linear methods. This is because they do not have to train a non-linear model like a neural network since all relationships are linear. However, we can see that we require a much shorter runtime compared to \citet{kong2023identification} since we only train one neural network instead of \(\mathcal{O}(l n^2)\).
\begin{figure}[t]
    \centering
    \begin{subfigure}[t]{0.45\textwidth}
        \centering
        \begin{tikzpicture}
            \begin{axis}[
                width=\textwidth,
                height=4cm,
                xlabel={Time (s)},
                ylabel={SHD $\downarrow$},
                xmode=log,
                legend pos=north west,
                minor tick style={draw=none},
                major tick style={draw=none},
                tick label style={font=\scriptsize},
                label style={font=\small}
            ]
            \addplot[mark=square*, color=blue] coordinates {
                (248.815, 0.8775)
            };
            \addplot[mark=triangle*, color=red] coordinates {
                (5342.41, 5.8325)
            };
            \addplot[mark=diamond*, color=green] coordinates {
                (3.97, 5.125)
            };
            \addplot[mark=*, color=orange] coordinates {
                (4.955, 8.125)
            };
            \addplot[mark=x, color=purple] coordinates {
                (509.9525, 16.04)
            };

            \node[above] at (axis cs:248.815,0.8775) {Ours};
            \node[above left] at (axis cs:5342.41,5.8325) {KONG};
            \node[below right] at (axis cs:3.97,5.125) {HUANG};
            \node[below right] at (axis cs:4.955,8.125) {GIN};
            \node[below left] at (axis cs:509.9525,16.04) {DeCAMFounder};

            \end{axis}
        \end{tikzpicture}
        \caption{Structural Hamming Distance (SHD) vs. Time. Lower SHD is better.}
        \label{fig:shd-vs-time}
    \end{subfigure}
    \hfill
    \begin{subfigure}[t]{0.45\textwidth}
        \centering
        \begin{tikzpicture}
            \begin{axis}[
                width=\textwidth,
                height=4cm,
                xlabel={Time (s)},
                ylabel={F1 $\uparrow$},
                xmode=log,
                legend pos=south east,
                minor tick style={draw=none},
                major tick style={draw=none},
                tick label style={font=\scriptsize},
                label style={font=\small}
            ]
            \addplot[mark=square*, color=blue] coordinates {
                (248.815, 0.9575)
            };
            \addplot[mark=triangle*, color=red] coordinates {
                (5342.41, 0.665)
            };
            \addplot[mark=diamond*, color=green] coordinates {
                (3.97, 0.7075)
            };
            \addplot[mark=*, color=orange] coordinates {
                (4.955, 0.1325)
            };
            \addplot[mark=x, color=purple] coordinates {
                (509.9525, 0.0)
            };

            \node[below] at (axis cs:248.815,0.9575) {Ours};
            \node[below left] at (axis cs:5342.41,0.665) {KONG};
            \node[below right] at (axis cs:3.97,0.7075) {HUANG};
            \node[below right] at (axis cs:4.955,0.1325) {GIN};
            \node[above] at (axis cs:509.9525,0.0) {DeCAMFounder};

            \end{axis}
        \end{tikzpicture}
        \caption{F1 Score vs. Time. Higher F1 is better.}
        \label{fig:f1-vs-time}
    \end{subfigure}
    \caption{Performance vs. Time for different causal discovery methods. Time is plotted on a logarithmic scale.}
    \label{fig:performance-vs-time-combined}
\end{figure}
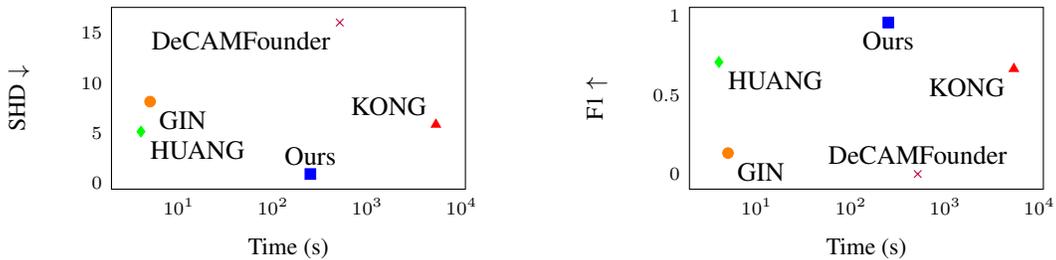

\begin{table}[t]
\caption{Performance of latent hierarchical causal discovery methods on various graphs}
\label{tab:expanded-comparison}
\begin{center}
\resizebox{\textwidth}{!}{%
\begin{tabular}{lcccccccccccccccc}
\toprule
& \multicolumn{2}{c}{Ours} & \multicolumn{2}{c}{KONG} & \multicolumn{2}{c}{HUANG} & \multicolumn{2}{c}{GIN} & \multicolumn{2}{c}{DeCAMFounder} \\
\cmidrule(lr){2-3} \cmidrule(lr){4-5} \cmidrule(lr){6-7} \cmidrule(lr){8-9} \cmidrule(lr){10-11}
Structure & SHD $\downarrow$ & F1 $\uparrow$ & SHD $\downarrow$ & F1 $\uparrow$ & SHD $\downarrow$ & F1 $\uparrow$ & SHD $\downarrow$ & F1 $\uparrow$ & SHD $\downarrow$ & F1 $\uparrow$ \\
\midrule
Tree (LeakyReLU) & \textbf{0.67} & \textbf{0.96} & 5.83 & 0.63 & 6.00 & 0.65 & 7.50 & 0.00 & {\color{black} 11.83} & {\color{black} 0.00} \\
 & (1.49) & (0.08) & (2.04) & (0.09) & (3.00) & (0.08) & (1.50) & (0.00) & {\color{black} (0.37)} & {\color{black} (0.00)} \\
 \midrule
V-structure (LeakyReLU) & \textbf{0.67} & \textbf{0.97} & 7.67 & 0.61 & 5.50 & 0.72 & 8.00 & 0.17 & {\color{black} 17.33} & {\color{black} 0.00} \\
 & (1.10) & (0.05) & (4.08) & (0.14) & (2.50) & (0.08) & (0.00) & (0.17) & {\color{black} (2.867)} & {\color{black} (0.00)} \\
 \midrule
Tree (Tanh) & {\color{black} \textbf{1.00}} & {\color{black} \textbf{0.95}} & {\color{black} 5.50} & {\color{black} 0.63} & {\color{black} 4.50} & {\color{black} 0.70} & {\color{black} 7.50} & {\color{black} 0.00} & {\color{black} 16.50} & {\color{black} 0.00} \\
 & {\color{black} (1.67)} & {\color{black} (0.09)} & {\color{black} (1.52)} & {\color{black} (0.06)} & {\color{black} (1.50)} & {\color{black} (0.03)} & {\color{black} (1.50)} & {\color{black} (0.00)} & {\color{black} (4.924)} & {\color{black} (0.00)} \\
 \midrule
V-structure (Tanh) & {\color{black} \textbf{1.17}} & {\color{black} \textbf{0.95}} & {\color{black} 4.33} & {\color{black} 0.79} & {\color{black} 4.50} & {\color{black} 0.76} & {\color{black} 9.50} & {\color{black} 0.36} & {\color{black} 18.50} & {\color{black} 0.00} \\
 & {\color{black} (1.33)} & {\color{black} (0.06)} & {\color{black} (2.58)} & {\color{black} (0.08)} & {\color{black} (1.50)} & {\color{black} (0.03)} & {\color{black} (1.50)} & {\color{black} (0.064)} & {\color{black} (3.253)} & {\color{black} (0.00)} \\
\bottomrule
\end{tabular}%
}
\end{center}
\footnotesize{Note: SHD = Structural Hamming Distance (lower is better $\downarrow$). F1 scores range from 0 to 1 (higher is better $\uparrow$). Standard deviations are reported in parentheses.}
\end{table}

\subsection{Image data}\label{sec:exp_image_data}
\begin{figure}
  \centering
  \begin{subfigure}[b]{0.2\textwidth}
    \centering
    \includegraphics[width=\textwidth]{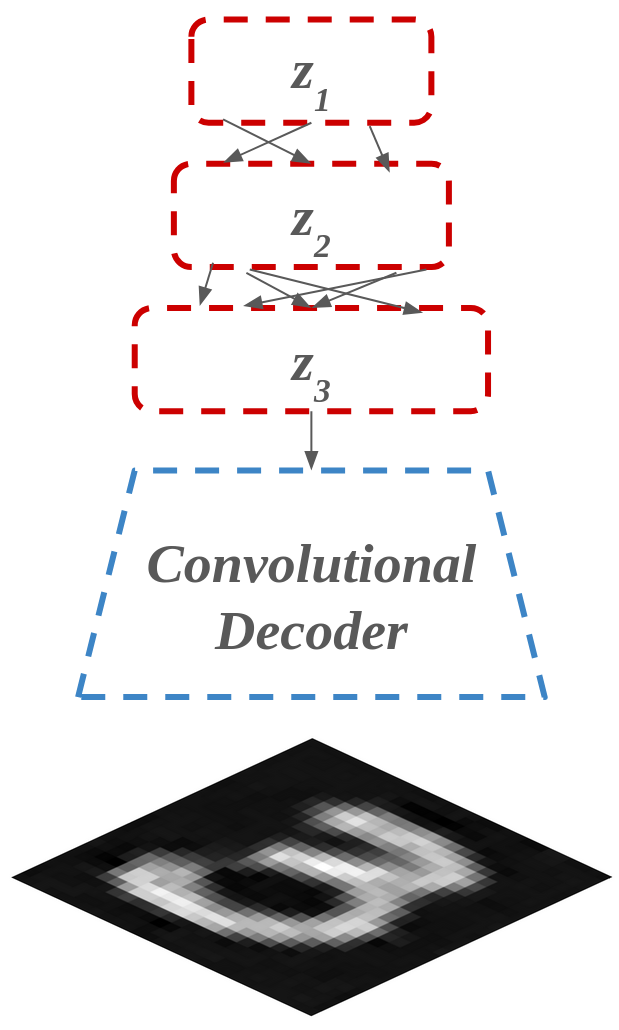}
    \caption{}
    \label{fig:mnistarch}
  \end{subfigure}
  \begin{subfigure}[b]{0.4\textwidth}
    \centering
    \includegraphics[width=\textwidth]{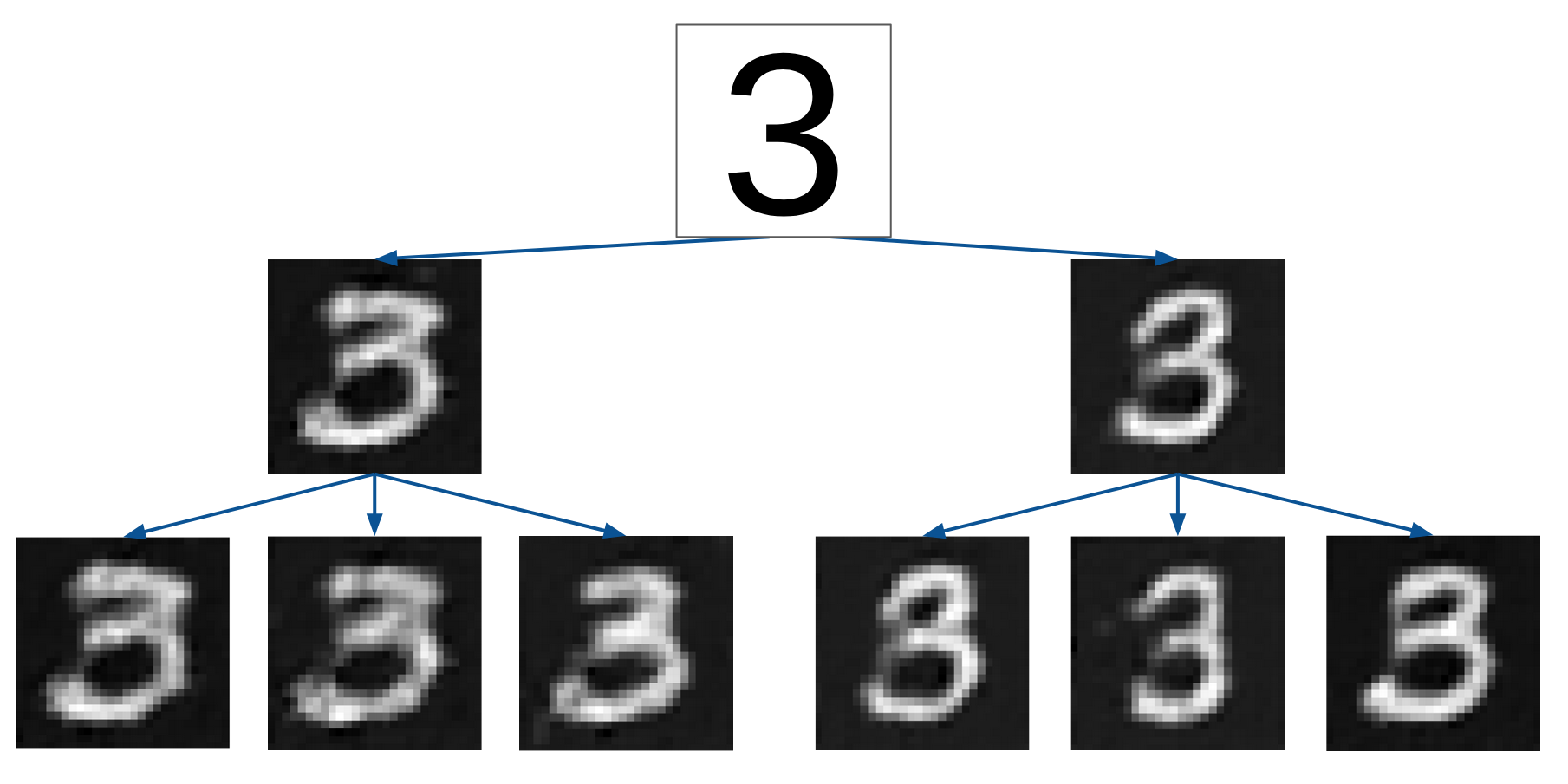}
    \caption{}
    \label{fig:causalgraphfor3}
  \end{subfigure}
  \begin{subfigure}[b]{0.3\textwidth}
    \centering
    \includegraphics[width=\textwidth]{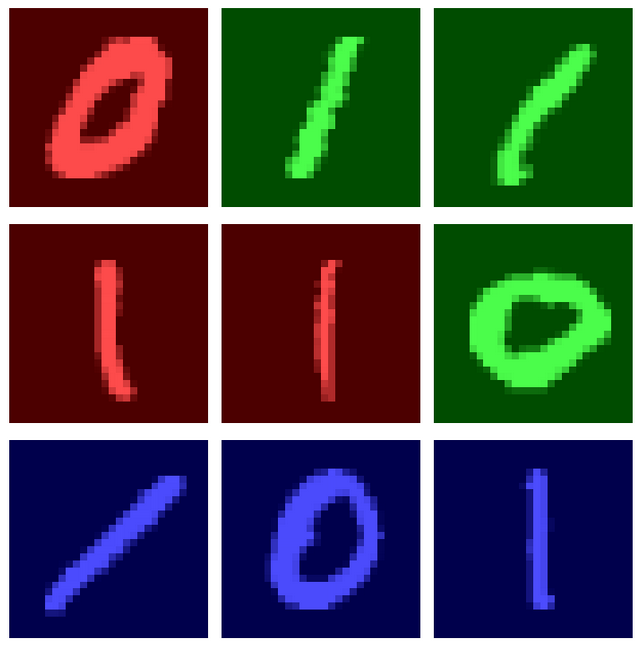}
    \caption{}
    \label{fig:cmnist}
  \end{subfigure}
  \caption{Figures for the Image experiments. (a) Latent causal graph for digit images (b) Visualization of subgraph of the learnt latent causal graph on MNIST (c) {\color{black} Samples from the CMNIST dataset illustrating digit-label associations under different conditions. Top row: training set samples with default color-label mapping. Middle row: test set samples with reversed color-label mapping. Bottom row: test set samples with a consistent blue color irrespective of labels.}}
  \label{fig:mnist}
\end{figure}

In this section, we learn a latent causal graph for the MNIST dataset \citep{lecun2010mnist}. As shown in Figure~\ref{fig:mnistarch}, we model a latent hierarchical causal structure followed by a decoder that generates the images. Since our causal discovery approach can be trained end-to-end, incorporating the decoder does not alter our methodology. The convolution decoder allows us to use spatial information and reduce the number of measured variables. We initialize the causal model with three layers and learn the underlying structure. Further training details are provided in Appendix~\ref{app:imagedata}.

{\color{black} For real image data, where the ground truth causal graph is unavailable, we validate our methodology using indirect evaluation techniques. These include visualizing the learned latent features, conducting interventions to interpret the learned representations, and evaluating their transferability across domains and under distribution shifts.}

The top layer typically captures global features, such as digit identity, while the middle layer learns variations within the same digit. The lowest layer focuses on local features that have minimal impact on the overall digit structure. Figure~\ref{fig:causalgraphfor3} visualizes a subset of the learned causal graph, highlighting these patterns, which were generated by intervening on different nodes in the subgraph. The top node denotes the concept of a digit three. The lower nodes are visualized upon intervention. The full causal graph, shown in Appendix~\ref{app:MNIST}, has 62 latent variables demonstrating the scalability of our method. {\color{black} Appendix~\ref{app:MNIST} also discusses visualization and intervention details.}

Causal representations {\color{black} can} contribute to better generalization and transfer learning due to the transfer of causal relations \citep{scholkopf2021toward}. To demonstrate the effectiveness of our learned representations in domain transfer, we evaluate them on the CMNIST dataset \citep{arjovsky2019invariant} {\color{black} {\color{black} and CelebA dataset \citep{liu2015deep}. We describe the problem setting and results for CMNIST here, while CelebA is discussed in Appendix~\ref{app:imagedata}.}
}In this dataset, the training set consists of digits 0 and 1, colored either red or green. The color acts as a cause for the digit, with $P(\text{digit}=0|\text{color} = \text{red}) = 0.9$ and $P(\text{digit}=0|\text{color} = \text{green}) = 0.1$ in the training set. These probabilities are reversed in the test set. We further evaluate on an additional test set where all digits are colored blue. Samples from these sets are shown in Figure~\ref{fig:cmnist}. Although the correlation between color and digit varies across datasets, the causal relationship between the digit’s image features and its label remains unchanged.

To predict the digit labels, we first learn a latent causal structure from the dataset. We then train a logistic regression classifier on these latent representations, applying L1 regularization to encourage sparsity in the model weights. This regularization promotes the use of features within the Markov blanket of the label. We evaluate the model on the test set, and the results are presented in Table~\ref{tab:reverse-blue-comparison}.

We compare our method with standard Autoencoders, Variational Autoencoders (VAEs) \citep{kingma2013auto}, $\beta$-VAEs \citep{higgins2017beta}, Causal VAEs \citep{yang2021causalvae}, and Graph VAEs \citep{he2018variational}. For each baseline, a logistic regression classifier is trained on the learned latent representations. All methods are evaluated over three random trials, with the mean and standard deviation reported to assess performance consistency.

\begin{table}[t]
\caption{Test Accuracy on the CMNIST dataset. Standard deviations are reported in parentheses. }
\label{tab:reverse-blue-comparison}
\begin{center}
\resizebox{\textwidth}{!}{
\begin{tabular}{lccccccc}
\toprule
& Ours & Autoencoder & VAE & $\beta$-VAE ($\beta = 1\text{e}{-3}$) & $\beta$-VAE ($\beta = 10$) & Graph VAE & Causal VAE\\
\midrule
Reverse & 0.979 (0.004) & 0.854 (0.037) & 0.536 (0.000) & 0.843 (0.140) & 0.536 (0.000) & {\color{black} 0.665 (0.231)} & {\color{black} 0.916 (0.075)}\\
black & 0.753 (0.106) & 0.6492 (0.195) & 0.536 (0.000) & 0.487 (0.215) & 0.536 (0.000) & {\color{black} 0.766 (0.174)} & {\color{black} 0.653 (0.183)} \\
\bottomrule
\end{tabular}
}
\end{center}
\end{table}

Table~\ref{tab:reverse-blue-comparison} compares performance on the two test sets: `Reverse,' where the color-digit relationship is reversed in the test set, and `Blue,' where all digits in the test set are blue. We observe that our method achieves higher accuracy than the baselines for both the datasets. Notably, $\beta$-VAE for $\beta = 1\text{e}{-1}$, $\beta = 10$, and VAE have the same accuracy for all runs because they predict the same label for any input. {\color{black} While our representations demonstrate better transferability under distribution shifts compared to the evaluated baselines, we acknowledge that task-specific methods not focused on identifiable representation learning might have the potential to achieve higher accuracy.}

\section{Conclusion}
In this work, we introduce a differentiable causal discovery method for recovering the structure of latent hierarchical causal graphs under rather mild conditions. Our approach significantly outperforms existing baselines and is scalable to high-dimensional datasets such as images. Additionally, we establish novel identifiability results without imposing restrictive assumptions on the structural equation models. Notably, our result that provides graphical information based on rank of the Jacobian matrix may inspire future work in this area.

Despite relaxing several key assumptions, certain conditions, such as the two pure children requirement, remain necessary, as seen in other latent hierarchical methods \citep{huang2022latent, kong2023identification}. Furthermore, our method does not yet account for structures where measured variables have children. Future work could extend our identifiability results to more general structures, similar to \cite{dong2023versatile}. We also speculate that Condition~\ref{condition:differentiability} may be not be necessary, and future research could focus on relaxing this condition further.

\newpage

\bibliography{iclr2025_conference}
\bibliographystyle{iclr2025_conference}

\newpage
\appendix
\section{Proofs}
\label{app:proofs}
\subsection{Proof of Proposition~\ref{proposition:lebesgue_measure}}
\renewcommand{\theproposition}{\ref{proposition:lebesgue_measure}}
\begin{proposition}
The probability of a distribution P generated by a structural model with respect to G violating Generalized Faithfulness is zero.
\end{proposition}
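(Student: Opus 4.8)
The plan is to reduce the claim to a standard genericity argument for real-analytic functions on the parameter space of the structural model, now phrased through Jacobian ranks by way of Theorem~\ref{theorem:graphical}, mirroring how faithfulness is justified in the linear case \citep{spirtes2001causation}. First I would fix a finite-dimensional parametrization $\theta \in \Theta \subseteq \mathbb{R}^d$ of the SEM in Equation~\ref{eq:sem} --- the parameters of the generating functions $f_{z_j},f_{x_i}$ together with the parameters of the exogenous-noise densities --- and read ``the probability of a distribution $P$'' as probability with respect to a measure on $\Theta$ that is absolutely continuous with respect to Lebesgue measure. Under mild regularity on the noise densities, for each ordered pair of measured sets $(\sX,\sY)$ the map $(\theta,\vx)\mapsto f_\theta(\vx):=\mathbb{E}_\theta[\vy\mid\vx]$ is real-analytic in $\theta$, hence so is every entry of the Jacobian $\mathbf{J}_{f_\theta}(\vx)$ and each of its minors.

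Next I would enumerate how Generalized Faithfulness can fail. Writing $r^\star(\sX,\sY)=\min_{\sZ}\{|\sZ| : \sX \perp\!\!\!\perp_{\mathcal G} \sY \mid \sZ\}$, the always-valid upper bound $\operatorname{rank}\mathbf{J}_{f}(\vx)\le|\sZ|$ for any d-separating $\sZ$ (the easy direction behind Theorem~\ref{theorem:graphical}, coming from $p(\vy\mid\vx)=\int p(\vy\mid\vz)p(\vz\mid\vx)\,d\vz$) shows that a rank constraint $\operatorname{rank}\mathbf{J}_f\le r$ is entailed by $\mathcal G$ exactly when $r\ge r^\star(\sX,\sY)$. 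Hence $P_\theta$ violates Generalized Faithfulness iff, for some pair $(\sX,\sY)$, one has $\operatorname{rank}\mathbf{J}_{f_\theta}(\vx) < r^\star(\sX,\sY)$ for almost every $\vx$. Since there are only finitely many pairs $(\sX,\sY)$, it is enough to fix a pair and a single point $\vx_0$ in the interior of the support and show that the set of $\theta$ with $\operatorname{rank}\mathbf{J}_{f_\theta}(\vx_0) < r^\star(\sX,\sY)$ has measure zero: this already excludes the event that the deficiency holds for a.e.\ $\vx$.

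The core is a witness construction. For fixed $(\sX,\sY)$ and $\vx_0$, the event $\operatorname{rank}\mathbf{J}_{f_\theta}(\vx_0) < r^\star$ is the common zero set of the finitely many $r^\star\times r^\star$ minors of $\mathbf{J}_{f_\theta}(\vx_0)$, each analytic in $\theta$; such a set has Lebesgue measure zero \emph{unless one of these minors vanishes identically on} $\Theta$, so it suffices to produce one parameter value $\theta^\star$ with $\operatorname{rank}\mathbf{J}_{f_{\theta^\star}}(\vx_0)=r^\star(\sX,\sY)$. I would obtain $\theta^\star$ by specializing the generating functions to affine maps with generic coefficients and Gaussian noise (a case available within, or in the closure of, the model class; when only in the closure one uses lower semicontinuity of rank to keep full rank at nearby $\theta\in\Theta$): then $\mathbf{J}_{f_{\theta^\star}}$ is the constant matrix $\Sigma_{\vy\vx}\Sigma_{\vx\vx}^{-1}$, and by the classical linear rank characterization used in \citet{huang2022latent} together with Condition~\ref{condition:structural conditions} (which guarantees every latent has enough pure children for generic weights to transmit full rank through a minimal latent d-separator), $\operatorname{rank}\Sigma_{\vy\vx}=r^\star(\sX,\sY)$. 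Taking the union over the finitely many pairs and minors gives a measure-zero exceptional set of $\theta$, proving the proposition.

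I expect the witness construction to be the main obstacle: the upper bound $\operatorname{rank}\mathbf{J}_f\le r^\star$ is automatic, but establishing the matching lower bound for \emph{some} SEM requires tracking how path information propagates through a minimal latent d-separator and invoking the pure-children structure of Condition~\ref{condition:structural conditions}, essentially reproving the generic part of the linear rank result in a form we can plug in. A secondary technicality is justifying real-analyticity (or merely that the relevant minors are not identically zero) of $\theta\mapsto\mathbb{E}_\theta[\vy\mid\vx_0]$ when the activations are non-analytic (e.g.\ LeakyReLU); one handles this by restricting to an analytic sub-family for this proposition, or by working on the region of parameters where all pre-activations stay in a single linear piece.
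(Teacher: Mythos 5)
Your plan is sound and reaches the right conclusion, but it is a genuinely different argument from the one in the paper. The paper works directly with the factorization of the conditional expectation through the minimal d-separator: writing \(\mathbb{E}[\vy|\vx]=h(g(\vx))\) as in the proof of Theorem~\ref{theorem:graphical}, it gets \(J(\vx)=J_h(g(\vx))\,J_g(\vx)\) with factors of sizes \(|\sY|\times|\sZ|\) and \(|\sZ|\times|\sX|\), and then invokes a short random-matrix lemma (entries of the two factors ``drawn from continuous distributions'' implies their product has full rank \(|\sZ|\) almost surely) to conclude the rank deficiency event has probability zero. You instead push the genericity down to a finite-dimensional parameter space: analyticity of \(\theta\mapsto\mathbb{E}_\theta[\vy|\vx_0]\), the observation that the rank-deficiency locus is contained in the zero set of a single \(r^\star\times r^\star\) minor, and a witness \(\theta^\star\) (generic linear--Gaussian specialization, using the linear rank theory of \citet{huang2022latent} together with Condition~\ref{condition:structural conditions}) showing that minor is not identically zero. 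What your route buys is a precise measure-theoretic statement -- the paper never says what measure the ``probability'' is taken with respect to, and its Lemma treats the two Jacobian factors as generic independent continuous random matrices, which is exactly the assumption your parametrization-plus-analyticity argument replaces with something checkable; what it costs is the extra work you already flag: the witness construction essentially re-proves the generic lower bound of the linear rank result, and analyticity in \(\theta\) must be arranged (or circumvented, as you suggest, by an analytic sub-family or a single linear piece) when activations like LeakyReLU are allowed. Two small points: the reduction from ``deficiency for a.e.\ \(\vx\)'' to deficiency at the single point \(\vx_0\) should be justified by lower semicontinuity of rank together with continuity of the Jacobian in \(\vx\) (Condition~\ref{condition:differentiability}), and the phrase ``unless one of these minors vanishes identically'' should read ``unless \emph{all} of these minors vanish identically'' -- the conclusion you draw from the witness is unaffected, since one nonvanishing minor already forces the common zero set to be Lebesgue-null.
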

\begin{proof}
We begin our proof with the following lemma.
\begin{lemma}
\label{lemma:full_rank_product}
Let $A \in \mathbb{R}^{m \times p}$ and $B \in \mathbb{R}^{p \times n}$ be random matrices drawn from a continuous distribution with $m, n \geq p$, whose entries are drawn from continuous distributions. Let $C = AB$ be their product. Then,
\[\mathbb{P}(\text{rank}(C) < p) = 0\]
with respect to the Lebesgue measure on $\mathbb{R}^{m \times n}$.
\end{lemma}

\begin{proof}
The lebesgue measure of non-full rank matrices is zero. Therefore \(\mathbb{P}(\text{rank}(A) = p) = 1\) and \(\mathbb{P}(\text{rank}(B) = p) = 1\). 

Since \(C = AB\), \(rank(C) \leq min(rank(A),rank(B)) = p\). For \(rank(C) < p\), the vectors \(Ab_{:,i}\) would have to be linearly dependent. This adds hard constraints on the matrices which has lebesgue measure zero.

\end{proof}
Using the proof of Theorem~\ref{theorem:graphical}, we know \(J_{\vh}(\vx) = J_{f}(g(\vx)) \cdot J_{g}(\vx)\) where \(J_{f}(g(\vx)) \in \mathbb{R}^{|\sY| \times |\sZ|}, J_{g} \in \mathbb{R}^{|\sZ| \times |\sX|}\). Condition~\ref{condition:differentiability} ensures the Jacobian matrices are continuous. By Lemma~\ref{lemma:full_rank_product}, \(rank(J_{\vh}(\vx)) = |\sZ|\) with probability one. Therefore, the probability of \(rank(J_{\vh}(\vx)) < |\sZ|\) for all \(\vx\) is zero.
\end{proof}
\renewcommand{\theproposition}{\arabic{proposition}}

\subsection{Proof of Theorem~\ref{theorem:graphical}}

\renewcommand{\thetheorem}{\ref{theorem:graphical}}
\begin{theorem}
Let $\mathcal{G}$ be a hierarchical latent causal model satisfying Condition~\ref{condition:structural conditions}. Under the faithfulness and differentiability conditions, for any two sets of measured variables $\sX$ and $\sY$ in $\mathcal{G}$, the rank of the Jacobian matrix \(J_f = \frac{\partial f}{\partial \vx} = r < |\sX|, |\sY|\) (where $f(\vx) = \mathbb{E}[\vy|\vx]$) if and only if the size of the smallest set of latent variables that d-separates $\sX$ from $\sY$ is \(r\). Formally,
\begin{equation}
\text{rank}(J_f) = \min_{\sZ} |\sZ| \quad \text{such that} \quad \sX \perp \!\!\! \perp_{\mathcal{G}} \sY \mid \sZ
\end{equation}
where $\sZ$ is a subset of latent variables in $\mathcal{G}$, and $\perp \!\!\! \perp_{\mathcal{G}}$ denotes d-separation in the graph $\mathcal{G}$.
\end{theorem}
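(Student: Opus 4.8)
The plan is to prove both directions by relating the Jacobian of $f(\vx) = \mathbb{E}[\vy \mid \vx]$ to the factorization of the conditional $p(\vy\mid\vx)$ through a separating latent set. The starting observation is that if $\sZ$ d-separates $\sX$ from $\sY$, then $p(\vy\mid\vx) = \int p(\vy\mid\vz)\,p(\vz\mid\vx)\,d\vz$, so $\mathbb{E}[\vy\mid\vx] = \int \mathbb{E}[\vy\mid\vz]\,p(\vz\mid\vx)\,d\vz$. By Condition~\ref{condition:differentiability}(ii) there is a differentiable $g:\mathbb{R}^{|\sX|}\to\mathbb{R}^{|\sZ|}$ with $p(\vz\mid\vx) = p(\vz\mid g(\vx))$, hence $f(\vx) = h(g(\vx))$ where $h(\vu) = \int \mathbb{E}[\vy\mid\vz]\,p(\vz\mid\vu)\,d\vz$ is differentiable by Condition~\ref{condition:differentiability}(i) applied suitably. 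By the chain rule $J_f(\vx) = J_h(g(\vx))\cdot J_g(\vx)$, and since $J_h \in \mathbb{R}^{|\sY|\times|\sZ|}$ and $J_g\in\mathbb{R}^{|\sZ|\times|\sX|}$, we get $\mathrm{rank}(J_f(\vx)) \le |\sZ|$ for all $\vx$. Taking $\sZ$ to be the \emph{smallest} separating set of size $r^\ast := \min_{\sZ}|\sZ|$ gives $\mathrm{rank}(J_f) \le r^\ast$. This is the easy inequality.

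For the reverse inequality $\mathrm{rank}(J_f) \ge r^\ast$, I would argue by faithfulness (Condition~\ref{condition:faithfulness}): a rank bound $\mathrm{rank}(J_f) \le r$ is a rank Jacobian constraint on the pair $(\sX,\sY)$; if it holds in $P$, faithfulness says it is entailed by \emph{every} SEM compatible with $\mathcal{G}$. So it suffices to exhibit one SEM on $\mathcal{G}$ for which $\mathrm{rank}(J_f) \ge r^\ast$, equivalently for which no factorization through fewer than $r^\ast$ latents is possible at the level of $\mathbb{E}[\vy\mid\vx]$. Here I would use the hierarchical structure and Condition~\ref{condition:structural conditions}: pick generic smooth functions (e.g.\ random polynomial or analytic nonlinearities) in Equation~\ref{eq:sem}, and show that the map $\vx \mapsto \mathbb{E}[\vy\mid\vx]$ has a Jacobian whose rank equals the number of latent variables on the minimal d-separating "cut" between $\sX$ and $\sY$ in $\mathcal{G}$. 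The combinatorial identification of that cut size with $r^\ast = \min_\sZ |\sZ|$ such that $\sX\perp\!\!\!\perp_\mathcal{G}\sY\mid\sZ$ is a standard graphical fact (a min-cut/max-flow or menger-type argument on the DAG restricted to latent nodes), which I would state and use. The analytic content is that for generic nonlinearities the relevant Jacobian is generically of full possible rank; this can be made rigorous via Proposition~\ref{proposition:lebesgue_measure}-style reasoning (products of generic continuous matrices have full rank almost surely, Lemma~\ref{lemma:full_rank_product}), noting that the Jacobian factors as a product of per-layer Jacobians each of which is generically full rank.

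The main obstacle I anticipate is the reverse direction's analytic step: showing that the rank of $J_f$ cannot drop below $r^\ast$ for a generic model, i.e.\ ruling out "accidental" low-rank structure of $\mathbb{E}[\vy\mid\vx]$ that is not forced by the graph. The subtlety is that $f$ is not the composition of the SEM maps directly but involves an integration over noise (the conditional expectation), so one cannot simply multiply layer Jacobians of the deterministic part; one must control how marginalizing out exogenous noise interacts with the rank. I would handle this by (i) using Condition~\ref{condition:differentiability}(ii) to collapse $p(\vz\mid\vx)$ onto a $|\sZ|$-dimensional sufficient statistic $g(\vx)$, so the "information bottleneck" is genuinely $|\sZ|$-dimensional, and (ii) arguing that for the minimal separating set this bottleneck is tight—no smaller sufficient statistic exists—because otherwise one could construct a smaller d-separating set, contradicting minimality, invoking faithfulness to transfer this from a single generic SEM to all of $P$. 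Filling in the genericity/measure-zero details for step (ii) is where the real work lies; the rest is bookkeeping with the chain rule and the graphical min-cut characterization.
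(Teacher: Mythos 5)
Your proposal follows essentially the same route as the paper's proof: the upper bound comes from factoring $f = h \circ g$ through the minimal d-separating set using Condition~\ref{condition:differentiability}(ii) and the chain rule, and the lower bound comes from Condition~\ref{condition:faithfulness} combined with a genericity argument showing the product of Jacobians is full rank almost surely (Proposition~\ref{proposition:lebesgue_measure} via Lemma~\ref{lemma:full_rank_product}). The "real work" you flag in the reverse direction is exactly what the paper delegates to that proposition together with generalized faithfulness, so the extra min-cut/Menger machinery you mention is not needed.
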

\renewcommand{\thetheorem}{\arabic{theorem}}

\begin{proof}
Let $\sX$ and $\sY$ be two sets of measured variables in $\mathcal{G}$. By the structure of the hierarchical latent causal model and Condition~\ref{condition:structural conditions}, there are no direct edges between measured variables. Therefore, any d-separation between $\sX$ and $\sY$ must be mediated through a set of latent variables.

Let $\sZ$ be the minimal set of latent variables that d-separates $\sX$ from $\sY$, i.e., 
\[
\sX \perp \!\!\! \perp_{\mathcal{G}} \sY \mid \sZ.
\]
This implies that the conditional distribution satisfies
\[
p(\vy|\vx) = \int p(\vy|\vz) p(\vz|\vx) d\vz.
\]
Taking expectations, we obtain
\begin{align}
\mathbb{E}[\vy|\vx] &= \int \vy \, p(\vy|\vx) d\vy \\
&= \int \vy \left( \int p(\vy|\vz) p(\vz|\vx) d\vz \right) d\vy \\
&= \int \left( \int \vy \, p(\vy|\vz) d\vy \right) p(\vz|\vx) d\vz \\
&= \int \mathbb{E}[\vy|\vz] \, p(\vz|\vx) d\vz.
\end{align}

By Condition~\ref{condition:differentiability}, there exists a differentiable function $g: \mathbb{R}^{|\sX|} \to \mathbb{R}^{|\sZ|}$ such that
\[
p(\vz|\vx) = p(\vz|g(\vx)).
\]
Substituting this into the expectation, we have
\[
\mathbb{E}[\vy|\vx] = \int \mathbb{E}[\vy|\vz] \, p(\vz|g(\vx)) d\vz = h(g(\vx)),
\]
where we define the function $h: \mathbb{R}^{|\sZ|} \to \mathbb{R}^{|\sY|}$ by
\[
f(\vw) = \int \mathbb{E}[\vy|\vz] \, p(\vz|\vw) d\vz.
\]

Applying the chain rule to the composition of functions, the Jacobian of $\mathbb{E}[\vy|\vx]$ with respect to $\vx$ is
\[
J_f(\vx) = J_{h}(g(\vx)) \cdot J_{g}(\vx),
\]
where $J_{h}(g(\vx))$ is an $|\sY| \times |\sZ|$ matrix and $J_{g}(\vx)$ is an $|\sZ| \times |\sX|$ matrix.

Using the rank inequality for matrix multiplication, we have
\[
\text{rank}(J_f(\vx)) \leq \min\left(\text{rank}(J_{h}(g(\vx))), \text{rank}(J_{g}(\vx))\right).
\]
Therefore,
\[
\text{rank}(J_f(\vx)) \leq \min(|\sZ|, |\sX|, |\sY|)
\]
 
Proposition~\ref{proposition:lebesgue_measure} implies that the Jacobian achieves its maximal possible rank almost everywhere. Thus, using Condition~\ref{condition:faithfulness},
\[
\text{rank}(J_f(\vx)) = \min(|\sZ|, |\sX|, |\sY|)
\]

Therefore, \(J_f = \frac{\partial f}{\partial \vx} = r < |\sX|, |\sY|\) if and only if \(|\sZ| = r\). This completes the proof.

\textbf{Linear Case:}
We show that linear latent hierarchical models (\cite{huang2022latent}) are a special case of this theorem. If the causal relationsip between \(y\) and \(z\) is linear, we have: \(\mathbb{E}[\vy|\vx] = \mathbb{E}[\vy|\mathbb{E}[\vz|\vx]]\). Therefore, \(\mathbb{E}[\vz|\vx]]\) being a continuous differentiable function of \(\vx\) suffices and we do not require Condition~\ref{condition:differentiability} (ii).

\end{proof}

\subsection{Proof of Theorem~\ref{theorem:measurement}}
\renewcommand{\thetheorem}{\ref{theorem:measurement}}
\begin{theorem}
Let \(\mathcal{G}\) be a hierarchical latent causal model satisfying Condition \ref{condition:structural conditions}. Let \(\sZ_X, \sZ_Y \subseteq \sZ\) be two disjoint subsets of latent variables in \(\mathcal{G}\), i.e., \(\sZ_X \cap \sZ_Y = \emptyset\). Let \(\sX\) be the set of observed variables that are d-separated by \(\sZ_X\) from all other observed variables in \(\mathcal{G}\) and let \(\sY\) be the set of observed variables that are d-separated by \(\sZ_Y\) from all other observed variables in \(\mathcal{G}\).
Then, \[r(\sZ_X, \sZ_Y) = r(\sX, \sY)\]
\end{theorem}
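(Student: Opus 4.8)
### Proof Plan for Theorem \ref{theorem:measurement}

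The plan is to reduce $r(\sZ_X,\sZ_Y)$ and $r(\sX,\sY)$ to a common graphical quantity — the size of the smallest latent set that d-separates the two groups — and then invoke Theorem \ref{theorem:graphical}. The subtlety is that Theorem \ref{theorem:graphical} as stated applies to \emph{measured} sets via the Jacobian of $\mathbb{E}[\vy\mid\vx]$, whereas $r(\sZ_X,\sZ_Y)$ involves latent sets; so the first task is to make sense of $r(\sZ_X,\sZ_Y)$ and relate it to d-separation among latents.

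First, I would establish the graphical identity $\min\{|\sS|: \sX \perp\!\!\!\perp_\mathcal{G} \sY \mid \sS\} = \min\{|\sS|: \sZ_X \perp\!\!\!\perp_\mathcal{G} \sZ_Y \mid \sS\}$, where $\sS$ ranges over latent subsets. The key structural fact is the definition of $\sX$ and $\sY$: $\sX$ consists of exactly those measured variables d-separated by $\sZ_X$ from all other measured variables, and similarly for $\sY$. This means every directed path from $\sX$ to $\sY$ in $\mathcal{G}$ must pass through $\sZ_X$ and through $\sZ_Y$ (all common ancestors / mediating latents lie "above" $\sZ_X$ on the $\sX$ side and above $\sZ_Y$ on the $\sY$ side, since in the hierarchical model measured variables are sinks and their only parents are in $\sZ^1$). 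Hence a latent set $\sS$ d-separates $\sX$ from $\sY$ if and only if it d-separates $\sZ_X$ from $\sZ_Y$: any active path between the measured sets restricts to an active path between the latent sets by deleting the measured endpoints (which are leaves and cannot be colliders on the path), and conversely any active path between latents extends downward to the measured sets along the pure-child edges guaranteed by Condition \ref{condition:structural conditions}(i). This is where I expect the main work: carefully checking the d-separation correspondence in both directions, in particular that the extension to pure children does not create or destroy active paths (pure children have a single parent, so the extension edges introduce no new colliders or confounders).

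Second, I would use this to compute $r(\sX,\sY)$: by Theorem \ref{theorem:graphical}, $r(\sX,\sY) = \min\{|\sS|: \sX\perp\!\!\!\perp_\mathcal{G}\sY\mid\sS\}$, which by the identity above equals $\min\{|\sS|: \sZ_X\perp\!\!\!\perp_\mathcal{G}\sZ_Y\mid\sS\}$. It remains to argue $r(\sZ_X,\sZ_Y)$ equals the same quantity. Here I would appeal to the same rank-of-Jacobian argument as in the proof of Theorem \ref{theorem:graphical}, now applied "one level up": writing $\mathbb{E}[\vz_Y\mid\vz_X] = \int \mathbb{E}[\vz_Y\mid\vs]\,p(\vs\mid\vz_X)\,d\vs$ for the minimal d-separating latent set $\sS$, and using the differentiability and faithfulness conditions (Conditions \ref{condition:differentiability}, \ref{condition:faithfulness}) to conclude the Jacobian attains rank $|\sS|$. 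Equivalently, and more cleanly, one can observe that since $\sX$ is a set of pure descendants of $\sZ_X$ realizing $\sZ_X$ faithfully (and likewise $\sY$ of $\sZ_Y$), the map $\vx \mapsto \vz_X$ and $\vz_Y \mapsto \vy$ are full-rank in the relevant sense, so $r(\sZ_X,\sZ_Y) = r(\sX,\sY)$ directly by the chain rule and Sylvester's rank inequality, exactly as in Lemma \ref{lemma:full_rank_product}. Chaining these equalities gives $r(\sZ_X,\sZ_Y) = r(\sX,\sY)$.

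The anticipated obstacle is the bookkeeping in the d-separation equivalence: one must verify that $\sX$ and $\sY$ are nonempty and "large enough" to witness the minimal cut (this uses the two-pure-children condition), that the minimal latent d-separator between $\sX,\sY$ can be taken disjoint from $\sZ_X\cup\sZ_Y$ or else handled by the faithfulness/genericity argument, and that collider-activation subtleties (conditioning on descendants) do not arise because the conditioning sets considered are themselves latent and the measured variables are leaves. Once the graphical equivalence is nailed down, the analytic part is a direct reprise of the Theorem \ref{theorem:graphical} argument.
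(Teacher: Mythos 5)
Your proposal is correct and follows essentially the same route as the paper: the paper's proof likewise reduces both quantities to the equivalence ``a latent set d-separates $\sZ_X$ from $\sZ_Y$ iff it d-separates $\sX$ from $\sY$'' (using that $\sX$, $\sY$ are the measured pure descendants of $\sZ_X$, $\sZ_Y$) and then invokes the rank characterization of Theorem~\ref{theorem:graphical}. Your write-up is in fact more careful than the paper's two-sentence argument, in particular in spelling out the collider/leaf bookkeeping and in explaining how $r(\sZ_X,\sZ_Y)$ is given meaning at the latent level via the same Jacobian-rank argument.
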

\renewcommand{\thetheorem}{\arabic{theorem}}
\begin{proof}
Since \(\sZ_X\) and \(\sZ_Y\) d-separate \(\sX\) and \(\sY\) from all other variables, we know that \(\sX\) are the measured pure descendants of \(\sZ_X\), and \(\sY\) are the measured pure descendants of \(\sZ_Y\). Therefore, if \(\sZ\) d-separates \(\sZ_X\) and \(\sZ_Y\), if \(\sZ\) d-separates \(\sX\) and \(\sY\). Moreover, if \(\sZ\) d-separates \(\sX\) and \(\sY\) then given our structure, it must d-separate \(\sZ_X\) and \(\sZ_Y\).

This completes the proof.
\end{proof}

\subsection{Proof of Lemma~\ref{lemma:pure_children}}
\renewcommand{\thelemma}{\ref{lemma:pure_children}}
\begin{lemma}
Let \(\gG\) be a graph satisfying Conditions~\ref{condition:structural conditions}. A set of measured variables \(\sS\) are pure children of the same parent if and only if for any subset \(\sT \subseteq \sS\), \(r(\sT,\sX\setminus\sT) = 1\).
\end{lemma}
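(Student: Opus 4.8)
The plan is to reduce the rank conditions to statements about minimal d‑separating latent sets via Theorem~\ref{theorem:graphical}, and then argue entirely on the hierarchical graph, exploiting the block‑triangular form in Equation~\ref{equation:blockform} (so that measured variables have parents only in $\sZ^1$, and every $z\in\sZ^1$ has all of its children, in particular its pure children, inside $\sX$) together with the two‑pure‑children requirement of Condition~\ref{condition:structural conditions}(i).

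For the forward direction, suppose every element of $\sS$ is a pure child of a single latent node $z$. For any nonempty $\sT\subseteq\sS$, every directed path emanating from a node of $\sT$ must first pass through $z$, since $z$ is the unique parent of each such node; hence $\{z\}$ d‑separates $\sT$ from $\sX\setminus\sT$, and Theorem~\ref{theorem:graphical} gives $r(\sT,\sX\setminus\sT)\le 1$. For the matching lower bound I would exhibit a path that d‑connects $\sT$ and $\sX\setminus\sT$ given the empty set: if $\sT\subsetneq\sS$, pick $x\in\sT$ and $x''\in\sS\setminus\sT$, so the fork $x\leftarrow z\rightarrow x''$ does the job; if $\sT=\sS$, use a further pure child of $z$ guaranteed by Condition~\ref{condition:structural conditions}(i), or a connection through an ancestor of $z$ in the (connected) graph (this case does not arise in the recursive application of the lemma anyway). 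Either way $r(\sT,\sX\setminus\sT)=1$.

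For the converse, assume $r(\sT,\sX\setminus\sT)=1$ for every nonempty $\sT\subseteq\sS$. First specialize to singletons $\sT=\{x\}$, $x\in\sS$: Lemma~\ref{lemma:independent_child} excludes $x$ having no parent (that would give rank $0$), and if $x$ had two distinct parents $z_a\neq z_b\in\sZ^1$, then, since each of $z_a,z_b$ has at least two pure children in $\sX$ while $x$ is a pure child of neither, each retains a pure child $w_a,w_b\in\sX\setminus\{x\}$; the forks $x\leftarrow z_a\rightarrow w_a$ and $x\leftarrow z_b\rightarrow w_b$ can only be blocked by conditioning on $z_a$ and on $z_b$ respectively, forcing any separator of $\{x\}$ from $\sX\setminus\{x\}$ to have size $\ge 2$ --- contradicting $r=1$. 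Hence every $x\in\sS$ is a pure child of a unique latent parent. Next take $x,x'\in\sS$ with parents $z_a$ and $z_b$ and suppose $z_a\neq z_b$; applying the hypothesis with $\sT=\{x,x'\}$ and repeating the fork argument (the witnesses $w_a$ of $z_a$ and $w_b$ of $z_b$ still lie in $\sX\setminus\{x,x'\}$ because $x'$ is not a child of $z_a$ and $x$ is not a child of $z_b$) again forces a separator of size $\ge 2$, a contradiction. Therefore all elements of $\sS$ share the same parent, which is the claim.

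The step I expect to be the main obstacle is making these d‑separation lower bounds fully rigorous inside the hierarchical graph: one must argue that a fork $x\leftarrow z\rightarrow w$ with $x\in\sT$ and $w\in\sX\setminus\sT$ cannot be blocked by anything other than $z$ itself (using that $z\in\sZ^1$ is the unique parent of its pure child and that no collider on an alternative route is opened), and that Condition~\ref{condition:structural conditions}(i) genuinely supplies such a witness $w$ still inside $\sX\setminus\sT$ after the elements of $\sT$ have been removed. The reduction via Theorem~\ref{theorem:graphical}, the appeal to Lemma~\ref{lemma:independent_child}, and the remaining bookkeeping are otherwise routine.
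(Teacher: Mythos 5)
Your proposal is correct and follows essentially the same route as the paper's proof: the forward direction uses d-separation of any $\sT\subseteq\sS$ from $\sX\setminus\sT$ by the single common parent, and the converse uses Condition~\ref{condition:structural conditions}(i) to supply pure-child witnesses outside $\sT$ so that forks through two distinct parents force any separating set to contain both of them, yielding $r\geq 2$. Your organization into a singleton step and a pair step (plus the explicit appeal to Lemma~\ref{lemma:independent_child} for the parentless case, which the paper's contrapositive glosses over) is only a finer-grained packaging of the paper's single choice of $\sT$ containing one child of each of the two parents.
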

\renewcommand{\thelemma}{\arabic{lemma}}

\begin{proof}
(\(\Rightarrow\)) Suppose the measured variables in \(\sS\) are pure children of the same parent node \(\vp\). For any subset \(\sT \subseteq \sS\), \(\sT\) and \(\sX\setminus\sT\) are d-separated by node \(\vp\). Therefore, we have \(r(\sT,\sX\setminus\sT) = 1\).

(\(\Leftarrow\)) We prove the contrapositive. Suppose the union of parents of measured variables in \(\sS\) contains more than one node. Then there exist distinct parent nodes \(\vp\) and \(\vp'\) such that at least one of their respective children is in \(\sS\). We can choose \(\sT\) such that both \(\sT\) and \(\sX\setminus\sT\) contains at least one pure child of \(\vp\) and one pure child of \(\vp'\). This choice is possible due to Condition 1, which states that all latent variables have at least two pure children. 

For this choice of \(\sT\), we have \(r(\sT,\sX\setminus\sT) \geq 2\), as both \(\vp\) and \(\vp'\) are needed to d-separate the two sets. This contradicts the condition that \(r(\sT,\sS\setminus\sT) = 1\) for all \(\sT \subseteq \sS\).
\end{proof}

\subsection{Proof of Lemma~\ref{lemma:non_pure_children}}
\renewcommand{\thelemma}{\ref{lemma:non_pure_children}}
\begin{lemma}
Let $\gG$ be a hierarchical latent causal model satisfying Conditions \ref{condition:structural conditions}. Let $\sX \subset \sV$ be the set of observed variables. Under the faithfulness condition, for any observed variable $c \in \sX$ and any set of latent variables $\sP \subseteq \sZ^1$, $c$ is a child of exactly the variables in $\sP$ if and only if the following conditions hold:

\begin{enumerate}
    \item $\forall \sS \subseteq \sX$ such that $|\sS \cap \text{Ch}(z_i)| = 1$ for each $z_i \in \sP$, where $\text{Ch}(z_i)$ denotes the set of pure children of $z_i$:
    \[r(\sS \cup \{c\}, \sX \setminus (\sS \cup \{c\})) = \sP\]
    
    \item The equality in condition (1) does not hold for any proper subset of $\sP$.
\end{enumerate}

\end{lemma}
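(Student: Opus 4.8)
The plan is to first reduce the claim to a purely combinatorial statement about minimum latent vertex separators, using Theorem~\ref{theorem:graphical}: every quantity $r(\sS,\sT)$ equals the size of the smallest set of latent variables d-separating $\sS$ from $\sT$ in $\mathcal{G}$, so condition (1) simply says ``adjoining $c$ to $\sS$ does not increase the minimum separator'' and condition (2) says this fails for every proper subset of $\sP$. Throughout I would use two structural facts: (a) measured variables are sinks whose parents lie in $\sZ^1$, so conditioning on a measured variable can only activate colliders, hence every minimum separator may be taken to consist of latent nodes; and (b) each $z_i\in\sZ^1$ has at least two pure children (Condition~\ref{condition:structural conditions}(i)), which lets me always find a pure child of a given latent lying outside a prescribed finite set. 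I then prove the two directions of the biconditional separately; I read condition (1) as the equality $r(\sS,\sX\setminus(\sS\cup\{c\}))=r(\sS\cup\{c\},\sX\setminus(\sS\cup\{c\}))$.

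For the forward direction, assume $\text{Pa}(c)=\sP$ and fix any $\sS$ with $|\sS\cap\text{Ch}(z_i)|=1$ for every $z_i\in\sP$. Since each element of $\sS$ is a pure child of some member of $\sP$ and $\text{Pa}(c)=\sP$, conditioning on $\sP$ blocks every path leaving $\sS\cup\{c\}$ (colliders at the measured sinks stay inactive), so $r(\sS\cup\{c\},\sX\setminus(\sS\cup\{c\}))\le|\sP|$. On the other hand, for each $z_i\in\sP$ its pure child in $\sS$ must be separated from one of its remaining pure children, which by (b) can be taken outside $\sS\cup\{c\}$ (the cases $|\sP|=1$ or $z_i$ with too few measured descendants are checked directly), forcing $z_i$ into any separator; hence $r(\sS,\sX\setminus(\sS\cup\{c\}))=|\sP|$ and, since enlarging one side never decreases the minimum separator, $r(\sS\cup\{c\},\sX\setminus(\sS\cup\{c\}))=|\sP|$, giving (1). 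For (2), take any proper subset $\sP'$ of $\sP$ and any admissible $\sS'$; then $r(\sS',\cdot)=|\sP'|$ as above, but a separator of $\sS'\cup\{c\}$ must additionally contain every $z_j\in\sP\setminus\sP'$ (each has a pure child outside $\sS'\cup\{c\}$ that must be blocked from $c$), so $r(\sS'\cup\{c\},\cdot)=|\sP|>|\sP'|$ and the equality fails.

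For the converse, assume (1) and (2) and set $\sQ:=\text{Pa}(c)\subseteq\sZ^1$. If some $z_k\in\sQ\setminus\sP$ existed, I would choose an admissible $\sS$ for which $z_k$ still has a pure child $w\notin\sS\cup\{c\}$ (possible by (b)); then $w\leftarrow z_k\rightarrow c$ forces $z_k$ into every separator of $\sS\cup\{c\}$, while $\sP$ is still forced too, so $r(\sS\cup\{c\},\cdot)\ge|\sP|+1>|\sP|=r(\sS,\cdot)$, contradicting (1); hence $\sQ\subseteq\sP$. If $\sQ$ were a proper subset of $\sP$, then applying the forward-direction computation with $\sP$ replaced by $\sQ$ (and using $\text{Pa}(c)=\sQ$) shows the equality in (1) holds for the proper subset $\sQ$, contradicting (2). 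Therefore $\sQ=\sP$.

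The step I expect to be the main obstacle is the lower-bound (minimality) reasoning: showing rigorously that a minimum latent separator is \emph{forced} to contain a prescribed set of layer-one nodes. This requires exhibiting, for each such node, a concrete active path—through its pure children and through shared ancestors in higher layers—that can only be blocked by conditioning on that node, and verifying the argument holds uniformly over all admissible $\sS$; one must also dispatch the degenerate configurations ($|\sP|=1$, or a latent whose only measured descendants are $c$ and the chosen pure child), which are easy but need separate treatment. If preferred, Theorem~\ref{theorem:measurement} can shorten some of these path-blocking arguments by passing between latent sets and their pure descendants.
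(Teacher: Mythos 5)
Your proposal follows essentially the same route as the paper's proof: translate the rank conditions into minimum d-separating latent sets via Theorem~\ref{theorem:graphical}, show that each $z_i\in\sP$ is forced into any separator through its pure children while $\sP$ itself suffices once $c$ is adjoined, and rule out proper subsets via condition (2) by re-running the forward argument with $\text{Pa}(c)$ — in fact you make the minimality/forcing steps more explicit than the paper does. One minor caution: in the forward direction you treat every admissible $\sS$ as consisting only of pure children of members of $\sP$, and for condition (2) you claim failure of the equality for \emph{any} admissible $\sS'$, whereas the statement allows $\sS$ to contain other measured variables and only an existential witness is needed for a proper subset; these are the same simplifications the paper's own proof makes and do not affect the substance of the argument.
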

\renewcommand{\thelemma}{\arabic{lemma}}
\begin{proof}
We will prove both directions of the if and only if statement.

($\Rightarrow$) Suppose $c$ is a child of exactly the variables in $\sP$.

Let $\sS \subseteq \sX$ be any set such that $|\sS \cap \text{Ch}(z_i)| = 1$ for each $z_i \in \sP$, and let $\sT = \sX \setminus (\sS \cup \{c\})$. 

By the structure of the graph, $\sP$ d-separates $\sS$ from $\sT$, and $\sP$ also d-separates $\sS \cup \{c\}$ from $\sT$. This is because $c$ is a child of all nodes in $\sP$, so including it with $\sS$ doesn't create any new paths to $\sT$ that aren't blocked by $\sP$. Therefore, we have \(r(\sS, \sT) = r(\sS \cup \{c\}, \sT) = |\sP|\).

This satisfies condition (1). Condition (2) is satisfied because no proper subset of $\sP$ contains all parents of $c$, so no proper subset of $\sP$ can d-separate $\sS \cup \{c\}$ from $\sT$.

($\Leftarrow$) Now suppose conditions (1) and (2) hold. We will prove that $c$ is a child of exactly the variables in $\sP$.

First, we show that $\sP$ d-separates $c$ from all other variables in $\sX$ not in $\sS \cup \{c\}$. 

Let $\sS$ be as defined in condition (1), and $\sT = \sX \setminus (\sS \cup \{c\})$. The equality in condition (1) implies:

\[r(\sS \cup \{c\}, \sT) = |\sP|\]

Note that, \(\sP\) d-separates \(\sS\) and \(\sT\). Hence, \(r(\sS, \sT) = |\sP|\). Any set d-separating $\sS \cup \{c\}$ from $\sT$ must contain \(\sP\). Therefore, \(r(\vs \cup \{c\}, \vt) \geq |\sP|\). However, since they are equal, $\sP$ d-separates $c$ from $\sT$

This implies that the parents of $c$ must be a subset of $\sP$, as any path from $c$ to $\sT$ not going through $\sP$ would violate the d-separation.

Now, condition (2) states that this equality doesn't hold for any proper subset of $\sP$. This means that every variable in $\sP$ is necessary for the d-separation. If any variable in $\sP$ were not a parent of $c$, then we could remove it and still maintain the d-separation, contradicting condition (2).

Therefore, $c$ must be a child of exactly the variables in $\sP$.
\end{proof}

\subsection{Proof of Lemma~\ref{lemma:independent_child}}
\renewcommand{\thelemma}{\ref{lemma:independent_child}}
\begin{lemma}
Let $\sG$ be a graph satisfying Conditions~\ref{condition:structural conditions}. A measured variable $c$ has no parent if and only if $r(\{c\},\sX\setminus\{c\}) = 0$.
\end{lemma}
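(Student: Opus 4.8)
The plan is to reduce the claim to a purely graphical statement via Theorem~\ref{theorem:graphical}, and then dispatch the graphical statement with elementary d-separation reasoning. Recall that $r(\{c\},\sX\setminus\{c\})$ denotes the rank of the Jacobian of $\mathbb{E}[\,\cdot\mid c\,]$. Applying Theorem~\ref{theorem:graphical} with the two measured sets $\{c\}$ and $\sX\setminus\{c\}$ (both nonempty, and $0$ is strictly below both cardinalities), the equality $r(\{c\},\sX\setminus\{c\}) = 0$ holds if and only if the empty set of latent variables d-separates $\{c\}$ from $\sX\setminus\{c\}$ in $\gG$; that is, $c$ is marginally d-separated from every other measured variable. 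So it suffices to show that $c$ has no parent in $\gG$ if and only if $\{c\}$ and $\sX\setminus\{c\}$ are d-separated by the empty set.

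For the forward direction I would argue that if $c$ has no parent, then since $c\in\sX$ is a measured variable and in the hierarchical model every edge points into $\sZ$ or from $\sZ$ into $\sX$ (so measured variables are sinks, with no children), $c$ is an isolated vertex of $\gG$. Hence there is no path between $c$ and any other node whatsoever, and in particular $\{c\}$ is d-separated from $\sX\setminus\{c\}$ by the empty set, giving $r(\{c\},\sX\setminus\{c\}) = 0$.

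For the converse I would prove the contrapositive: suppose $c$ has a parent $z\in\sZ$. Since $z$ is a parent of a measured variable, $d(z,c)=1$, so by Condition~\ref{condition:structural conditions}(ii) all measured descendants of $z$ lie at distance $1$, i.e.\ $z\in\sZ^1$ and every child of $z$ is measured. By Condition~\ref{condition:structural conditions}(i), $z$ has at least two pure children, so there is a measured variable $x'\neq c$ with $z\to x'$. Then $c\leftarrow z\to x'$ is a fork at $z$, which is active when the conditioning set is empty, so $\{c\}$ and $\sX\setminus\{c\}\ni x'$ are not d-separated by $\emptyset$; hence $r(\{c\},\sX\setminus\{c\})\geq 1\neq 0$, contradicting the hypothesis. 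Thus $c$ has no parent.

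The argument is short, and I do not anticipate a real obstacle. The one point that needs care is invoking the structural conditions correctly in the converse — namely, observing that any parent of a measured variable necessarily lies in the bottom layer $\sZ^1$ and, by the two-pure-children requirement, must have a second measured child distinct from $c$, which is exactly what supplies the active fork path. Everything else is bookkeeping with the definition of d-separation and the translation provided by Theorem~\ref{theorem:graphical}.
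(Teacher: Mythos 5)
Your proof is correct and follows essentially the same route as the paper's: the forward direction is immediate from $c$ being isolated, and the converse uses Condition~\ref{condition:structural conditions}(i) to produce a second (pure, measured) child of the putative parent, yielding an active fork that forces $r \geq 1$. You are somewhat more explicit than the paper in invoking Theorem~\ref{theorem:graphical} to translate the rank condition into d-separation and in checking that the parent's children are measured, but the underlying argument is the same.
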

\renewcommand{\thelemma}{\arabic{lemma}}

\begin{proof}
We will prove both directions of the if and only if statement.

($\Rightarrow$) Suppose $c$ has no parent.

In this case, $c$ is independent of all other variables in the graph. Therefore, $r(\{c\},\sX\setminus\{c\}) = 0$.

($\Leftarrow$) Suppose $r(\{c\},\sX\setminus\{c\}) = 0$.

This means that no latent variables are needed to render $c$ independent of all other observed variables. In other words, $c$ is already independent of all other observed variables. 

Now, suppose for the sake of contradiction that $c$ has a parent $z$. By Condition~\ref{condition:structural conditions}, $z$ must have at least two pure children, one of which could be $c$, and let's call the other one $x$. Then $c$ and $x$ would be dependent through their common parent $z$, contradicting the independence of $c$ from all other observed variables.

Therefore, $c$ cannot have any parent.
\end{proof}

\subsection{Proof of Theorem~\ref{theorem:identifiability}}
\renewcommand{\thetheorem}{\ref{theorem:identifiability}}
\begin{theorem}
Let \(\mathcal{G} = (\mathcal{V}, \mathcal{E})\) be a hierarchical latent causal model satisfying Condition \ref{condition:structural conditions}. Let \(\mM\) be the binary adjacency matrix representing the structure of \(\mathcal{G}\). Let data \(\sX\) be generated according to the structural equation model defined in Equation \ref{eq:sem}. Given a function \(r(\sS,\sT)\) which outputs the minimum number of latent variables which d-separate any two sets measured sets \(\sS\) and \(\sT\), \(\mM\) is identifiable up to the permutation of the latent variables. 
\end{theorem}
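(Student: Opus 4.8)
The plan is to prove identifiability by induction on the layers of the hierarchy, using the three preceding lemmas to recover each layer from the one below it, and Theorem~\ref{theorem:measurement} to pass from a recovered layer of latent variables to the distributional quantity $r(\cdot,\cdot)$ needed to recover the next layer. The oracle $r(\sS,\sT)$ gives us, for any two measured sets, the size of the smallest latent set that d-separates them; by Theorem~\ref{theorem:graphical} this equals the rank of the Jacobian of $\mathbb{E}[\vy\mid\vx]$, so it is a function of the observed distribution alone, which is what ``identifiable'' means here.

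\textbf{Base case (recovering $\sZ^1$ and $\mM^1$).} First I would use Lemma~\ref{lemma:independent_child} to discard any measured variable $c$ with $r(\{c\},\sX\setminus\{c\})=0$; these have no latent parent and contribute an all-zero column to $\mM^1$. Among the remaining measured variables, Lemma~\ref{lemma:pure_children} identifies the maximal groups of pure children sharing a common parent: $\sS$ is such a group iff $r(\sT,\sX\setminus\sT)=1$ for every $\sT\subseteq\sS$. Each maximal pure-children group determines one variable of $\sZ^1$ (by Condition~\ref{condition:structural conditions}(i) every $z\in\sZ^1$ has at least two pure children, so every first-layer latent is detected), fixing the corresponding rows of $\mM^1$ restricted to pure children. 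For the remaining (non-pure) measured variables, Lemma~\ref{lemma:non_pure_children} pins down the exact parent set $\sP\subseteq\sZ^1$ of each such $c$: we probe, for sets $\sS$ containing exactly one pure child of each candidate $z_i$, whether $r(\sS,\sX\setminus(\sS\cup\{c\}))=r(\sS\cup\{c\},\sX\setminus(\sS\cup\{c\}))$, and take the minimal $\sP$ for which this holds. This completes $\mM^1$ and the identity of $\sZ^1$, both up to relabeling of $\sZ^1$.

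\textbf{Inductive step.} Suppose layers $\sZ^1,\dots,\sZ^{l}$ and the blocks $\mM^1,\dots,\mM^{l}$ have been recovered (up to relabeling). For each $z\in\sZ^{l}$, Condition~\ref{condition:structural conditions} guarantees a set of measured pure descendants, and the already-recovered structure tells us which measured set $\sX_z$ is d-separated from all other measured variables by $\{z\}$; more generally, for any $\sZ_A\subseteq\sZ^{l}$ we know the measured set d-separated from the rest by $\sZ_A$. Theorem~\ref{theorem:measurement} then gives $r(\sZ_A,\sZ_B)=r(\sX_{\sZ_A},\sX_{\sZ_B})$, so the oracle on measured sets furnishes an oracle on subsets of $\sZ^{l}$. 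I now re-run the base-case argument with $\sZ^{l}$ playing the role of ``measured variables'': Lemma~\ref{lemma:independent_child} flags the $z\in\sZ^{l}$ with no parent (top-layer roots), Lemma~\ref{lemma:pure_children} groups the pure children of each $z'\in\sZ^{l+1}$, and Lemma~\ref{lemma:non_pure_children} fixes parent sets of the non-pure members of $\sZ^{l}$, yielding $\sZ^{l+1}$ and $\mM^{l+1}$. The recursion terminates when Lemma~\ref{lemma:independent_child} reports that every variable in the current layer is a root, i.e.\ the block form \eqref{equation:blockform} is exhausted; concatenating the recovered blocks gives $\mM$ in the block-upper-triangular form, determined up to a permutation within each layer.

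\textbf{Main obstacle.} The delicate point is verifying that the preconditions of the lemmas are actually met at every stage of the recursion — in particular that, after recovering layer $\sZ^{l}$ and before recovering $\sZ^{l+1}$, the pair $(\sZ^{l},\sZ^{l+1})$ together with the induced $r(\cdot,\cdot)$ on $\sZ^{l}$ forms a structure to which Lemmas~\ref{lemma:pure_children}--\ref{lemma:independent_child} apply with the same guarantees. This requires that Condition~\ref{condition:structural conditions} is ``inherited'' one layer up: each $z'\in\sZ^{l+1}$ has at least two pure children within $\sZ^{l}$ (immediate from Condition~\ref{condition:structural conditions}(i)), and the equidistance/path-length constraint holds at the latent level (immediate from the block form \eqref{equation:blockform}). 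One also has to check that the measured surrogate sets $\sX_z$ are nonempty and correctly identifiable from the already-recovered lower blocks, and that when some rows of $\mM$ are zero (the ``number of latents is learned'' case) the recursion still closes off cleanly; I expect these to be routine given the block structure, but they are where the bookkeeping must be done carefully.
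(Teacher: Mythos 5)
Your proposal is correct and follows essentially the same route as the paper's proof: recover the first layer via Lemmas~\ref{lemma:pure_children}, \ref{lemma:non_pure_children}, and \ref{lemma:independent_child}, use Theorem~\ref{theorem:measurement} to turn pure descendants into surrogates so that the oracle $r(\cdot,\cdot)$ applies to latent layers, check that Condition~\ref{condition:structural conditions} is inherited by the reduced graph, and recurse/induct over layers. The only difference is cosmetic — you build upward layer by layer while the paper inducts on the total number of layers and peels off the bottom — and the ``main obstacle'' you flag is exactly the point the paper handles with its brief verification that the subgraph $\mathcal{G}'$ still satisfies the structural condition.
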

\renewcommand{\thetheorem}{\arabic{theorem}}

\begin{proof}
    
We prove Theorem~\ref{theorem:identifiability} using the principle of mathematical induction on the number of layers. We begin with three lemmas that we require for our proof.

We prove this theorem by induction on the number of layers in the hierarchical latent causal model.

\textbf{Base case:} Let $\mathcal{G} = (\mathcal{V}, \mathcal{E})$ be a hierarchical latent causal model with one latent layer, i.e., $\sZ = \sZ^1$.

We begin by identifying the pure children of each latent variable in $\sZ^1$. By Lemma \ref{lemma:pure_children}, for any set of measured variables $\sS \subseteq \sX$, if $r(\sT,\sX\setminus\sT) = 1$ for all $\sT \subseteq \sS$, then all variables in $\sS$ are pure children of the same parent. For all \(|\sT| = 1\), this trivially holds. Using Theorem~\ref{theorem:graphical}, we can exhaustively check this condition for all subsets \(|\sT|>1\) of $\sX$ to identify all sets of pure children. 

Next, we identify the parents of non-pure children using Lemma \ref{lemma:non_pure_children}. For each observed variable $c \in \sX$ that is not identified as a pure child in the previous step, we determine its parents by verifying the conditions stated in Lemma \ref{lemma:non_pure_children} for all possible subsets of $\sZ^1$. For most cases, \(|\sS \cup \{c\}|, |\sX \setminus (\sS \cup \{c\})| > |\sP|\) which allows us to use Theorem~\ref{theorem:graphical}. For cases, where this does not hold, it implies \(\sP = \sZ^1\). In this case, Lemma~\ref{lemma:non_pure_children} can be applied for all other subsets of \(\sZ^1\) and if none of them satisfy the conditions, the set of parents has to be \(\sZ^1\).

Through these two steps, we fully identify the structure between $\sZ^1$ and $\sX$, thus recovering the binary adjacency matrix $\mM$ for the one-layer model.

\textbf{Inductive step:} Assume the theorem holds for all models with $L-1$ layers, where $L > 1$. We will prove it holds for models with $L$ layers.

Let $\mathcal{G} = (\mathcal{V}, \mathcal{E})$ be a hierarchical latent causal model with $L$ layers. We first identify the structure between $\sZ^1$ and $\sX$ using the same process as in the base case, applying Lemmas \ref{lemma:pure_children} and \ref{lemma:non_pure_children}.

Let $\mathcal{G}' = (\mathcal{V}', \mathcal{E}')$ be the subgraph of $\mathcal{G}$ obtained by removing all measured variables \(\sX\). We claim that $\mathcal{G}'$ satisfies Condition \ref{condition:structural conditions}. Each latent variable in $\sZ^2$ has at least two pure children in $\mathcal{G}'$, and these belong to \(\sZ^1\). Moreover, the equal path length condition is preserved since the path from any latent to any variable in \(\sZ^1\) is one less than the path to any variable in \(\sX\). Some variables \(\sZ^1\) may not have any parent. We can identify those using Lemma~\ref{lemma:independent_child}.

To determine the d-separation relations between variables in $\sZ^1$, we utilize Theorem \ref{theorem:measurement}. For any two subsets $\sZ_X, \sZ_Y \subseteq \sZ^1$, let $\sX$ and $\sY$ be sets of their respective pure children. By Theorem \ref{theorem:measurement}, we have $r(\sZ_X, \sZ_Y) = r(\sX, \sY)$, allowing us to infer the d-separation relations within $\sZ^1$.

Now, $\mathcal{G}'$ is a hierarchical latent causal model with $L-1$ layers that satisfies the conditions of the theorem. By the induction hypothesis, we can identify the structure of $\mathcal{G}'$, recovering the corresponding part of the binary adjacency matrix $\mM$.

Therefore, by the principle of mathematical induction, the theorem holds for hierarchical latent causal models with any number of layers.

\end{proof}

\subsection{Proof of Lemma~\ref{lemma:twopurechildren}}
\renewcommand{\thelemma}{\ref{lemma:twopurechildren}}
\begin{lemma}
Consider a DAG \(\mathcal{G}\) with a binary adjacency matrix \(\mM\). \(\mathcal{G}\) satisfies Condition~\ref{condition:structural conditions} (i) if and only if:
\begin{equation}
\| \mM_{i,:} \odot \prod_{j \neq i} (1 - \mM_{j,:}) \|_1 \geq 2 \quad \forall i.
\end{equation}
\end{lemma}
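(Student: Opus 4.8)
The plan is to reduce the statement to a purely combinatorial identity: for each latent-variable index $i$, the scalar $\big\| \mM_{i,:} \odot \prod_{j \neq i}(1-\mM_{j,:})\big\|_1$ equals the number of pure children of node $i$ in $\mathcal{G}$. Once this is established, the claimed equivalence is immediate, since Condition~\ref{condition:structural conditions}~(i) asserts exactly that this number is at least $2$ for every latent variable; so ``$\mathcal{G}$ satisfies Condition~\ref{condition:structural conditions}~(i)'' and ``$\big\|\mM_{i,:}\odot\prod_{j\neq i}(1-\mM_{j,:})\big\|_1 \geq 2$ for all $i$'' become word-for-word the same assertion.

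To prove the identity I would examine the vector $\vv^{i} := \mM_{i,:}\odot\prod_{j\neq i}(1-\mM_{j,:})$ coordinate by coordinate. Recall the rows of $\mM$ are indexed by latent variables and its columns by $\sZ\cup\sX$, with $\mM_{i,k}=1$ iff there is an edge from node $i$ to node $k$. Fixing a column $k$: since every parent set satisfies $\text{Pa}(\cdot)\subseteq\sZ$, the factor $\prod_{j\neq i}(1-\mM_{j,k})$ equals $1$ precisely when no node $j\neq i$ is a parent of $k$ (the index $j$ ranges over all latent variables, hence over \emph{all} possible parents of $k$), and $0$ otherwise. Hence $\vv^{i}_{k}=\mM_{i,k}\cdot\prod_{j\neq i}(1-\mM_{j,k})=1$ iff $i\to k$ is an edge and $i$ is the unique parent of $k$, i.e.\ $\text{Pa}(k)=\{i\}$; by the definition of pure children this says exactly that $k$ is a pure child of $i$, and $\vv^{i}_{k}=0$ otherwise. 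Since $\vv^{i}\in\{0,1\}^{|\sZ|+|\sX|}$, its $\ell_1$ norm equals the size of its support, which is therefore $|\{k : k \text{ is a pure child of } i\}|$. Both directions of the ``if and only if'' now follow from this equality of integers.

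I do not expect a genuine mathematical obstacle here — the content is a translation between the algebraic encoding and the graph semantics — so the only thing to be careful about is the index bookkeeping. In particular, the product $\prod_{j\neq i}(1-\mM_{j,:})$ is taken over \emph{rows} of $\mM$ but must be read \emph{entrywise per column} as the predicate ``$k$ has no parent other than $i$'', and it is the hypothesis $\text{Pa}(\cdot)\subseteq\sZ$ that upgrades this from ``$k$ has no latent parent other than $i$'' to genuine purity. One minor remark I would include: as written the lemma forces every latent row of $\mM$ to be nonzero, matching Condition~\ref{condition:structural conditions}~(i) literally, whereas the relaxed constraint actually used in the optimization, $\|\mM_{i,:}\|_1\big(\|\vv^{i}\|_1-2\big)\geq 0$, additionally tolerates all-zero rows (so the effective number of active latent variables can shrink); this does not affect the proof of the lemma as stated.
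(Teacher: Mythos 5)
Your proposal is correct and follows essentially the same route as the paper: both arguments read the $k$-th entry of $\mM_{i,:}\odot\prod_{j\neq i}(1-\mM_{j,:})$ as the indicator that $k$ is a pure child of $i$, so the $\ell_1$ norm counts pure children and the equivalence with Condition~\ref{condition:structural conditions}~(i) is immediate. Your added remarks on the role of $\text{Pa}(\cdot)\subseteq\sZ$ and on the relaxed constraint tolerating zero rows are accurate but not needed for the lemma itself.
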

\renewcommand{\thelemma}{\arabic{lemma}}
\begin{proof}
To prove this lemma, we need to demonstrate that the given condition on the adjacency matrix \(\mM\) holds if and only if each latent variable has at least two pure children, as stated in Condition~\ref{condition:structural conditions}.

Let's first analyze the term inside the \( \ell_1 \) norm:
\[
\boldsymbol{M}_{i,:} \odot \prod_{j \neq i} (1 - \boldsymbol{M}_{j,:})
\]
The \(k\)th element of this vector is given by:
\[
M_{ik} \cdot \prod_{j \neq i} (1 - M_{jk})
\]
This product equals \(1\) if and only if \( M_{ik} = 1 \) and \( M_{jk} = 0 \) for all \( j \neq i \). In other words, this term is \(1\) if and only if the vertex \(v_k\) is a child of \(v_i\) and not a child of any other \(v_j\) (\(j \neq i\)). Hence, this product identifies whether \(v_k\) is a pure child of \(v_i\).

The \( \ell_1 \) norm, \( \left\| \cdot \right\|_1 \), sums up all these terms, meaning it counts the number of pure children of \(v_i\).

Now, according to Condition~\ref{condition:structural conditions}, each latent variable \(v_i\) must have at least \(2\) pure children. Therefore, the condition:
\[
\| \boldsymbol{M}_{i,:} \odot \prod_{j \neq i} (1 - \boldsymbol{M}_{j,:}) \|_1 \geq 2 \quad \forall i
\]
ensures that each \(v_i\) has at least \(2\) pure children, satisfying Condition~\ref{condition:structural conditions}.

Conversely, if each latent variable \(v_i\) has at least \(2\) pure children, the sum \( \left\| \boldsymbol{M}_{i,:} \odot \prod_{j \neq i} (1 - \boldsymbol{M}_{j,:}) \right\|_1 \) must be at least \(2\) for each \(i\), proving the equivalence.

Thus, the lemma is proven.
\end{proof}

\section{Experimental Details}
\label{app:experimentaldetails}
In this section, we provide a detailed explanation of our experimental setup, model architecture, training procedure, hyperparameter settings, and evaluation metrics used in the paper. 
\subsection{Synthetic Data}
\label{app:syntheticdata}
We follow the same procedure and hyperparameter values for all four graphs.

\paragraph{Model architecture:}
We use a VAE to learn our causal structure as described in Section~\ref{section:methodology}. The model consists of several components. The VAE encoder is a two-hidden-layer fully connected neural network with 64 and 32 hidden neurons, followed by ReLU activations. The encoder outputs both the mean ($\mu$) and the log variance ($\log\sigma^2$) for the latent variables. For the decoder, each function in Equation~\ref{eq:sem_withM} is modeled using a one-hidden-layer fully connected neural network with 32 hidden neurons. The masking matrices have shape $\left\lfloor \frac{|\sX|}{2^{i+1}} \right\rfloor \times \left\lfloor \frac{|\sX|}{2^i} \right\rfloor$ since Condition~\ref{condition:structural conditions} allows a maximum of $\left\lfloor \frac{|\sX|}{2^i} \right\rfloor$ latent variables in $\sZ^i$. We use ReLU activation for all neural networks.

\paragraph{Training Procedure:}
We use mean squared error as the reconstruction loss. We calculate the $\mathcal{L}_\text{ind}(\boldsymbol{\epsilon}) = \text{KL}(p(\boldsymbol{\epsilon}) \| \prod_j \prod_i p(\varepsilon_{z^i_j}) \prod_j p(\varepsilon_{x_j}))$ using a method similar to MINE (\cite{belghazi2018mutual}). We warm up the MINE model for 100 epochs before training. Our model is trained using the Adam optimizer with a learning rate of $1 \times 10^{-3}$ for 400 epochs. We use a batch size of 32. For Gumbel-Softmax, we set the temperature to 1.0 throughout training. The coefficient for $\mathcal{L}_\text{ind}(\boldsymbol{\epsilon})$ is set to 10. $\lambda_2$ is set to 1e-4 and $\lambda_3$ is set as $10^{-3+\frac{\text{epoch}}{100}}$. Since the training objective is non-convex, we may get suboptimal solutions \cite{ng2024structure}. Therefore, we run the model with 10 random initializations and select the one with the lowest loss.

\paragraph{Baselines:}
For \cite{kong2023identification}, we use the code shared with us by the authors. For \cite{huang2022latent} and \cite{xie2020generalized}, we use the publicly available implementation. Default hyperparameters were used for all methods. We attempted to use FOFC \cite{kummerfeld2016causal} as a baseline, however an error occurred for our data since it does not satisfy the conditions for their code to run.

\paragraph{Evaluation Metrics:}
We evaluate our model using the Structural Hamming Distance (SHD) and F1 score between the learned adjacency matrix and the ground truth. We train each run three times for different seed values and report the mean and standard deviation across the three runs of two graphs. Since the latent graph may only be recovered up to a permutation of the latent variables, we calculate the SHD over all possible $|\sZ|!$ permutations of the estimated graph and select the lowest SHD. Since the evaluation time is $O(n!)$ in the number of latent variables, evaluating methods becomes intractable for very large graphs. The time was calculated in seconds. For the standard deviation of time, we report the mean of the standard devation of each graph since time can vary a lot based on the size of the graph.

\begin{figure}[htbp]
\centering
\begin{subfigure}[t]{0.48\textwidth}
\centering
\begin{tikzpicture}[scale=0.6, ->,>=stealth,shorten >=1pt,auto,node distance=2cm,
                    thick,main node/.style={circle,draw,fill=gray!30,font=\sffamily\small\bfseries}, minimum size=0.4cm]
  \node[main node] (Z1) at (0,0) {$z_1$};
  \node[main node] (Z2) at (-2,-2) {$z_2$};
  \node[main node] (Z3) at (2,-2) {$z_3$};
  \node[main node] (X1) at (-3,-4) [fill=none] {$x_1$};
  \node[main node] (X2) at (-1,-4) [fill=none] {$x_2$};
  \node[main node] (X3) at (1,-4) [fill=none] {$x_3$};
  \node[main node] (X4) at (3,-4) [fill=none] {$x_4$};
  \path[every node/.style={font=\sffamily\tiny}]
    (Z1) edge node {} (Z2)
         edge node {} (Z3)
    (Z2) edge node {} (X1)
         edge node {} (X2)
    (Z3) edge node {} (X3)
         edge node {} (X4);
\end{tikzpicture}
\caption{}
\label{fig:scenario-a}
\end{subfigure}
\hfill
\begin{subfigure}[t]{0.48\textwidth}
\centering
\begin{tikzpicture}[scale=0.6, ->,>=stealth,shorten >=1pt,auto,node distance=1.5cm,
                    thick,main node/.style={circle,draw,fill=gray!30,font=\sffamily\small\bfseries}, minimum size=0.4cm]
  \node[main node] (Z1) at (0,0) {$z_1$};
  \node[main node] (Z2) at (-4,-2) {$z_2$};
  \node[main node] (Z3) at (0,-2) {$z_3$};
  \node[main node] (Z4) at (4,-2) {$z_4$};
  \node[main node] (X1) at (-5,-4) [fill=none] {$x_1$};
  \node[main node] (X2) at (-3,-4) [fill=none] {$x_2$};
  \node[main node] (X3) at (-1,-4) [fill=none] {$x_3$};
  \node[main node] (X4) at (1,-4) [fill=none] {$x_4$};
  \node[main node] (X5) at (3,-4) [fill=none] {$x_5$};
  \node[main node] (X6) at (5,-4) [fill=none] {$x_6$};
  \path[every node/.style={font=\sffamily\tiny}]
    (Z1) edge node {} (Z2)
         edge node {} (Z3)
         edge node {} (Z4)
    (Z2) edge node {} (X1)
         edge node {} (X2)
    (Z3) edge node {} (X3)
         edge node {} (X4)
    (Z4) edge node {} (X5)
         edge node {} (X6);
\end{tikzpicture}
\caption{}
\label{fig:scenario-b}
\end{subfigure}

\begin{subfigure}[t]{0.48\textwidth}
\centering
\begin{tikzpicture}[scale=0.6, ->,>=stealth,shorten >=1pt,auto,node distance=2cm,
                    thick,main node/.style={circle,draw,fill=gray!30,font=\sffamily\small\bfseries}, minimum size=0.4cm]
  \node[main node] (Z1) at (0,0) {$z_1$};
  \node[main node] (Z2) at (-3,-2) {$z_2$};
  \node[main node] (Z3) at (3,-2) {$z_3$};
  \node[main node] (X1) at (-4,-4) [fill=none] {$x_1$};
  \node[main node] (X2) at (-2,-4) [fill=none] {$x_2$};
  \node[main node] (X3) at (0,-4) [fill=none] {$x_3$};
  \node[main node] (X4) at (2,-4) [fill=none] {$x_4$};
  \node[main node] (X5) at (4,-4) [fill=none] {$x_5$};
  \path[every node/.style={font=\sffamily\tiny}]
    (Z1) edge node {} (Z2)
         edge node {} (Z3)
    (Z2) edge node {} (X1)
         edge node {} (X2)
         edge node {} (X3)
    (Z3) edge node {} (X3)
         edge node {} (X4)
         edge node {} (X5);
\end{tikzpicture}
\caption{}
\label{fig:scenario-c}
\end{subfigure}
\hfill
\begin{subfigure}[t]{0.48\textwidth}
\centering
\begin{tikzpicture}[scale=0.6, ->,>=stealth,shorten >=1pt,auto,node distance=2cm,
                    thick,main node/.style={circle,draw,fill=gray!30,font=\sffamily\small\bfseries}, minimum size=0.4cm]
  \node[main node] (Z1) at (-1,0) {$z_1$};
  \node[main node] (Z2) at (-3,-2) {$z_2$};
  \node[main node] (Z3) at (1,-2) {$z_3$};
  \node[main node] (Z4) at (4,-2) {$z_4$};
  \node[main node] (X1) at (-4,-4) [fill=none] {$x_1$};
  \node[main node] (X2) at (-2.5,-4) [fill=none] {$x_2$};
  \node[main node] (X3) at (0.5,-4) [fill=none] {$x_3$};
  \node[main node] (X4) at (2,-4) [fill=none] {$x_4$};
  \node[main node] (X5) at (3.5,-4) [fill=none] {$x_5$};
  \node[main node] (X6) at (5,-4) [fill=none] {$x_6$};
  \node[main node] (X7) at (-1,-4) [fill=none] {$x_7$};
  \path[every node/.style={font=\sffamily\tiny}]
    (Z1) edge node {} (Z2)
         edge node {} (Z3)
    (Z2) edge node {} (X1)
         edge node {} (X2)
         edge node {} (X7)
    (Z3) edge node {} (X3)
         edge node {} (X4)
         edge node {} (X7)
    (Z4) edge node {} (X5)
         edge node {} (X6)
         edge node {} (X7);
\end{tikzpicture}
\caption{}
\label{fig:scenario-d}
\end{subfigure}
\caption{Ground truth causal graphs for Synthetic Experiments. (a) and (b) are trees (only one path between any two nodes). (c) and (d) allow v-structures (multiple paths between two nodes)}
\label{fig:latent-causal-diagrams}
\end{figure}
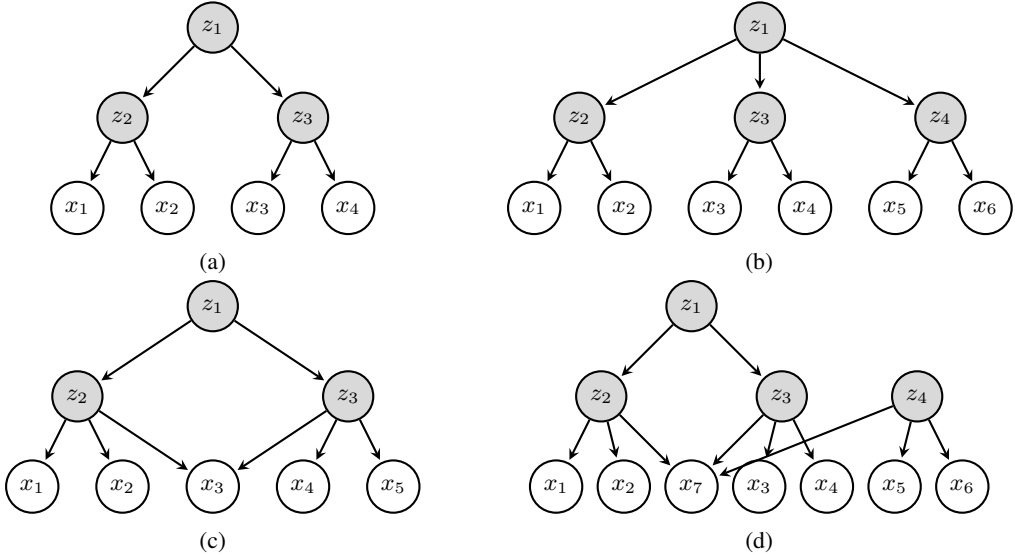

{\color{black}
\paragraph{Additional Experiments:}
To further validate our methodology, we randomly sample 10 causal structures, applying our method alongside the baselines. We generate graphs with the number of variables ranging from 7 to 13, ensuring the structures satisfy Condition~\ref{condition:structural conditions}. Since the latent graph can only be recovered up to a permutation of the latent variables, we compute the Structural Hamming Distance (SHD) over all possible $|\sZ|!$ permutations of the estimated graph and select the permutation with the lowest SHD. Due to the factorial complexity, $O(|\sZ|!)$, of this evaluation in the number of latent variables, the computation becomes intractable for large graphs.

We report SHD and F1 scores in Table~\ref{tab:additional_random_experiments}. We see signficant improvement over baselines, in line with Table~\ref{tab:expanded-comparison}.
}

\begin{table}[t]
\caption{{\color{black}Performance of latent hierarchical causal discovery methods on additional randomly generated graphs}}
\label{tab:additional_random_experiments}
\begin{center}
\resizebox{\textwidth}{!}{
\begin{tabular}{cccccccccccccccc}
\toprule
\multicolumn{2}{c}{{\color{black} Ours}} & \multicolumn{2}{c}{{\color{black} KONG}} & \multicolumn{2}{c}{{\color{black} HUANG}} & \multicolumn{2}{c}{{\color{black} GIN}} & \multicolumn{2}{c}{{\color{black} DeCAMFounder}} \\
\cmidrule(lr){1-2} \cmidrule(lr){3-4} \cmidrule(lr){5-6} \cmidrule(lr){7-8} \cmidrule(lr){9-10}
{\color{black} SHD $\downarrow$} & {\color{black} F1 $\uparrow$} & {\color{black} SHD $\downarrow$} & {\color{black} F1 $\uparrow$} & {\color{black} SHD $\downarrow$} & {\color{black} F1 $\uparrow$} & {\color{black} SHD $\downarrow$} & {\color{black} F1 $\uparrow$} & {\color{black} SHD $\downarrow$} & {\color{black} F1 $\uparrow$} \\
\midrule
{\color{black} \textbf{1.00}} & {\color{black} \textbf{0.94}} & {\color{black} 4.60} & {\color{black} 0.68} & {\color{black} 5.50} & {\color{black} 0.65} & {\color{black} 7.60} & {\color{black} 0.05} & {\color{black} 13.8 } & {\color{black} 0.00} \\
{\color{black} (1.41)} & {\color{black} (0.08)} & {\color{black} (1.58)} & {\color{black} (0.09)} & {\color{black} (3.56)} & {\color{black} (0.12)} & {\color{black} (1.80)} & {\color{black} (0.15)} & {\color{black} (2.30)} & {\color{black} (0.00)} \\
\end{tabular}
}
\end{center}
{\color{black} \footnotesize{Note: SHD = Structural Hamming Distance (lower is better $\downarrow$). F1 scores range from 0 to 1 (higher is better $\uparrow$). Standard deviations are reported in parentheses.}}
\end{table}

{\color{black}
\paragraph{Evolution of loss:}
In Figure~\ref{fig:evolution}, we plot the loss versus epochs for one of our training runs corresponding to the causal structure in Figure~\ref{fig:scenario-a}. Metrics such as SHD and F1 are omitted, as they can only be computed once the mask values converge to either 0 or 1.

We see the structural regularization loss decreases over training and converges to zero, enforcing Condition~\ref{condition:structural conditions}. The ELBO loss initially decreases but then slightly increases as we enforce the structural constraint.

\begin{figure}[!ht]
    \centering
    \begin{subfigure}[t]{0.48\textwidth}
        \centering
        \begin{tikzpicture}
            \begin{axis}[
                width=\textwidth,
                height=0.75\textwidth,
                xlabel={Epoch},
                ylabel={Loss},
                grid=major,
                xmin=0,
                xmax=400,
                ymin=0,
                ymax=1,
                title style={yshift=0pt},
                ylabel style={yshift=0pt},
                xlabel style={yshift=0pt},
                title={ELBO Loss},
                scaled y ticks=false,
                ylabel near ticks,
                xlabel near ticks,
                xtick align=center,
                ytick align=center,
                every axis plot/.append style={line width=1pt},
                ]
                
                \addplot[red, mark=none] coordinates {
                    (1, 0.8842) (21, 0.1882) (41, 0.1084) (61, 0.1085) (81, 0.0711)
                    (101, 0.0931) (121, 0.1010) (141, 0.1006) (161, 0.0886) (181, 0.1228)
                    (201, 0.1238) (221, 0.1383) (241, 0.1709) (261, 0.1903) (281, 0.2203)
                    (301, 0.2706) (321, 0.2928) (381, 0.2803)
                };
                
            \end{axis}
        \end{tikzpicture}
        \caption{ELBO Loss}
    \end{subfigure}
    \hfill
    \begin{subfigure}[t]{0.48\textwidth}
        \centering
        \begin{tikzpicture}
            \begin{axis}[
                width=\textwidth,
                height=0.75\textwidth,
                xlabel={Epoch},
                ylabel={Loss},
                grid=major,
                xmin=0,
                xmax=400,
                ymin=0,
                ymax=9,
                title style={yshift=0pt},
                ylabel style={yshift=0pt},
                xlabel style={yshift=0pt},
                title={Regularization Losses},
                scaled y ticks=false,
                ylabel near ticks,
                xlabel near ticks,
                xtick align=center,
                ytick align=center,
                every axis plot/.append style={line width=1pt},
                ]
                
                \addplot[purple, mark=none] coordinates {
                    (1, 3.1266) (21, 1.8837) (41, 2.2105) (61, 2.2543) (81, 1.9771)
                    (101, 1.5991) (121, 1.4565) (141, 1.4133) (161, 1.3587) (181, 1.2026)
                    (201, 1.0245) (221, 0.8397) (241, 0.6981) (261, 0.5587) (281, 0.3469)
                    (301, 0.1445) (321, 0.0352) (361, 0.0205) (381, 0.0000)
                };
                
                \addplot[brown, mark=none] coordinates {
                    (1, 7.5049) (21, 7.4274) (41, 8.0571) (61, 8.1139) (81, 7.6887)
                    (101, 7.1010) (121, 7.0360) (141, 7.0235) (161, 6.9159) (181, 6.9163)
                    (201, 7.1254) (221, 7.0699) (241, 6.9490) (261, 6.7650) (281, 6.5329)
                    (301, 6.2516) (361, 6.1101) (381, 6.0000)
                };
                
            \end{axis}
        \end{tikzpicture}
        \caption{Regularization Losses - Purple line shows the structural regularizer and brown line shows the L1 loss}
    \end{subfigure}
    \caption{Evolution of different loss components during training}
\label{fig:evolution}
\end{figure}
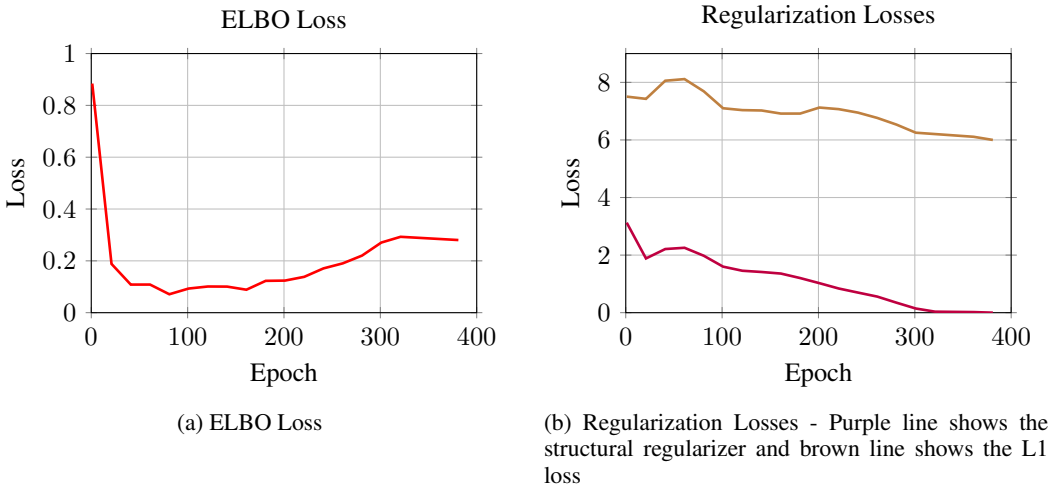
}
\subsection{Image Data}
\label{app:imagedata}
\paragraph{Model Architecture:}
The proposed Hierarchical VAE model consists of a convolutional encoder, a hierarchical latent structure, and a transposed convolutional decoder. The convolutional encoder has two convolution layers  (32 and 64 filters, 3x3 kernels, stride 2) followed by a fully connected layer. Each structural equation (Equation~\ref{eq:sem_withM}) is modeled using a neural network with three hidden layer. The first two layers are shared to reduce the number of parameters. The convolutional decoder reconstructs images from the final latent layer. The decoder consists of a fully connected layer that maps the final latent representation of dimension 49 to a 1568-dimensional space. This output is then reshaped to a 32-channel 7x7 feature map. We then have two transposed convolutional layers with 32 and 16 filters respectively (both using 3x3 kernels, stride 2, padding 1, and output padding 1), and a final transposed convolutional layer that reconstructs the image. We initialize \(\mM\) with three layers with 10, 20 and 49 nodes in each of the three layers. We use ReLU activation for all neural networks.

\paragraph{Training:}
We train the model on a subset of 10,000 images. The model was trained for 300 epochs. We use batch size of 256 and do not use MINE to enforce independence between the exogenous variables. We find it makes little difference to the final output. The temperature for gumbel softmax starts at 100 and exponentially decreases to 0.1 at 120 epochs and then stays constant. \(\lambda_2\) is 0.03 and \(\lambda_3\) is exponentially increased from \(10^{-3}\) to \(10\) at 100 epochs and then stays constant. We used a Adam optimizer with learning rate 1e-3.

\paragraph{Visualization:}
Figure~\ref{fig:whole_mnist_graph} displays the complete causal graph constructed from the MNIST dataset. Note that due to non-convexity, we could not achieve zero loss for the pure children constraint term. Hence, the learnt graph does not exactly satisfy Conditions~\ref{condition:structural conditions}. To interpret the semantics of each latent feature, we perform targeted interventions designed to isolate their individual effects. Specifically, for each latent variable at the topmost layer, we set its ancestral nodes to values 5 standard deviations above or below their means, while keeping the remaining variables at their mean values. We then intervene on the current node by setting its value to the mean, effectively neutralizing its direct influence, and observe the resulting changes in the generated images. This procedure allows us to visualize and understand the specific contribution of each latent variable to the overall image structure.

By comparing the images before and after the intervention, we can discern the unique effects attributable to each latent feature. Table~\ref{tab:mnist_layers} presents these visualizations for each feature. For each feature, the first image shows the output when the top latent variable is set 5 standard deviations below the mean; the second image shows the result when, in addition, the target feature is intervened upon and set to the mean; the third image displays the output when the top latent variable is set 5 standard deviations above the mean; and the fourth image shows the result when the target feature is intervened upon and set to the mean under this condition.

For Figure~\ref{fig:causalgraphfor3}, we adopt a different methodology to visualize the influence of latent variables. Starting from the topmost layer of the causal graph, we traverse downward through each subsequent layer. At each node, we randomly assign its value to be either five standard deviations above or below its mean. This stochastic intervention allows us to observe the cumulative effects of these variations as they propagate through the graph.

\paragraph{Discussion on Learned MNIST Graph:}{\color{black}
As detailed above, Table~\ref{tab:mnist_layers} provides visualizations for all latent features. The contrast between the first and second images (or between the third and fourth images) illustrates the concept represented by each feature. We observe that the hierarchical structure of the learned latent variables effectively captures features at different levels of abstraction. Specifically, the top layer encodes global features (e.g., digit-level information), the middle layer encodes intermediate features, and the bottom layer captures local characteristics.

For instance, consider \(z^0_8\): the difference between the first and second images reveals that this feature represents the digit 9. A positive value of \(z^0_8\) correlates with the presence of the digit 9. The children of \(z^0_8\) in the causal graph, \(z^1_6\) and \(z^1_{13}\), further decompose this concept into its components. The difference between their respective visualizations shows that \(z^1_6\) represents the straight line in the lower half of the digit 9, while \(z^1_{13}\) captures the circular component in the upper half, along with the overall thickness of the digit. 

Continuing this decomposition, the children of \(z^1_6\) and \(z^1_{13}\), such as \(z^2_4\), \(z^2_{10}\), \(z^2_{11}\), and \(z^2_{16}\), represent finer-grained local features. This hierarchical organization aligns with the semantics of the images and supports the interpretability of the latent representations.

}

\paragraph{CMNIST details:}
For the coloured MNIST dataset, we have around 12,000 training samples and 2,000 test samples. Since we do not aim to visualize the images, we downsample them to 14x14 and train our model. We train our model for 50 epochs with early stopping with patience 3. After training the latent hierarchical model, we train a logistic regression classifier to predict the digit from the latent representations. The coefficient of the L1 regularization is 10. For all the baselines, we train the model for 50 epochs with early stopping with patience 3. For all models, we used a Adam optimizer with learning rate 1e-3.

{
\color{black}
\paragraph{CelebA details:}
For the CelebA dataset, we consider the task of predicting whether a face image has blonde hair. Since gender and hair color are highly correlated, models often use gender to predict hair color. In this task, we reverse the correlation of gender and color and test the transferability of representations.

We use approximately 160,000 samples for training. For the test set, we evaluate the model exclusively on two groups: blonde males and non-blonde females. Since visualization of the images is not a focus of this work, we downsample all images to a resolution of 64×64. 

The model is trained for 50 epochs, employing early stopping with a patience of 3 epochs. After training the latent hierarchical model, we fit a logistic regression classifier on the latent representations to predict the target attribute. The coefficient for the L1 regularization in the logistic regression is set to 10. For all baseline models, we adopt the same training setup of 50 epochs with early stopping (patience = 3). We use the Adam optimizer with a learning rate of \(10^{-3}\) for all models.

\paragraph{CelebA Results:}
The results are available in Table~\ref{tab:celeba}. Our model achieves a test AUC of \textbf{0.8228}, outperforming both Graph VAE (AUC 0.500) and Causal VAE (AUC 0.7289). The results highlight the transferability of our representations. 

\begin{table}[t]
\caption{Test AUC on the CelebA dataset }
\label{tab:celeba}
\begin{center}
\resizebox{0.5\textwidth}{!}{
\begin{tabular}{lccccccc}
\toprule
& Ours & Graph VAE & Causal VAE\\
\midrule
CelebA & \textbf{0.8228} & 0.500 & 0.7289\\
\bottomrule
\end{tabular}
}
\end{center}
\end{table}
}

\label{app:MNIST}
\begin{figure}[ht]
    \centering
    \includegraphics[ height=0.15\textheight]{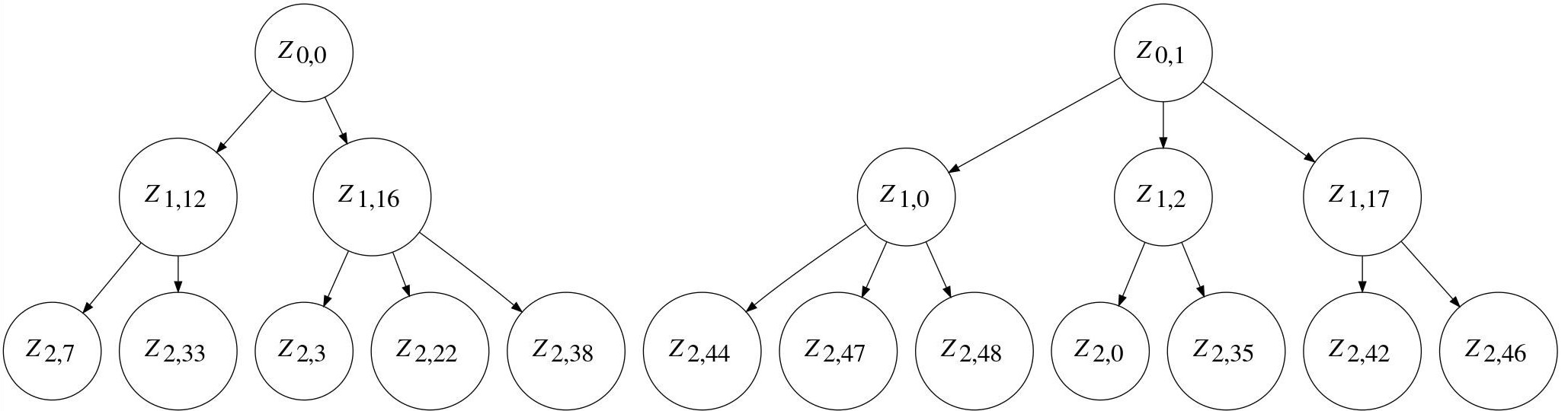}\\
    \includegraphics[height=0.15\textheight]{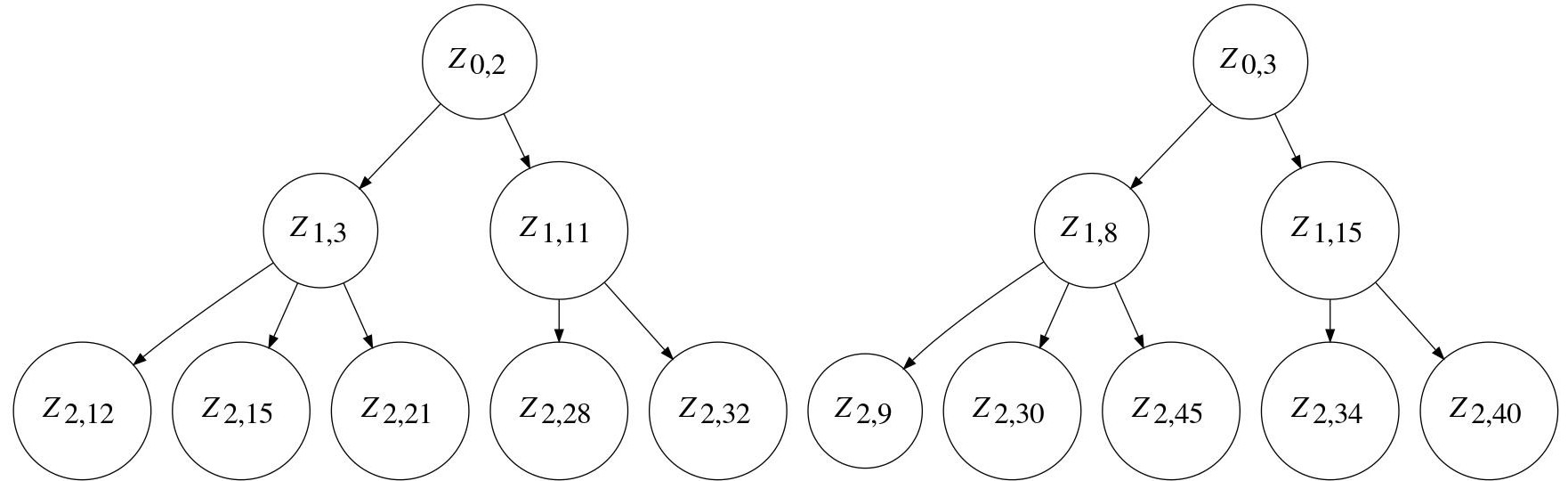}\\
    \includegraphics[ height=0.15\textheight]{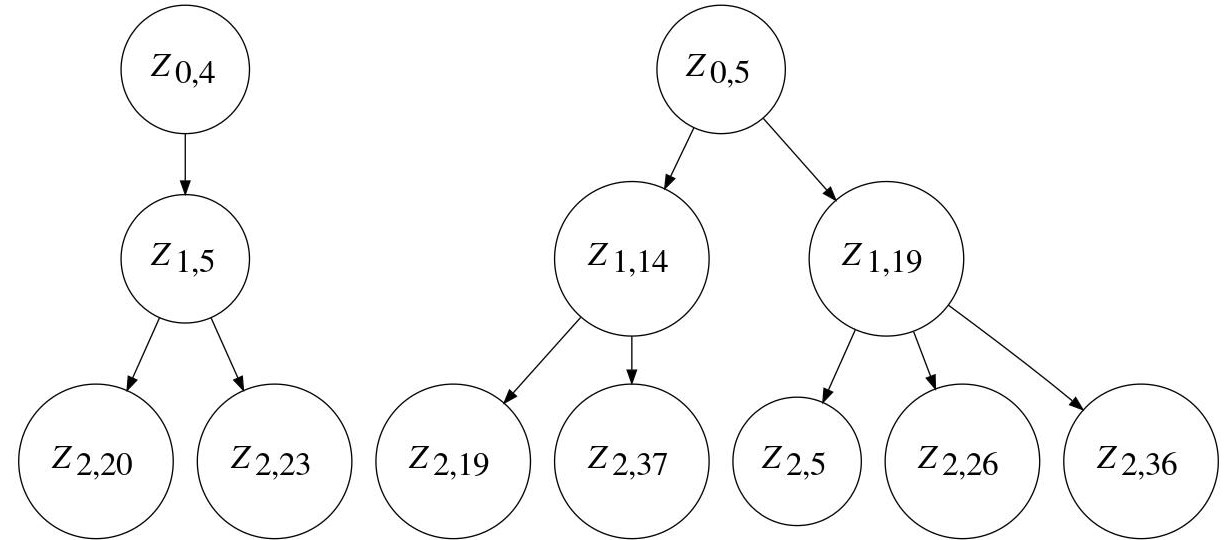}\\
    \includegraphics[height=0.15\textheight]{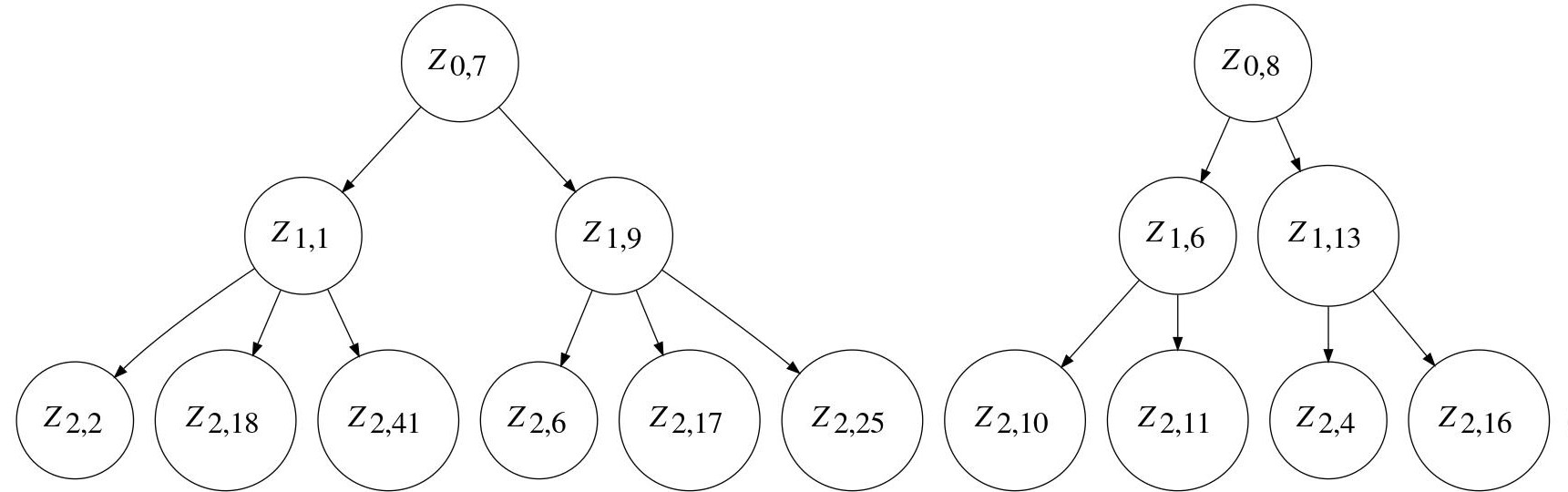}
    \caption{Latent causal graph for the MNIST Dataset. \(z_{i,j}\) denotes the \(j^{\text{th}}\) latent variable in \(\sZ^i\).}
    \label{fig:whole_mnist_graph}
\end{figure}

\begin{table}[t]
\caption{Visualization of MNIST layer dimensions. For each feature, the first image shows the output when the top latent variable is set 5 standard deviations below the mean; the second image shows the result when, in addition, the target feature is intervened upon and set to the mean; the third image displays the output when the top latent variable is set 5 standard deviations above the mean; and the fourth image shows the result when the target feature is intervened upon and set to the mean under this condition.
}
\centering
\resizebox{\textwidth}{!}{
\begin{tabular}{cc|cc|cc}
\toprule
\textbf{Dim} & \textbf{Image} & \textbf{Dim} & \textbf{Image} & \textbf{Dim} & \textbf{Image} \\
\midrule
$z^0_0$ & \includegraphics[width=0.15\textwidth]{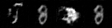} &
$z^0_1$ & \includegraphics[width=0.15\textwidth]{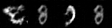} &
$z^0_2$ & \includegraphics[width=0.15\textwidth]{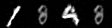} \\
$z^0_3$ & \includegraphics[width=0.15\textwidth]{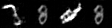} &
$z^0_4$ & \includegraphics[width=0.15\textwidth]{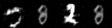} &
$z^0_5$ & \includegraphics[width=0.15\textwidth]{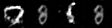} \\
$z^0_7$ & \includegraphics[width=0.15\textwidth]{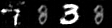} &
$z^0_8$ & \includegraphics[width=0.15\textwidth]{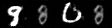} &
$z^1_0$ & \includegraphics[width=0.15\textwidth]{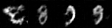} \\
$z^1_1$ & \includegraphics[width=0.15\textwidth]{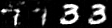} &
$z^1_2$ & \includegraphics[width=0.15\textwidth]{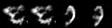} &
$z^1_3$ & \includegraphics[width=0.15\textwidth]{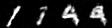} \\
$z^1_5$ & \includegraphics[width=0.15\textwidth]{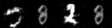} &
$z^1_6$ & \includegraphics[width=0.15\textwidth]{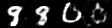} &
$z^1_8$ & \includegraphics[width=0.15\textwidth]{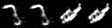} \\
$z^1_9$ & \includegraphics[width=0.15\textwidth]{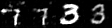} &
$z^1_{11}$ & \includegraphics[width=0.15\textwidth]{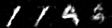} &
$z^1_{12}$ & \includegraphics[width=0.15\textwidth]{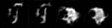} \\
$z^1_{13}$ & \includegraphics[width=0.15\textwidth]{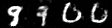} &
$z^1_{14}$ & \includegraphics[width=0.15\textwidth]{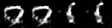} &
$z^1_{15}$ & \includegraphics[width=0.15\textwidth]{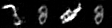} \\
$z^1_{16}$ & \includegraphics[width=0.15\textwidth]{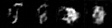} &
$z^1_{17}$ & \includegraphics[width=0.15\textwidth]{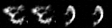} &
$z^1_{19}$ & \includegraphics[width=0.15\textwidth]{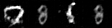} \\
$z^2_0$ & \includegraphics[width=0.15\textwidth]{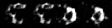} &
$z^2_2$ & \includegraphics[width=0.15\textwidth]{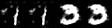} &
$z^2_3$ & \includegraphics[width=0.15\textwidth]{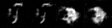} \\
$z^2_4$ & \includegraphics[width=0.15\textwidth]{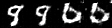} &
$z^2_5$ & \includegraphics[width=0.15\textwidth]{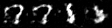} &
$z^2_6$ & \includegraphics[width=0.15\textwidth]{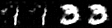} \\
$z^2_7$ & \includegraphics[width=0.15\textwidth]{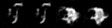} &
$z^2_8$ & \includegraphics[width=0.15\textwidth]{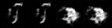} &
$z^2_9$ & \includegraphics[width=0.15\textwidth]{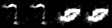} \\
$z^2_{10}$ & \includegraphics[width=0.15\textwidth]{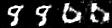} &
$z^2_{11}$ & \includegraphics[width=0.15\textwidth]{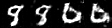} &
$z^2_{12}$ & \includegraphics[width=0.15\textwidth]{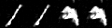} \\
$z^2_{15}$ & \includegraphics[width=0.15\textwidth]{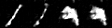} &
$z^2_{16}$ & \includegraphics[width=0.15\textwidth]{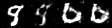} &
$z^2_{17}$ & \includegraphics[width=0.15\textwidth]{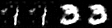} \\
$z^2_{18}$ & \includegraphics[width=0.15\textwidth]{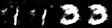} &
$z^2_{19}$ & \includegraphics[width=0.15\textwidth]{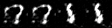} &
$z^2_{20}$ & \includegraphics[width=0.15\textwidth]{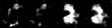} \\
$z^2_{21}$ & \includegraphics[width=0.15\textwidth]{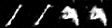} &
$z^2_{22}$ & \includegraphics[width=0.15\textwidth]{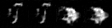} &
$z^2_{23}$ & \includegraphics[width=0.15\textwidth]{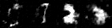} \\
$z^2_{24}$ & \includegraphics[width=0.15\textwidth]{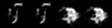} &
$z^2_{25}$ & \includegraphics[width=0.15\textwidth]{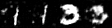} &
$z^2_{26}$ & \includegraphics[width=0.15\textwidth]{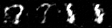} \\
$z^2_{28}$ & \includegraphics[width=0.15\textwidth]{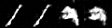} &
$z^2_{30}$ & \includegraphics[width=0.15\textwidth]{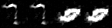} &
$z^2_{32}$ & \includegraphics[width=0.15\textwidth]{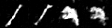} \\
$z^2_{33}$ & \includegraphics[width=0.15\textwidth]{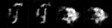} &
$z^2_{34}$ & \includegraphics[width=0.15\textwidth]{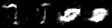} &
$z^2_{35}$ & \includegraphics[width=0.15\textwidth]{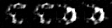} \\
$z^2_{36}$ & \includegraphics[width=0.15\textwidth]{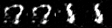} &
$z^2_{37}$ & \includegraphics[width=0.15\textwidth]{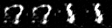} &
$z^2_{38}$ & \includegraphics[width=0.15\textwidth]{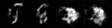} \\
$z^2_{40}$ & \includegraphics[width=0.15\textwidth]{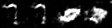} &
$z^2_{41}$ & \includegraphics[width=0.15\textwidth]{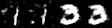} &
$z^2_{42}$ & \includegraphics[width=0.15\textwidth]{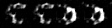} \\
$z^2_{44}$ & \includegraphics[width=0.15\textwidth]{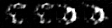} &
$z^2_{45}$ & \includegraphics[width=0.15\textwidth]{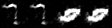} &
$z^2_{46}$ & \includegraphics[width=0.15\textwidth]{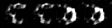}\\
$z^2_{47}$ & \includegraphics[width=0.15\textwidth]{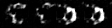} &
$z^2_{48}$ & \includegraphics[width=0.15\textwidth]{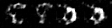} \\
\bottomrule
\end{tabular}
}
\label{tab:mnist_layers}
\end{table}

\section{Discussion on Condition~\ref{condition:structural conditions}}
\label{app:discussion}

\begin{figure}
\centering
\begin{subfigure}{.5\textwidth}
  \centering
  \begin{tikzpicture}[
      node distance=1.5cm,
      latent/.style={circle, draw, fill=gray!20},
      observed/.style={circle, draw}
  ]
  \node[latent] (z1) {$z_1$};
  \node[latent] (z2) [right of=z1] {$z_2$};
  \node[latent] (z3) [right of=z2] {$z_3$};
  \node[observed] (x1) [below left of=z2] {$x_1$};
  \node[observed] (x2) [below right of=z2] {$x_2$};
  \draw[->] (z1) -- (x1);
  \draw[->] (z1) -- (x2);
  \draw[->] (z2) -- (x2);
  \draw[->] (z3) -- (x1);
  \draw[->] (z3) -- (x2);
  \end{tikzpicture}
  \caption{Causal structure which violates Condition~\ref{condition:structural conditions}(i)}
  \label{fig:nopurechildren}
\end{subfigure}
\begin{subfigure}{.5\textwidth}
  \centering
  \begin{tikzpicture}[
      node distance=1.5cm,
      latent/.style={circle, draw, fill=gray!20},
      observed/.style={circle, draw}
  ]
  \node[latent] (z1) {$z_1$};
  \node[latent] (z2) [below left of=z1] {$z_2$};
  \node[latent] (z3) [below right of=z1] {$z_3$};
  \node[latent] (z4) [below left of=z2] {$z_4$};
  \node[observed] (x1) [below left of=z4] {$x_1$};
  \node[observed] (x2) [below right of=z4] {$x_2$};
    \node[observed] (x3) [below right of=z2] {$x_3$};
  \node[observed] (x5) [below right of=z3] {$x_5$};
    \node[observed] (x4) [below of=z3] {$x_4$};
  \draw[->] (z1) -- (z2);
  \draw[->] (z1) -- (z3);
  \draw[->] (z2) -- (z4);
  \draw[->] (z3) -- (x4);
  \draw[->] (z3) -- (x5);
  \draw[->] (z4) -- (x1);
  \draw[->] (z4) -- (x2);
  \draw[->] (z2) -- (x3);
  \end{tikzpicture}
  \caption{Causal structure which violates Condition~\ref{condition:structural conditions}(ii)}
  \label{fig:different levels}
\end{subfigure}
\caption{Two examples of causal structures which violate Condition~\ref{condition:structural conditions}}
\label{fig:non-identifiable}
\end{figure}
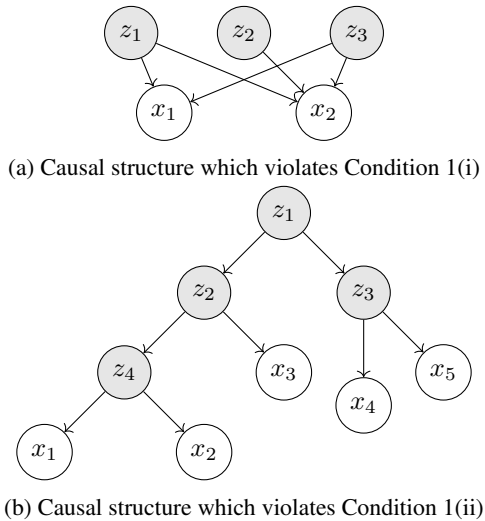

In Figure~\ref{fig:non-identifiable}, we see two examples of causal graphs which violate Condition~\ref{condition:structural conditions}. Figure~\ref{fig:nopurechildren} violates the pure children condition since each latent does not have two pure children. Figure~\ref{fig:different levels} violates the Condition~\ref{condition:structural conditions}(ii) since \(d(z_1,x_4) = 2 \neq 1 = d(z_1,x_3)\). While Condition~\ref{condition:structural conditions}(ii) may not hold in all cases, it is a reasonable assumption to make for image data. Several prior works \citet{vahdat2020nvae,kong2024learning} have effectively modeled images using a multi-level latent hierarchical structure.

\end{document}